\newtheorem{theorem}{Theorem}
\newtheorem{lemma}[theorem]{Lemma}
\newtheorem{corollary}[theorem]{Corollary}
\newtheorem{assump}{Assumption}
\newtheorem{remark}{Remark}
\newcommand\numberthis{\addtocounter{equation}{1}\tag{\theequation}}
\def\A{{\bf A}}
\def\a{{\bf a}}
\def\B{{\bf B}}
\def\C{{\bf C}}
\def\c{{\bf c}}
\def\g{{\bf g}}
\def\H{{\bf H}}
\def\I{{\bf I}}
\def\PP{{\bf P}}
\def\Q{{\bf Q}}
\def\p{{\bf p}}
\def\S{{\bf S}}
\def\s{{\bf s}}
\def\U{{\bf U}}
\def\u{{\bf u}}
\def\V{{\bf V}}
\def\v{{\bf v}}
\def\w{{\bf w}}
\def\x{{\bf x}}
\def\Z{{\bf Z}}
\def\z{{\bf z}}
\def\0{{\bf 0}}
\def\1{{\bf 1}}
\def\prox{\text{prox}}
\def\ed{{d_{\text{eff}}^{\gamma}}}
\def\OM{{\mathcal O}}
\def\SM{{\mathcal S}}
\def\RB{{\mathbb R}}
\def\PB{{\mathbb P}}
\def\TH{\widetilde{\bf H}}
\def\TA{\widetilde{\bf A}}
\def\Tp{\widetilde{\bf p}}
\def\Si{\mbox{\boldmath$\Sigma$\unboldmath}}
\def\Gam{\mbox{\boldmath$\Gamma$\unboldmath}}
\def\De{\mbox{\boldmath$\Delta$\unboldmath}}
\def\Ome{\mbox{\boldmath$\Omega$\unboldmath}}
\def\argmin{\mathop{\rm argmin}}
\def\nnz{\mathsf{nnz}}
\title{Do Subsampled Newton Methods Work for High-Dimensional Data?}
\author{
Xiang Li$^1$ \and Shusen Wang$^2$ \and Zhihua Zhang$^1$
\affiliations
$^1$School of Mathematical Sciences,
Peking University, China
\\
$^2$Department of Computer Science,
Stevens Institute of Technology, USA \\ 
{smslixiang@pku.edu.cn}, \quad
{shusen.wang@stevens.edu}, \quad
{zhzhang@math.pku.edu.cn}
}
\begin{document}
\maketitle

\begin{abstract}
Subsampled Newton methods approximate Hessian matrices through subsampling techniques, alleviating the cost of forming Hessian matrices but using sufficient curvature information. However, previous results require $\Omega (d)$ samples to approximate Hessians, where $d$ is the dimension of data points, making it less practically feasible for high-dimensional data. The situation is deteriorated when $d$ is comparably as large as the number of data points $n$, which requires to take the whole dataset into account, making subsampling useless. This paper theoretically justifies the effectiveness of subsampled Newton methods on convex empirical risk minimization with high dimensional data.  Specifically, we provably need only $\widetilde{\Theta}(\ed)$ samples the approximation of Hessian matrices, where $\ed$ is the $\gamma$-ridge leverage and can be much smaller than $d$ as long as $n\gamma \gg 1$. Additionally, we extend this result so that subsampled Newton methods can work for high-dimensional data on both distributed optimization problems and non-smooth regularized problems.
\end{abstract}

\section{Introduction}


Let $\x_1, ..., \x_n \in \RB^d$ be the feature vectors, 
$l_i(\cdot)$ is a convex, smooth, and twice differentiable loss function; the response $y_i$ is captured by $l_i$.
In this paper, we study the following optimization problem:
\begin{small}
\begin{equation}
\label{problem}
\min_{\w \in \RB^d} G(\w) 
\: := \: 
\frac{1}{n} \sum_{j=1}^n l_j (\x_j^T \w ) + \frac{\gamma }{2} \| \w\|_2^2 + r (\w )
\end{equation}
\end{small}%
where $r(\cdot)$ is a non-smooth convex function. We first consider the simple case where $r$ is zero, i.e.,
\begin{small}
\begin{equation}
\label{problem:simple}
\min_{\w \in \RB^d} F(\w) 
\: := \: 
\frac{1}{n} \sum_{j=1}^n l_j (\x_j^T \w ) + \frac{\gamma }{2} \| \w\|_2^2.
\end{equation}
\end{small}%
Such a convex optimization problem~\eqref{problem:simple} arises frequently in machining learning~\cite{shalev2014understanding}. For example, in logistic regression, $l_j(\x_j^{T}\w)=\log(1+\exp(-y_j \x_j^{T}\w))$, 
and in linear regression, $l_j(\x_j^{T}\w)= \frac{1}{2} ( \x_j^{T}\w - y_j )^2$. Then we consider the more general case where $r$ is non-zero, e.g., LASSO~\cite{tibshirani1996regression} and elastic net~\cite{zou2005regularization}.

To solve~\eqref{problem:simple}, many first order methods have been proposed.
First-order methods solely exploit information in the objective function and its gradient.
Accelerated gradient descent~\cite{golub2012matrix,nesterov2013introductory,bubeck2014theory}, 
stochastic gradient descent \cite{robbins1985stochastic}, and their variants \cite{lin2015universal,johnson2013accelerating,schmidt2017minimizing} 
are the most popular approaches in practice due to their simplicity and low per-iteration time complexity. As pointed out by \cite{xu2017second}, the downsides of first-order methods are the slow convergence to high-precision and the sensitivity to condition number and hyper-parameters.

Second-order methods use not only the gradient but also information in the Hessian matrix in their update. In particular, the Newton's method, a canonical second-order method, has the following update rule:
\begin{small}
\begin{equation}
\label{eq:newtonup}
\w_{t+1} = \w_{t} - \alpha_t \H_t^{-1}\g_t , 
\end{equation}
\end{small}%
where the gradient $\g_t = \nabla F (\w_t )$ is the first derivative of the objective function at $\w_t$,
the Hessian $\H_t =\nabla^2 F (\w_t ) $ is the second derivative at $\w_t$,
and $\alpha_t$ is the step size and can be safely set as one under certain conditions.
Compared to the first-order methods, Newton’s method requires fewer iterations and are more robust to the hyper-parameter setting, and guaranteed super-linear local convergence to high-precision.
However, Newton's method is slow in practice, as in each iteration many Hessian-vector products are required to solve the inverse problem $\H_t \p = \g_t$. Quasi-Newton methods use information from the history of updates to construct Hessian \cite{dennis1977quasi}. Well-known works include Broyden-Fletcher-Goldfarb-Shanno (BFGS) \cite{wright1999numerical} and its limited memory version (L-BFGS) \cite{liu1989limited}, but their convergence rates are not comparable to Newton's method.

Recent works proposed the Sub-Sampled Newton (SSN) methods to reduce the per-iteration complexity of the Newton's method \cite{byrd2011use,pilanci2015iterative,roosta2016sub,pilanci2017newton,xu2017second,berahas2017investigation,ye2017approximate}.
For the particular problem~\eqref{problem:simple}, the Hessian matrix can be written in the form
\begin{small}
	\begin{equation}
	\label{eq:pH}
	\H_t \: = \: \frac{1}{n} \A_t^T \A_t + \gamma \I_d ,
	\end{equation}
\end{small}%
for some $n\times d$ matrix $\A_t$ whose $i$-th row is a scaling of $\x_i$.
The basic idea of SSN is to sample and scale $s$ ($s \ll n$) rows of $\A$ to form $\TA_t \in \RB^{s\times d}$ and approximate $\H_t$ by 
\begin{small}
	\begin{equation*}
	\widetilde{\H}_t \: = \: \frac{1}{s} \TA_t^T \TA_t + \gamma \I_d ,
	\end{equation*}
\end{small}%
The quality of Hessian approximation is guaranteed by random matrix theories \cite{tropp2015introduction,woodruff2014sketching},
based on which the convergence rate of SSN is established.

As the second-order methods perform heavy computation in each iteration and converge in a small number of iterations, they have been adapted to solve distributed machine learning aiming at reducing the communication cost \cite{shamir2014communication,mahajan2015efficient,zhang2015disco,reddi2016aide,wang2018giant}.
In particular, the Globally Improved Approximate NewTon Method (GIANT) method is based on the same idea as SSN and has fast convergence rate.

As well as Newton's method, SSN is not directly applicable for~\eqref{problem} because the objective function is non-smooth. Following the proximal-Newton method~\cite{lee2014proximal}, 
SSN has been adapted to solve convex optimization with non-smooth regularization~\cite{liu2017inexact}. SSN has also been applied to optimize nonconvex problem~\cite{xu2017second,tripuraneni2017stochastic}.

\subsection{Our contributions}

Recall that $n$ is the total number of samples, $d$ is the number of features, and $s$ is the size of the randomly sampled subset. (Obviously $s \ll n$, otherwise the subsampling does not speed up computation.)
The existing theories of SSN require $s$ to be at least $\Omega (d )$.
For the big-data setting, i.e., $d \ll n$, the existing theories nicely guarantee the convergence of SSN.

However, high-dimensional data is not uncommon at all in machine learning;
$d$ can be comparable to or even greater than $n$.
Thus requiring both $s \ll n$ and $s=\Omega (d )$ seriously limits the application of SSN.
We considers the question:
\begin{center}
\textit{Do SSN and its variants work for~\eqref{problem} when $s < d$?}
\end{center}
The empirical studies in \cite{xu2016sub,xu2017second,wang2018giant} indicate that yes, SSN and its extensions have fast convergence even if $s$ is substantially smaller than $d$. However, their empirical observations are not supported by theory.

This work bridges the gap between theory and practice for convex empirical risk minimization. We show it suffices to use $s = \tilde{\Theta} ( \ed )$ uniformly sampled subset to approximate the Hessian, where $\gamma$ is the regularization parameter, $\ed$ ($\leq d$) is the $\gamma$-effective-dimension of the $d\times d$ Hessian matrix, and $\tilde{\Theta}$ hides the constant and logarithmic factors.
If $n\gamma$ is larger than most of the $d$ eigenvalues of the Hessian, then $\ed$ is tremendously smaller than $d$ \cite{cohen2015ridge}.
Our theory is applicable to three SSN methods.
\begin{itemize}
	\item 
	In Section~\ref{sec:ssn}, we study the convex and smooth problem~\eqref{problem:simple}.
	we show the convergence of the standard SSN with the effective-dimension dependence and improves \cite{xu2016sub}.
	\item
	In Section~\ref{sec:dist}, for the same optimization problem~\eqref{problem:simple}, we extend the result to the distributed computing setting and improves the bound of GIANT \cite{wang2018giant}.
	\item
	In Section~\ref{sec:sspn}, we study a convex but nonsmooth problem~\eqref{problem} and analyze the combination of SSN and proximal-Newton.
\end{itemize}
In Section~\ref{sec:inexact}, we analyse SSN methods when the subproblems are inexactly solved. The proofs of the main theorems are in the appendix.

\section{Notation and Preliminary}\label{sec:notation}

\paragraph{Basic matrix notation.}
Let $\I_n$ be the $n \times n$ indentity matrix.
Let $\| \a \|_2 $ denote the vector $\ell_2$ norm and $\|\A \|_2$ denote the matrix spectral norm.
Let 
\begin{small}
	\begin{equation} \label{eq:svd}
	\A =\U \Si \V^{T}=\sum_{i=1}^d \sigma_i\u_i\v_i^{T}
	\end{equation}
\end{small}%
be its singular value decomposition (SVD), with $\sigma_{\max}(\A ) $ its largest singular value and $\sigma_{\min}(\A ) $ the smallest (the $d$-th largest). 
The moore-Penrose inverse of $\A$ is defined by $\A^{\dagger} = \V \Si^{-1}\U^{T}$.
If a symmetric real matrix has no negative eigenvalues, it is called symmetric positive semidefinite (SPSD).
We denote $\A \preceq \B$ if $\B - \A$ is SPSD. For the SPD matrice $\H$, we define a norm by $\|\x\|_{\H} = \sqrt{\x^{T}\H \x}$ and its conditional number by $\kappa(\H)= \frac{\sigma_{\max}(\H)}{\sigma_{\min}(\H)}$.

\paragraph{Ridge leverage scores.}
For $\A=[\a_1^{T};\cdots;\a_n^{T}] \in \RB^{n \times d}$, its row $\gamma$-ridge leverage score ($\gamma \ge 0$) is defined by
\begin{small}
	\begin{equation}
	\label{eq:rls}
	l_j^{\gamma} = \a_j^{T}(\A^{T}\A + n\gamma \I_d)^{\dagger}\a_j 
	=\sum_{k=1}^d \frac{\sigma_k^2}{\sigma_k^2 + n\gamma}u_{jk}^2,
	\end{equation}
\end{small}%
for $j \in [n] \triangleq \{1,2,...,n\}$.
Here $\sigma_k$ and $\u_k$ are defined in \eqref{eq:svd}.
For $\gamma=0$, $ l_j^{\gamma}$ is the standard leverage score used by~\cite{drineas2008cur,mahoney2011ramdomized}.

\paragraph{Effective dimension.}
The $\gamma$-effective dimension of $\A \in \RB^{n \times d}$  is defined by
\begin{small}
	\begin{equation}
	\label{eq:gamma-rls}
	\ed(\A ) 
	\: = \: \sum_{j=1}^n  l_j^{\gamma}  
	\: = \: \sum_{k=1}^d \frac{\sigma_k^2}{\sigma_k^2 + n\gamma}
	\: \leq \: d .
	\end{equation}
\end{small}%
If $n\gamma$ is larger than most of the singular values of $\A^T \A$, then $\ed(\A ) $ is tremendously smaller than $d$~\cite{alaoui2015fast,cohen2017input}.
In fact, to trade-off the bias and variance, the optimal setting of $\gamma$ makes $n\gamma$ comparable to the top singular values of $\A^T \A$~\cite{hsu2014random,wang2017sketched}, and thus $\ed(\A) $  is small in practice.

\paragraph{Ridge coherence.}
The row $\gamma$-ridge coherence of $\A \in \RB^{n \times d}$ is 
\begin{small}
\begin{equation}
\label{eq:row coherence}
\mu^{\gamma} \: = \: \frac{n}{\ed} \, \max_{i \in [n]} l_i^{\gamma},
\end{equation}
\end{small}%
which measures the extent to which the information in the rows concentrates. 
If $\A$ has most of its mass in a relatively small number of rows, its $\gamma$-ridge coherence could be high. 
This concept is necessary for matrix approximation via uniform sampling.
It could be imagined that if most information is in a few rows, which means high coherence, then uniform sampling is likely to miss some of the important rows, leading to low approximation quality.
When $\gamma=0$, it coincides with the standard row coherence 
\begin{small}
	\begin{equation*}
	\mu^0 
	\: = \: \frac{n}{d} \, \max_{j \in [n]}l_j^0
	\: = \: \frac{n}{d} \, \max_{j \in [n]}\a_j^{T}(\A^{T}\A)^{\dagger}\a_j
	\end{equation*}
\end{small}%
which is widely used to analyze techniques such as compressed
sensing~\cite{candes2006stable}, matrix completion~\cite{candes2009exact}, robust PCA~\cite{candes2011robust} and so on.

\paragraph{Gradient and Hessian.}
For the optimization problem \eqref{problem:simple}, the gradient of $F (\cdot )$ at $\w_t$ is 
\begin{small}
	\begin{equation*}
	\g_t 
	\: = \: \frac{1}{n} \sum_{j=1}^{n} l_j' (\x_j^T \w_t) \cdot \x_j \, + \, \gamma \w_t 
	\: \in \: \RB^d .
	\end{equation*}
\end{small}%
The Hessian matrix at $\w_t$ is 
\begin{small}
	\begin{equation*}
	\H_t 
	\: = \: \frac{1}{n} \sum_{j=1}^{n} l_j'' (\x_j^T \w_t) \cdot \x_j \x_j^T \, + \, \gamma \I_d
	\: \in \: \RB^{d\times d} .
	\end{equation*}
\end{small}%
Let $\a_j = \sqrt{l_j^{''}(\x_i^{T}\w_t)} \cdot \x_j \in \RB^d$ and 
\begin{small}
	\begin{equation}
	\label{eq:A}
	\A_t 
	\: = \: [\a_1, \cdots , \a_n]^{T}
	\: \in \: \RB^{n \times d}.
	\end{equation}
\end{small}%
In this way, the Hessian matrix can be expressed as 
\begin{small}
	\begin{equation}
	\label{eq:truehessian}
	\H_t 
	\: = \: \tfrac{1}{n} \A_t^T \A_t + \gamma \I_d
	\: \in \: \RB^{d\times d} .
	\end{equation}
\end{small}%

\section{Sub-Sampled Newton (SSN)}\label{sec:ssn}

In this section, we provide new and stronger convergence guarantees for the SSN methods.
For SSN with uniform sampling, we require a subsample size of $s = \tilde{\Theta} (\mu^\gamma \ed)$;
For SSN with ridge leverage score sampling,\footnote{We do not describe the ridge leverage score sampling in detail; the readers can refer to~\cite{alaoui2015fast,cohen2015ridge}.}
a smaller sample size, $s = \tilde{\Theta} (\ed)$, suffices.
Because $\ed$ is typically much smaller than $d$, our new results guarantee convergence when $s < d$.

\subsection{Algorithm description}

We set an interger $s$ ($\ll n$) and uniformly sample $s$ items out of $[n]$ to form the subset $\SM$.
In the $t$-th iteration, we form the matrix $\tilde{\A}_t \in \RB^{s\times d}$ which contains the rows of $\A_t \in \RB^{n\times d}$ indexed by $\SM$ and the full gradient $\g_t$.
Then, the approximately Newton direction $\tilde{\p}_t$ is computed by solving the linear system
\begin{small}
	\begin{equation} \label{eq:ssn_linear_system}
	\big( \tfrac{1}{s} \TA_t \TA_t^T + \gamma \I_d \big) \, \p \: = \: \g_t
	\end{equation}
\end{small}%
by either matrix inversion or the conjugate gradient.
Finally, $\w$ is updated by 
\begin{small}
	\begin{equation*}
	\w_{t+1} \: = \: \w_t - \alpha_t \tilde{\p}_t ,
	\end{equation*}
\end{small}%
where $\alpha_t$ can be set to one or found by line search.
In the rest of this section, we only consider $\alpha_t = 1$.

Most of the computation is performed in solving \eqref{eq:ssn_linear_system}.
The only difference between the standard Newton and the SSN methods is replacing $\A_t \in \RB^{n\times d}$ by $\tilde{\A}_t \in \RB^{s\times d}$.
Compared to Newton's method, SSN leads to an almost $\frac{n}{s}$-factor speed up of the per-iteration computation;
however, SSN requires more iterations to converge.
Nevertheless, to reach a fixed precision, the overall cost of SSN is much lower than Newton's method.

\subsection{Our improved convergence bounds}

\paragraph{Improved bound for quadratic loss.}
We let $\w^\star$ be the unique (due to the strong convexity) optimal solution to the problem~\ref{problem},
$\w_t$ be the intermediate output of the $t$-th iteration,
and $\De_t = \w_t - \w^\star$.
If the loss function of \eqref{problem} is quadratic, e.g., $l_j (\x_j^T \w) = \frac{1}{2} (\x_j^T \w - y_j)^2$, the Hessian matrix $\H_t = \frac{1}{n} \A_t^T \A_t + \gamma \I_d$ does not change with the iteration, so we use $\H$ and $\A$ instead.
Theorem~\ref{thm:ssn_quad} guarantees the global convergence of SSN.

\begin{theorem}[Global Convergence]
	\label{thm:ssn_quad}
	Let $\ed$ and $ \mu^{\gamma}$ respectively be the $\gamma$-ridge leverage score and $\gamma$-coherence of $\A$, and $\kappa$ be the condition number of $\H$.
	Let $\varepsilon \in (0, \frac{1}{4})$ and $\delta \in (0, 1)$ be any user-specified constants.
	Assume the loss function of \eqref{problem} is quadratic.
	For a sufficiently large sub-sample size:
	\begin{small}
		\begin{equation*}
		s \: = \: \Theta \Big(  \tfrac{\mu^{\gamma}\ed}{\varepsilon^2} \log \tfrac{\ed}{\delta} \Big)  ,
		\end{equation*}
	\end{small}%
	with probability at least $ 1-\delta$,
	\begin{small}
		\begin{equation} \label{eq:ssn_global}
		\big\| \De_t \big\|_2
		\: \leq \: \epsilon^t \sqrt{\kappa}  \, \big\| \De_0 \big\|_2 .
		\end{equation}
	\end{small}
\end{theorem}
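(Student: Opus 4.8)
The plan is to exploit that for a quadratic loss the Hessian $\H$ is constant and the gradient is exactly affine in the error. Writing $F(\w) = \frac{1}{2}\w^T\H\w - \bb^T\w + \const$, optimality of $\w^\star$ gives $\H\w^\star = \bb$, so $\g_t = \H\w_t - \bb = \H\De_t$. Substituting this into the SSN update $\w_{t+1} = \w_t - \TH^{-1}\g_t$ (with $\alpha_t = 1$) and subtracting $\w^\star$ yields the clean error recursion
\[
\De_{t+1} = (\I - \TH^{-1}\H)\De_t = \TH^{-1}(\TH - \H)\De_t .
\]
Everything then reduces to controlling how close $\TH = \frac{1}{s}\TA^T\TA + \gamma\I_d$ is to $\H = \frac{1}{n}\A^T\A + \gamma\I_d$ in the Loewner sense.

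The technical heart, and the step I expect to be the main obstacle, is a spectral approximation guarantee: for a uniformly sampled subset of size $s = \Theta(\mu^{\gamma}\ed\,\epsilon^{-2}\log(\ed/\delta))$, with probability at least $1-\delta$, $(1-\epsilon)\H \preceq \TH \preceq (1+\epsilon)\H$. I would prove this by preconditioning and a matrix concentration inequality. Uniform sampling makes $\TH$ an unbiased estimator of $\H$, i.e.\ $\EB[\frac{1}{s}\TA^T\TA] = \frac{1}{n}\A^T\A$, so setting $\Y_k = \H^{-1/2}\a_{i_k}\a_{i_k}^T\H^{-1/2}$ for the $k$-th sampled row gives $\H^{-1/2}\TH\H^{-1/2} - \I = \frac{1}{s}\sum_{k=1}^s \Y_k - \EB[\Y_1]$, a centered sum of i.i.d.\ rank-one SPSD matrices. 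The per-term norm is $\|\Y_k\|_2 = \a_{i_k}^T\H^{-1}\a_{i_k} = n\, l_{i_k}^{\gamma} \leq \mu^{\gamma}\ed$, which is exactly where the $\gamma$-ridge coherence enters and governs the sample complexity. For the variance, $\Y_k^2 \preceq \mu^{\gamma}\ed\,\Y_k$ gives a matrix variance proxy of order $\mu^{\gamma}\ed$, while the intrinsic dimension of $\EB[\Y_1] = \H^{-1/2}\frac{1}{n}\A^T\A\,\H^{-1/2}$ equals its trace, namely $\tr(\EB[\Y_1]) = \ed$. Feeding these three quantities into the intrinsic-dimension matrix Bernstein inequality produces precisely $s = \Theta(\mu^{\gamma}\ed\,\epsilon^{-2}\log(\ed/\delta))$.

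Given the spectral approximation, the rest is routine. From $(1-\epsilon)\H \preceq \TH \preceq (1+\epsilon)\H$ I extract both $\|\H^{-1/2}(\TH-\H)\H^{-1/2}\|_2 \leq \epsilon$ and $\|\H^{1/2}\TH^{-1}\H^{1/2}\|_2 \leq (1-\epsilon)^{-1}$. Measuring the recursion in the $\H$-norm and inserting $\H^{1/2}\H^{-1/2}$ between the factors,
\[
\|\De_{t+1}\|_{\H} = \big\|\H^{1/2}\TH^{-1}(\TH-\H)\De_t\big\|_2 \leq \|\H^{1/2}\TH^{-1}\H^{1/2}\|_2 \,\|\H^{-1/2}(\TH-\H)\H^{-1/2}\|_2 \,\|\De_t\|_{\H} \leq \tfrac{\epsilon}{1-\epsilon}\|\De_t\|_{\H}.
\]
Choosing the constant in $s$ so that the spectral parameter $\epsilon_0 = \Theta(\epsilon)$ satisfies $\frac{\epsilon_0}{1-\epsilon_0}\leq\epsilon$ (harmless, since $s$ scales as $\epsilon_0^{-2}$), I obtain $\|\De_t\|_{\H} \leq \epsilon^t\|\De_0\|_{\H}$ by unrolling. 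Finally I convert norms through $\sqrt{\sigma_{\min}(\H)}\,\|\De_t\|_2 \leq \|\De_t\|_{\H}$ and $\|\De_0\|_{\H} \leq \sqrt{\sigma_{\max}(\H)}\,\|\De_0\|_2$, which gives $\|\De_t\|_2 \leq \epsilon^t\sqrt{\kappa}\,\|\De_0\|_2$ and completes the proof.
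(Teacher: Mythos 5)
Your proposal is correct and follows essentially the same route as the paper: a Loewner-order spectral approximation $(1-\varepsilon)\H \preceq \TH \preceq (1+\varepsilon)\H$ under uniform sampling with $s = \Theta(\mu^{\gamma}\ed\,\varepsilon^{-2}\log(\ed/\delta))$, a per-step contraction of $\De_t$ in the $\H$-norm with factor $O(\varepsilon)$, and a final conversion to the $\ell_2$-norm that produces the $\sqrt{\kappa}$ factor. The only differences are presentational: you derive the recursion $\De_{t+1}=\TH^{-1}(\TH-\H)\De_t$ directly (the paper routes the same algebra through an auxiliary quadratic $\phi_t$ and an ``approximate Newton direction/step'' pair of lemmas so that the non-quadratic case is covered by the same framework), and you prove the sampling lemma via intrinsic-dimension matrix Bernstein where the paper simply cites Cohen et al.\ for it.
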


\begin{proof}
	We prove the theorem in Appendix~\ref{ap:snn_global}.
\end{proof}

\paragraph{Improved bound for non-quadratic loss.}
If the loss function of \eqref{problem} is non-quadratic, the Hessian matrix $\H_t$ changes with iteration, and we can only guarantee fast local convergence, as well as the prior works~\cite{roosta2016sub,xu2016sub}.
We make a standard assumption on the Hessian matrix, which is required by all the prior works on Newton-type methods.

\begin{assump}
	\label{am:lipschitz}
	The Hessian matrix $\nabla^2 F(\w)$ is $L$-Lipschitz continuous, i.e., $\| \nabla^2 F(\w) - \nabla^2 F(\w' ) \|_2 \le L \| \w - \w'\|_2$,
	for arbitrary $\w$ and $\w'$.
\end{assump}

\begin{theorem} [Local Convergence]
	\label{thm:ssn_nonquad}
	Let $\ed, \mu^{\gamma}$ respectively be the $\gamma$-ridge leverage score and $\gamma$-coherence of $\A_t$.
	Let $\varepsilon \in (0, \frac{1}{4})$ and $\delta \in (0, 1)$ be any user-specified constants.
	Let Assumption~\ref{am:lipschitz} be satisfied.
	For a sufficiently large sub-sample size:
	\begin{small}
		\begin{equation*}
		s \: = \: \Theta \Big(  \tfrac{\mu^{\gamma}\ed}{\varepsilon^2} \log \tfrac{\ed}{\delta} \Big)  ,
		\end{equation*}
	\end{small}%
	with probability at least $ 1-\delta$,
	\begin{small}
		\begin{equation} \label{eq:ssn_local}
		\big\|\De_{t+1} \big\|_2
		\: \leq \: \varepsilon \, \sqrt{ \kappa_t } \, \big\|\De_{t} \big\|_2
		+  \tfrac{L}{\sigma_{\min}(\H_t)} \, \big\|\De_{t} \big\|_2^2 ,
		\end{equation}
	\end{small}%
	where $\kappa_t = \tfrac{ \sigma_{\max} (\H_t) }{ \sigma_{\min} (\H_t) }$ is the condition number.
\end{theorem}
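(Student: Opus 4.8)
The plan is to decouple the argument into a probabilistic part, which produces a spectral approximation of the Hessian, and a deterministic part, which turns that approximation into the claimed per-iteration contraction. The only random event we need is the relative spectral bound
\begin{equation*}
(1-\varepsilon)\,\H_t \preceq \widetilde{\H}_t \preceq (1+\varepsilon)\,\H_t,
\end{equation*}
equivalently $\|\H_t^{-1/2}(\widetilde{\H}_t-\H_t)\H_t^{-1/2}\|_2 \le \varepsilon$, holding with probability at least $1-\delta$ once $s = \Theta(\mu^{\gamma}\ed\,\varepsilon^{-2}\log(\ed/\delta))$; this is the same concentration statement underlying Theorem~\ref{thm:ssn_quad}, now applied to $\A_t$ at the current iterate. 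I would record this as a lemma and condition on it for the remainder of the proof.

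For the deterministic step, since $F$ is twice differentiable and $\w^\star$ is the minimizer, $\g_t = \nabla F(\w_t) - \nabla F(\w^\star) = \bar{\H}_t\De_t$, where $\bar{\H}_t := \int_0^1 \nabla^2 F(\w^\star + \tau\De_t)\,d\tau$ is the averaged Hessian along the segment. Substituting the SSN update $\w_{t+1} = \w_t - \widetilde{\H}_t^{-1}\g_t$ gives the key identity
\begin{equation*}
\De_{t+1} = \De_t - \widetilde{\H}_t^{-1}\bar{\H}_t\De_t = \widetilde{\H}_t^{-1}\big(\widetilde{\H}_t - \bar{\H}_t\big)\De_t.
\end{equation*}
I would then split $\widetilde{\H}_t - \bar{\H}_t = (\widetilde{\H}_t - \H_t) + (\H_t - \bar{\H}_t)$, isolating a subsampling-error term and a Hessian-variation term that produce the linear and quadratic contributions respectively.

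For the linear term $\widetilde{\H}_t^{-1}(\widetilde{\H}_t-\H_t)\De_t$ the naive Euclidean estimate loses a full factor $\kappa_t$, so the trick is to measure it in the $\H_t$-norm: writing $\widetilde{\H}_t^{-1}(\widetilde{\H}_t-\H_t) = \H_t^{-1/2}\big(\H_t^{1/2}\widetilde{\H}_t^{-1}\H_t^{1/2}\big)\big(\H_t^{-1/2}(\widetilde{\H}_t-\H_t)\H_t^{-1/2}\big)\H_t^{1/2}$ and using $\widetilde{\H}_t^{-1}\preceq(1-\varepsilon)^{-1}\H_t^{-1}$ together with the spectral bound yields $\|\H_t^{1/2}\widetilde{\H}_t^{-1}(\widetilde{\H}_t-\H_t)\De_t\|_2 \le \tfrac{\varepsilon}{1-\varepsilon}\|\H_t^{1/2}\De_t\|_2$. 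Passing from the $\H_t$-norm back to the Euclidean norm costs only $\sqrt{\sigma_{\max}(\H_t)/\sigma_{\min}(\H_t)} = \sqrt{\kappa_t}$, giving the desired $\varepsilon\sqrt{\kappa_t}\|\De_t\|_2$ (the $(1-\varepsilon)^{-1}$ is absorbed into the constant since $\varepsilon<1/4$). For the quadratic term I would invoke Assumption~\ref{am:lipschitz}: because $\|\nabla^2F(\w_t)-\nabla^2F(\w^\star+\tau\De_t)\|_2 \le L(1-\tau)\|\De_t\|_2$, integrating over $\tau$ gives $\|\H_t-\bar{\H}_t\|_2 \le \tfrac{L}{2}\|\De_t\|_2$, and combining with $\|\widetilde{\H}_t^{-1}\|_2 \le [(1-\varepsilon)\sigma_{\min}(\H_t)]^{-1}$ produces the $\tfrac{L}{\sigma_{\min}(\H_t)}\|\De_t\|_2^2$ term. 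Adding the two bounds gives~\eqref{eq:ssn_local}.

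I expect the main obstacle to be the probabilistic lemma rather than the deterministic recursion. The delicate point is obtaining the $\ed$-dependence (instead of $d$) in the sample size: one applies an intrinsic-dimension matrix Bernstein inequality to the sum of the i.i.d.\ rank-one terms $\tfrac{1}{s}\H_t^{-1/2}\a_i\a_i^T\H_t^{-1/2}$, where the per-term operator norm is $\tfrac{1}{s}\a_i^T\H_t^{-1}\a_i = \tfrac{n}{s}\,l_i^{\gamma} \le \tfrac{\mu^\gamma\ed}{s}$ (controlled by the $\gamma$-coherence) and the intrinsic dimension equals $\tr\big((\H_t-\gamma\I)\H_t^{-1}\big)=\ed$. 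Balancing these quantities is exactly what forces $s=\Theta(\mu^\gamma\ed\,\varepsilon^{-2}\log(\ed/\delta))$ and is where the coherence and effective dimension enter the bound; everything else is routine bookkeeping.
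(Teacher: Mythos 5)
Your proposal is correct, and the deterministic half takes a genuinely different route from the paper. The paper reduces the analysis to the auxiliary quadratic $\phi_t(\p)=\p^T\H_t\p-2\p^T\g_t$: it first shows (Lemma~\ref{lem:ssn_approximate_newton_direc}) that the spectral approximation implies $\phi_t(\Tp_t)\le(1-\alpha^2)\min_\p\phi_t(\p)$ with $\alpha=\varepsilon/(1-\varepsilon)$, then invokes an external lemma (Lemma~9 of \cite{wang2018giant}) to obtain the recursion $\De_{t+1}^T\H_t\De_{t+1}\le L\|\De_t\|_2^2\|\De_{t+1}\|_2+\tfrac{\alpha^2}{1-\alpha^2}\De_t^T\H_t\De_t$, and finally solves a one-variable quadratic inequality in $\|\De_{t+1}\|_2$. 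You instead write $\g_t=\bar{\H}_t\De_t$ with the averaged Hessian and decompose $\De_{t+1}=\TH_t^{-1}(\TH_t-\H_t)\De_t+\TH_t^{-1}(\H_t-\bar{\H}_t)\De_t$, bounding the first piece in the $\H_t$-norm (paying $\sqrt{\kappa_t}$ only once on the way back to the Euclidean norm) and the second via the Lipschitz estimate $\|\H_t-\bar{\H}_t\|_2\le\tfrac{L}{2}\|\De_t\|_2$. Your route is self-contained, avoids both the borrowed lemma and the quadratic-inequality step, and is essentially the same mechanism the paper itself uses later for SSPN (Lemma~\ref{lem:sspn_error}); what the paper's $\phi_t$-based framework buys is reusability, since the same two lemmas are recycled verbatim for GIANT by only changing the bound on $\phi_t(\Tp_t)-\phi_t(\p_t^*)$. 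Two minor points: your linear term comes out as $\tfrac{\varepsilon}{1-\varepsilon}\sqrt{\kappa_t}\|\De_t\|_2\le\tfrac{4}{3}\varepsilon\sqrt{\kappa_t}\|\De_t\|_2$ rather than exactly $\varepsilon\sqrt{\kappa_t}\|\De_t\|_2$, but the paper's own derivation ends at $\varepsilon\sqrt{2\kappa_t}\|\De_t\|_2$ and absorbs the constant the same way, so this is not a gap; and for the sampling lemma the paper simply cites \cite{cohen2015ridge}, whereas your intrinsic-dimension matrix Bernstein sketch (per-term norm $n l_i^\gamma/s\le\mu^\gamma\ed/s$, intrinsic dimension $\tr((\H_t-\gamma\I_d)\H_t^{-1})=\ed$) is the standard proof of that cited result and is consistent with it.
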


\begin{proof}
	We prove the theorem in Appendix~\ref{ap:snn_local}.
\end{proof}

\begin{theorem} \label{thm:ridge}
	If ridge leverage score sampling is used instead, the sample complexity in Theorems~\ref{thm:ssn_quad} and \ref{thm:ssn_nonquad} will be improved to 
	\begin{small}
		\begin{equation*}
		s \: = \: \Theta \Big(  \tfrac{\ed}{\varepsilon^2} \log \tfrac{\ed}{\delta} \Big) .
		\end{equation*}
	\end{small}%
\end{theorem}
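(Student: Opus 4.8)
The plan is to isolate the one place where the sampling scheme enters the proofs of Theorems~\ref{thm:ssn_quad} and~\ref{thm:ssn_nonquad}, namely the spectral approximation of the Hessian, and to re-run that single step under the ridge-leverage-score distribution. Both the global and local recursions \eqref{eq:ssn_global} and \eqref{eq:ssn_local} are consequences of a relative-error bound of the form $(1-\varepsilon)\H_t \preceq \widetilde{\H}_t \preceq (1+\varepsilon)\H_t$, equivalently $\|\H_t^{-1/2}(\widetilde{\H}_t - \H_t)\H_t^{-1/2}\|_2 \le \varepsilon$; once $\varepsilon$ is fixed, nothing downstream depends on \emph{how} the rows were drawn. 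Hence it suffices to show that ridge leverage score sampling attains the same $\varepsilon$-approximation using only $s = \Theta(\ed\,\varepsilon^{-2}\log(\ed/\delta))$ samples, and then to quote the already-established recursions verbatim.

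First I would write the subsampled Hessian with importance weights: sampling index $j$ with probability $p_j$ and rescaling the corresponding row of $\A_t$ by $(np_j)^{-1/2}$ keeps $\widetilde{\H}_t$ an unbiased estimator of $\H_t$, since the deterministic $\gamma\I_d$ term is identical on both sides and cancels. Whitening by $\H_t^{-1/2}$, I define the i.i.d.\ matrices $\Y_i = (np_{j_i})^{-1}\H_t^{-1/2}\a_{j_i}\a_{j_i}^T\H_t^{-1/2}$, whose average has mean $\M = \H_t^{-1/2}(\tfrac1n\A_t^T\A_t)\H_t^{-1/2} = \I_d - \gamma\H_t^{-1} \preceq \I_d$. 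The target $\|\tfrac1s\sum_i \Y_i - \M\|_2 \le \varepsilon$ I would then establish via the intrinsic-dimension matrix Bernstein inequality \cite{tropp2015introduction}.

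The two quantities governing that inequality are the per-term norm and the variance, and this is exactly where the coherence factor is removed. From \eqref{eq:rls} and \eqref{eq:truehessian} one has $\|\H_t^{-1/2}\a_j\a_j^T\H_t^{-1/2}\|_2 = \a_j^T\H_t^{-1}\a_j = n\,l_j^\gamma$, hence $\|\Y_i\|_2 = l_{j_i}^\gamma/p_{j_i}$; and using $(\H_t^{-1/2}\a_j\a_j^T\H_t^{-1/2})^2 = (n l_j^\gamma)\,\H_t^{-1/2}\a_j\a_j^T\H_t^{-1/2}$ gives the second-moment identity $\EB[\Y_i^2] = \sum_j \tfrac{l_j^\gamma}{np_j}\,\H_t^{-1/2}\a_j\a_j^T\H_t^{-1/2}$. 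Choosing the ridge leverage distribution $p_j = l_j^\gamma/\ed$ makes $l_j^\gamma/p_j \equiv \ed$, so the per-term norm is bounded by $\ed$ with no dependence on $\max_j l_j^\gamma$, and simultaneously collapses the second moment to $\EB[\Y_i^2] = \ed\,\M \preceq \ed\,\I_d$, a variance proxy of $\ed$. By contrast, uniform sampling ($p_j = 1/n$) forces both quantities up to $n\max_j l_j^\gamma = \mu^\gamma\ed$, which is the origin of the coherence factor in the two earlier theorems. Plugging $R = V = \ed$ into matrix Bernstein, with intrinsic dimension $\tr(\M)/\|\M\|_2 = \Theta(\ed)$ supplying the $\log(\ed/\delta)$ factor, yields the claimed $s = \Theta(\ed\,\varepsilon^{-2}\log(\ed/\delta))$.

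The main obstacle I anticipate is not the concentration bookkeeping but the variance computation: one must verify that the reweighting collapses $\EB[\Y_i^2]$ to \emph{exactly} $\ed\,\M$, since a crude upper bound would reintroduce a $\max_j$ and hence the coherence. A secondary subtlety is invoking the intrinsic-dimension form of matrix Bernstein rather than the ambient-dimension form, so that the logarithmic factor reads $\log(\ed/\delta)$ and not $\log(d/\delta)$, matching the statements being improved. Finally, since exact ridge leverage scores are costly, I would remark that $\beta$-approximate scores $\tilde l_j^\gamma \ge \beta\, l_j^\gamma$ keep $l_j^\gamma/p_j = O(\ed/\beta)$, preserving the order of $s$ for any constant $\beta$.
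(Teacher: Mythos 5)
Your proposal is correct and follows essentially the same route as the paper: the paper's proof of Theorem~\ref{thm:ridge} consists precisely of observing that only the spectral-approximation step changes, replacing Lemma~\ref{lem:snn_uniform_subsampling} by Lemma~\ref{lem:ridge_leverage_sampling} and leaving the downstream recursions untouched. The only difference is that you additionally derive the ridge-leverage spectral guarantee from scratch via intrinsic-dimension matrix Bernstein (with the correct per-term bound $l_j^\gamma/p_j = \ed$ and variance identity $\EB[\Y_i^2]=\ed\,\M$), whereas the paper simply cites \cite{cohen2015ridge} for that lemma.
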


\begin{remark}
	Ridge leverage score sampling eliminates the dependence on the coherence, and the bound is stronger than all the existing sample complexities for SSN.
	We prove the corollary in Appendix~\ref{ap:ssn:ridge}.
	However, the ridge leverage score sampling is expensive and impractical and thus has only theoretical interest.
\end{remark}

Although Newton-type methods empirically demonstrate fast global convergence in almost all the real-world applications, they do not have strong global convergence guarantee.
A weak global convergence bound for SSN was established by~\cite{roosta2016sub}.
We do not further discuss the global convergence issue in this paper.

\subsection{Comparison with prior work}

For SSN with uniform sampling, the prior work~\cite{roosta2016sub} showed that to obtain the same convergence bounds as ours, \eqref{eq:ssn_global} and \eqref{eq:ssn_local},  the sample complexity should be
\begin{small}
	\begin{equation*}
	s \: = \: \Theta \Big(  \tfrac{n {\kappa_t} }{\varepsilon^2 (1- \epsilon \kappa_t)^2} 
	\, \tfrac{\max_{i} \| \a_i\|_2^2}{\|\A\|_2^2} 
	\, \log \tfrac{d}{\delta} \Big)  .
	\end{equation*}
\end{small}%
In comparison, to obtain a same convergence rate, our sample complexity has a better dependence on the condition number and the dimensionality.

For the row norm square sampling of~\cite{xu2016sub}, which is slightly more expensive than uniform sampling, a sample complexity of 
\begin{small}
	\begin{equation*}
	s \: = \: \tilde{\Theta} \Big( \tfrac{1 }{ \varepsilon^2 (1- \epsilon \kappa_t)^2} 
	\, \tfrac{ \sigma_{\max} (\A_t^T \A_t ) + n\gamma}{ \sigma_{\max} (\A_t^T \A_t )  }
	\, \sum_{i=1}^d \tfrac{ \sigma_{i} (\A_t^T \A_t )  }{ \sigma_{\min} (\A_t^T \A_t )+ n \gamma }  \Big) 
	\end{equation*}
\end{small}%
suffices for the same convergence rates as ours, \eqref{eq:ssn_global} and \eqref{eq:ssn_local}.
Their bound may or may not guarantee convergence for $s < d$.
Even if $n\gamma$ is larger than most of the singular values of $\A_t^T \A_t$, their required sample complexity can be large.

For leverage score sampling,~\cite{xu2016sub} showed that to obtain the same convergence bounds as ours, \eqref{eq:ssn_global} and \eqref{eq:ssn_local}, the sample complexity should be
\begin{small}
	\begin{equation*}
	s \: = \: \Theta \big(  \tfrac{d }{\varepsilon^2 } 
	\, \log \tfrac{d}{\delta} \big)  ,
	\end{equation*}
\end{small}%
which depends on $d$ (worse than ours $\ed$) but does not depend on coherence.
We show that if the {\it ridge leverage score sampling} is used, then $s = \Theta \big(  \tfrac{\ed}{\varepsilon^2} \log \tfrac{\ed}{\delta} \big)$ samples suffices, which is better than the above sample complexity.
However, because approximately computing the (ridge) leverage scores is expensive, neither the leverage score sampling of~\cite{xu2016sub} nor the ridge leverage score sampling proposed by us is a practical choice.

\section{Distributed Newton-Type Method} \label{sec:dist}

Communication-efficient distributed optimization is an important research field, and second-order methods have been developed to reduce the communication cost, e.g., DANE~\cite{shamir2014communication}, AIDE~\cite{reddi2016aide}, DiSCO~\cite{zhang2015disco} and GIANT~\cite{wang2018giant}. 
Among them, GIANT has the strongest convergence bound.
In this section, we further improve the convergence analysis of GIANT and show that GIANT does converge when the local sample size, $s = \frac{n}{m}$, is smaller the number of features, $d$.

\subsection{Motivation and algorithm description}

Assume the $n$ samples are partition among $m$ worker machines {\it uniformly at random}.
Each worker machine has its own processors and memory, and the worker machines can communicate by message passing.
The communication are costly compared to the local computation; 
when the number of worker machines is large, the communication is oftentimes the bottleneck of distributed computing.
Thus there is a strong desire to reduce the communication cost of distributed computing.
Our goal is to solve the optimization problem~\eqref{problem} in a communication-efficient way.

The first-order methods are computation-efficient but not communication-efficient.
Let us take the gradient descent for example.
In each iteration, with the iteration $\w_t$ at hand, the $i$-th worker machine uses its local data to compute a {\it local gradient} $\g_{t,i}$;
Then the driver machine averages the local gradient to form the exact gradient $\g_t$ and update the model by
\begin{small}
	\begin{equation*}
	\w_{t+1} \: = \: \w_{t} - \alpha_t \g_t ,
	\end{equation*}
\end{small}%
where $\alpha_t$ is the step size.
Although each iteration is computationally efficient, the first-order methods (even with acceleration) take many iterations to converge, especially when the condition number is big.
As each iteration requires broadcasting $\w_t$ and an aggregation of the local gradients to form $\g_t$, the total number and complexity of communication are big.

Many second-order methods have been developed to improve the communication-efficiency, among which 
the Globally Improved Approximate NewTon (GIANT) method~\cite{wang2018giant} has the strongest convergence rates.
Let $s = \frac{n}{m}$ be the local sample size and 
$\A_{t,i} \in \RB^{s\times d}$ be the block of $\A_t \in \RB^{n\times d}$, which is previously defined in \eqref{eq:A}, formed by the $i$-th worker machine.
With the iteration $\w_t$ at hand, the $i$-th worker machine can use its local data samples to form the {\it local Hessian matrix}
\begin{small}
\begin{equation*}
\TH_{t,i} 
\: = \: \tfrac{1}{s} \A_{t,i}^{T}\A_{t, i} + \gamma \I_d 
\end{equation*}
\end{small}%
and outputs the local {\it Approximate NewTon (ANT)} direction
\begin{small}
	\begin{equation} \label{eq:giant_sub}
	\Tp_{t,i}
	\: = \:  \TH_{t,i}^{-1}\g_t.
	\end{equation}
\end{small}%
Finally, the driver machine averages the ANT direction
\begin{small}
	\begin{equation*}
	\Tp_{t}
	\: = \: \frac{1}{m} \sum_{i=1}^m \Tp_{t,i}
	\end{equation*}
\end{small}%
and perform the update
\begin{small}
	\begin{equation*}
	\w_{t+1} \: = \: \w_{t} - \alpha_t \Tp_t ,
	\end{equation*}
\end{small}%
where the step size $\alpha_t$ can be set to one under certain conditions;
we only consider the $\alpha_t$ case in the rest of this section.

GIANT is much more communication-efficient than the first-order methods.
With $\alpha_t$ fixed, each iteration of GIANT has four rounds of communications: 
(1) broadcasting $\w_t$, (2) aggregating the local gradients to form $\g_t$, (3) broadcasting $\g_t$, and (4) aggregating the ANT directions to form $\Tp_t$; thus the per-iteration communication cost is just twice as much as a first-order method.
\cite{wang2018giant} showed that GIANT requires a much smaller number of iterations than the accelerated gradient method which has the optimal iteration complexity (without using second-order information).

\subsection{Our improved convergence bounds}

We analyze the GIANT method and improve the convergence analysis of~\cite{wang2018giant}, which was the strongest theory in terms of communication efficiency.
Throughout this section, we assume the $n$ samples are partitioned to $m$ worker machine uniformly at random.

\paragraph{Improved bound for quadratic loss.}
We let $\w^\star$ be the unique  optimal solution to the problem~\ref{problem} and $\De_t = \w_t - \w^\star$.
If the loss function of \eqref{problem} is quadratic, e.g., $l_i (\x_i^T \w) = \frac{1}{2} (\x_i^T \w - y_i)^2$, the Hessian matrix $\H_t = \frac{1}{n} \A_t^T \A_t + \gamma \I_d$ does not change with the iteration, so we use $\H$ and $\A$ instead.
Theorem~\ref{thm:giant_quad} guarantees the global convergence of GIANT.

\begin{theorem}[Global Convergence]
	\label{thm:giant_quad}
	Let $\ed, \mu^{\gamma}$ respectively be the $\gamma$-ridge leverage score and $\gamma$-coherence of $\A$, and $\kappa$ be the condition number of $\H$.
	Let $\varepsilon \in (0, \frac{1}{4})$ and $\delta \in (0, 1)$ be any user-specified constants.
	Assume the loss function of \eqref{problem} is quadratic.
	For a sufficiently large sub-sample size:
	\begin{small}
		\begin{equation*}
		s \: = \: \Theta \Big(  \tfrac{\mu^{\gamma}\ed}{\varepsilon} \log \tfrac{m\ed}{\delta} \Big)  ,
		\end{equation*}
	\end{small}%
	with probability at least $ 1-\delta$,
	\begin{small}
		\begin{equation} \label{eq:giant_global}
		\big\| \De_t \big\|_2
		\: \leq \: \varepsilon^t \sqrt{\kappa}  \, \big\| \De_0 \big\|_2 .
		\end{equation}
	\end{small}
\end{theorem}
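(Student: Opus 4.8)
The plan is to reduce the convergence claim to a single spectral bound on a symmetric ``averaged residual'' matrix, and then to establish that bound from the per-machine sampling guarantee plus a concentration-over-machines argument. First I would use that for a quadratic loss the gradient is exactly $\g_t = \H \De_t$, since $\nabla F(\w^\star)=\0$ and $\H$ is constant across iterations. Substituting into the GIANT step $\w_{t+1}=\w_t-\frac{1}{m}\sum_{i=1}^m \TH_i^{-1}\g_t$ and subtracting $\w^\star$ gives the linear recursion $\De_{t+1}=\big(\I_d-\PP\H\big)\De_t$ with $\PP=\frac{1}{m}\sum_{i=1}^m \TH_i^{-1}$. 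To symmetrize, I pass to the $\H$-norm: setting $\z_t=\H^{1/2}\De_t$ yields $\z_{t+1}=\M\z_t$, where $\M=\I_d-\frac{1}{m}\sum_{i=1}^m \B_i^{-1}$ and $\B_i=\H^{-1/2}\TH_i\H^{-1/2}$ is symmetric (here I use $\H^{1/2}\TH_i^{-1}\H^{1/2}=\B_i^{-1}$). Since $\M$ is symmetric, $\|\z_t\|_2\le\|\M\|_2^t\|\z_0\|_2$, and converting back through $\sqrt{\sigma_{\min}(\H)}\,\|\De\|_2\le\|\z\|_2\le\sqrt{\sigma_{\max}(\H)}\,\|\De\|_2$ gives precisely $\|\De_t\|_2\le\sqrt{\kappa}\,\|\M\|_2^t\|\De_0\|_2$. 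The theorem thus reduces to showing $\|\M\|_2\le\varepsilon$ with probability $1-\delta$.

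Second, I would control each summand with the uniform-sampling guarantee already used for Theorem~\ref{thm:ssn_quad}. Because the $n$ rows are partitioned uniformly at random, each block obeys $\EB[\TH_i]=\H$, i.e. $\EB[\B_i]=\I_d$. The ridge-leverage/coherence sampling lemma gives, for a single machine, $(1-\eta)\I_d\preceq\B_i\preceq(1+\eta)\I_d$ with probability $1-\delta/m$ provided $s=\Omega\big(\mu^{\gamma}\ed\,\eta^{-2}\log(\ed m/\delta)\big)$; a union bound over the $m$ machines makes this hold simultaneously with probability $1-\delta$, which is exactly the source of the $\log(m\ed/\delta)$ factor in the statement.

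Third, and this is the crux that yields the improved $\varepsilon^{-1}$ (rather than $\varepsilon^{-2}$) sample size, I would show that averaging over machines upgrades the per-machine first-order error $\eta$ into a second-order $\eta^2$ contraction. Writing $\E_i=\B_i-\I_d$, so that $\|\E_i\|_2\le\eta$ and $\EB[\E_i]=\0$, and using the identity $\I_d-\B_i^{-1}=\E_i\B_i^{-1}$, I split $\M=\frac{1}{m}\sum_{i=1}^m\E_i-\frac{1}{m}\sum_{i=1}^m\E_i^2\B_i^{-1}$. The second average is bounded deterministically by $\eta^2/(1-\eta)$. The first average is a mean of independent, mean-zero, symmetric matrices each of norm at most $\eta$, so a matrix Bernstein inequality bounds it by $O\big(\eta\sqrt{\log(d/\delta)/m}\big)$. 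Choosing $\eta=\Theta(\sqrt{\varepsilon})$ makes the deterministic part at most $\varepsilon/2$, and in the GIANT regime $m=n/s$ is large enough that the fluctuation term is also at most $\varepsilon/2$, which gives $\|\M\|_2\le\varepsilon$ and the claimed $s=\Theta\big(\mu^{\gamma}\ed\,\varepsilon^{-1}\log(m\ed/\delta)\big)$.

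I expect the main obstacle to be precisely this third step, namely controlling the first-order term $\frac{1}{m}\sum_{i=1}^m\E_i$. Bounding it crudely by $\frac{1}{m}\sum_i\|\E_i\|_2\le\eta$ would only recover the SSN-type $\varepsilon^{-2}$ rate; obtaining the quadratic improvement requires genuinely exploiting the mean-zero structure and independence across machines via matrix concentration, and carefully tracking how the resulting $m$-dependence interacts with the per-machine accuracy $\eta$ and the identity $m=n/s$.
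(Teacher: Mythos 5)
Your reduction to bounding $\|\M\|_2$ with $\M=\I_d-\frac{1}{m}\sum_{i=1}^m\B_i^{-1}$, and the union bound giving each machine an $\eta$-spectral approximation simultaneously, both match the paper (its Lemma on simultaneous uniform sampling and its model-averaging lemma do exactly this, phrased through the auxiliary quadratic $\phi_t$). The gap is in your third step. First, under the paper's model the $n$ rows are \emph{partitioned} among the $m$ machines, so the blocks $\TH_{t,1},\dots,\TH_{t,m}$ are not independent, and matrix Bernstein for sums of independent matrices does not apply to $\frac{1}{m}\sum_i\E_i$. Second, even if you substitute a without-replacement concentration inequality, your bound $O\big(\eta\sqrt{\log(d/\delta)/m}\big)$ with $\eta=\Theta(\sqrt{\varepsilon})$ is at most $\varepsilon/2$ only when $m=\Omega(\varepsilon^{-1}\log(d/\delta))$; the theorem carries no such lower bound on $m$ (for $m=2$ your fluctuation term is $\Theta(\sqrt{\varepsilon\log(d/\delta)})\gg\varepsilon$), so as written your argument does not establish the stated result for all $m$.

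The missing observation is that the first-order term vanishes \emph{identically}, not merely in expectation: because the local blocks partition the rows of $\A_t$ and $s=n/m$, one has $\frac{1}{m}\sum_{i=1}^m\TH_{t,i}=\frac{1}{m}\sum_{i=1}^m\big(\tfrac{1}{s}\A_{t,i}^T\A_{t,i}+\gamma\I_d\big)=\tfrac{1}{n}\A_t^T\A_t+\gamma\I_d=\H_t$, hence $\frac{1}{m}\sum_i\E_i=\0$ deterministically. This is exactly the paper's equation \eqref{eq:dntm_upbound_3}, stated there as $\frac{1}{m}\sum_i\Ome_i=\0$ with $\Ome_i=\H^{-1/2}(\H-\TH_i)\H^{-1/2}=-\E_i$. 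Plugging this into your own decomposition $\M=\frac{1}{m}\sum_i\E_i-\frac{1}{m}\sum_i\E_i^2\B_i^{-1}$ leaves only the second-order piece, bounded by $\eta^2/(1-\eta)$, which with $\eta^2\asymp\varepsilon$ yields $\|\M\|_2\le\varepsilon$ and the claimed $s=\Theta\big(\mu^{\gamma}\ed\,\varepsilon^{-1}\log\tfrac{m\ed}{\delta}\big)$ for every $m$, with no concentration-over-machines argument needed. So the skeleton of your proof is right, but the crux step should be an exact algebraic cancellation rather than matrix concentration.
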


\begin{proof}
	We prove the theorem in Appendix~\ref{ap:giant_quad}.
\end{proof}

\paragraph{Improved bound for non-quadratic loss.}
If the loss function of \eqref{problem} is non-quadratic, we can only guarantee fast local convergence under Assumption~\ref{am:lipschitz}, as well as the prior works~\cite{wang2018giant}.

\begin{theorem} [Local Convergence]
	\label{thm:giant_nonquad}
	Let $\ed, \mu^{\gamma}$ respectively be the $\gamma$-ridge leverage score and $\gamma$-coherence of $\A_t$.
	Let $\varepsilon \in (0, \frac{1}{4})$ and $\delta \in (0, 1)$ be any user-specified constants.
	Let Assumption~\ref{am:lipschitz} be satisfied.
	For a sufficiently large sub-sample size:
	\begin{small}
		\begin{equation*}
		s \: = \: \Theta \Big(  \tfrac{\mu^{\gamma}\ed}{\varepsilon} \log \tfrac{m\ed}{\delta} \Big)  ,
		\end{equation*}
	\end{small}%
	with probability at least $ 1-\delta$,
	\begin{small}
		\begin{equation} \label{eq:giant_local}
		\big\|\De_{t+1} \big\|_2
		\: \leq \: \varepsilon \, \sqrt{ \kappa_t } \, \big\|\De_{t} \big\|_2
		+  \tfrac{L}{\sigma_{\min}(\H_t)} \, \big\|\De_{t} \big\|_2^2 ,
		\end{equation}
	\end{small}%
	where $\kappa_t = \tfrac{ \sigma_{\max} (\H_t) }{ \sigma_{\min} (\H_t) }$ is the condition number.
\end{theorem}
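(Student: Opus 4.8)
The plan is to derive a one-step error recursion for $\De_{t+1}=\w_{t+1}-\w^\star$ and then control its two constituent parts separately. Writing $\PP_t := \tfrac{1}{m}\sum_{i=1}^m\TH_{t,i}^{-1}$ for the averaging operator, the update is $\De_{t+1}=\De_t-\PP_t\g_t$. Since $\w^\star$ is optimal, $\nabla F(\w^\star)=\0$, so the integral mean-value form of the gradient gives $\g_t = \bar\H_t\De_t$ with $\bar\H_t := \int_0^1\nabla^2 F(\w_t+\tau(\w^\star-\w_t))\,d\tau$. Substituting and splitting $\bar\H_t = \H_t+(\bar\H_t-\H_t)$ yields
\[
\De_{t+1} \;=\; \underbrace{(\I_d-\PP_t\H_t)\De_t}_{\text{linear term}} \;-\; \underbrace{\PP_t(\bar\H_t-\H_t)\De_t}_{\text{curvature correction}} ,
\]
so that $\|\De_{t+1}\|_2 \le \|(\I_d-\PP_t\H_t)\De_t\|_2 + \|\PP_t\|_2\,\|\bar\H_t-\H_t\|_2\,\|\De_t\|_2$. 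The first summand will produce the $\varepsilon\sqrt{\kappa_t}\|\De_t\|_2$ term and the second the $\tfrac{L}{\sigma_{\min}(\H_t)}\|\De_t\|_2^2$ term.

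For the linear term I would conjugate by $\H_t^{1/2}$: writing $\M_i := \H_t^{1/2}\TH_{t,i}^{-1}\H_t^{1/2}$, one has $\I_d-\PP_t\H_t = \H_t^{-1/2}\big(\I_d-\tfrac{1}{m}\sum_i\M_i\big)\H_t^{1/2}$, and bounding the two outer factors by $\|\H_t^{-1/2}\|_2\|\H_t^{1/2}\|_2=\sqrt{\kappa_t}$ reduces everything to the spectral estimate $\big\|\I_d-\tfrac{1}{m}\sum_i\M_i\big\|_2\le\varepsilon$. This is exactly the per-iteration estimate underlying the quadratic case (Theorem~\ref{thm:giant_quad}), which I would reuse. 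Its proof rests on two facts: (i) a per-worker ridge spectral approximation $\|\H_t^{-1/2}\TH_{t,i}\H_t^{-1/2}-\I_d\|_2\le\varepsilon'$, which holds for each fixed worker once $s=\Theta(\mu^\gamma\ed\,\varepsilon'^{-2}\log(\ed/\delta'))$ by matrix-Bernstein concentration for uniform row sampling, with $\mu^\gamma\ed$ setting the sample size; and (ii) the \emph{exact} bias cancellation specific to GIANT: because the $m$ blocks partition all $n$ rows, $\tfrac{1}{m}\sum_i\TH_{t,i}=\H_t$ identically, so in the Neumann expansion $\M_i=(\I_d+\X_i)^{-1}=\sum_{k\ge0}(-\X_i)^k$ (with $\X_i:=\H_t^{-1/2}\TH_{t,i}\H_t^{-1/2}-\I_d$) the first-order terms sum to zero and only the $O(\varepsilon'^2)$ tail survives. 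Choosing $\varepsilon'=\Theta(\sqrt{\varepsilon})$ converts the $\varepsilon'^{-2}$ local requirement into the stated $s=\Theta(\mu^\gamma\ed\,\varepsilon^{-1}\log(m\ed/\delta))$, the extra factor $m$ inside the logarithm coming from a union bound that makes (i) hold simultaneously at all $m$ workers with probability $1-\delta$.

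For the curvature correction I would use Assumption~\ref{am:lipschitz}: since $\w_t+\tau(\w^\star-\w_t)-\w_t=-\tau\De_t$, Lipschitz continuity gives $\|\bar\H_t-\H_t\|_2\le\int_0^1 L\tau\|\De_t\|_2\,d\tau=\tfrac{L}{2}\|\De_t\|_2$. The operator norm of $\PP_t$ is controlled by the same per-worker approximation (i): $\|\X_i\|_2\le\varepsilon'\le\tfrac{1}{2}$ gives $\TH_{t,i}^{-1}\preceq\tfrac{1}{1-\varepsilon'}\H_t^{-1}\preceq 2\H_t^{-1}$, hence $\PP_t\preceq 2\H_t^{-1}$ and $\|\PP_t\|_2\le 2/\sigma_{\min}(\H_t)$. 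Multiplying, the correction is at most $\tfrac{L}{\sigma_{\min}(\H_t)}\|\De_t\|_2^2$, and adding the two bounds gives~\eqref{eq:giant_local}.

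The genuinely hard and paper-specific step is the spectral estimate $\|\I_d-\tfrac{1}{m}\sum_i\M_i\|_2\le\varepsilon$, and in particular the exact bias cancellation from the partition property combined with the matrix concentration that yields the effective-dimension sample size; this is where GIANT's improvement from $\varepsilon^{-2}$ to $\varepsilon^{-1}$ lives. Once that estimate is in hand (inherited from the quadratic analysis), the non-quadratic extension is a routine Taylor-remainder argument. The only mild subtlety worth flagging is that the disjoint blocks are not independent, but this does not affect the argument because the first-order cancellation in (ii) is a deterministic identity and only the single-block concentration in (i) is probabilistic.
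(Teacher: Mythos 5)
Your proof is correct and reaches the stated bound, but it is organized differently from the paper's. The paper routes the argument through the auxiliary quadratic $\phi_t(\p)=\p^T\H_t\p-2\p^T\g_t$: Lemma~\ref{lem:dntm_global_ant} shows the averaged direction is $(1-\alpha^2)$-suboptimal for $\phi_t$ with $\alpha=\tfrac{\varepsilon^2}{1-\varepsilon}$, then the black-box ``Approximate Newton Step'' lemma (imported from Lemma~9 of GIANT) converts this into the recursion $\De_{t+1}^T\H_t\De_{t+1}\le L\|\De_t\|_2^2\|\De_{t+1}\|_2+\beta^2\De_t^T\H_t\De_t$, which is finally solved as a one-variable quadratic inequality in $\|\De_{t+1}\|_2$. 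You instead bypass $\phi_t$ entirely: the mean-value decomposition $\De_{t+1}=(\I_d-\PP_t\H_t)\De_t-\PP_t(\bar{\H}_t-\H_t)\De_t$ lets you bound the contraction term and the Lipschitz remainder separately, with no quadratic inequality to solve and no external lemma to invoke. The two genuinely hard ingredients are identical in both proofs --- the union bound giving simultaneous per-worker spectral approximation (hence the $m$ inside the logarithm) and the exact first-order bias cancellation $\tfrac{1}{m}\sum_i\Ome_i=\0$ from the partition property, which is what upgrades the $\varepsilon'^{-2}$ per-worker requirement to the stated $\varepsilon^{-1}$ after the reparametrization $\varepsilon'=\Theta(\sqrt{\varepsilon})$; indeed your Neumann-series cancellation and the paper's bound on $\|\tfrac{1}{m}\sum_i\Ome_i(\I_d+\Gam_i)\|$ are the same computation, since $\Ome_i(\I_d+\Gam_i)=\H^{1/2}\TH_i^{-1}\H^{1/2}-\I_d$. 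What your route buys is a shorter, self-contained derivation that makes the source of each term in \eqref{eq:giant_local} transparent (and is arguably cleaner about the $\varepsilon'\mapsto\varepsilon$ reparametrization, which the paper leaves implicit); what the paper's route buys is reuse of the same $\phi_t$-based machinery across SSN, GIANT, and the inexact-solve corollaries, where only the suboptimality constant $\alpha$ changes.
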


\begin{proof}
	We prove the theorem in Appendix~\ref{ap:giant_nonquad}
\end{proof}

\begin{remark}
	GIANT is a variant of SSN:
	SSN uses one of $\{ \Tp_{t,i} \}_{i=1}^m$ as the descending direction, whereas GIANT uses the averages of the $m$ directions.
	As a benefit of the averaging, the sample complexity is improved from 
	$s = \tilde{\Theta} \big( \tfrac{\ed }{\epsilon^2} \big)$ to $s = \tilde{\Theta} \big( \tfrac{\ed }{\epsilon} \big)$.
\end{remark}

\subsection{Comparison with prior work}

To guarantee the same convergence bounds, \eqref{eq:giant_global} and \eqref{eq:giant_local},
\citeauthor{wang2018giant} require a sample complexity of $s = \Theta (  \tfrac{\mu^0 d}{\varepsilon} \log \tfrac{d}{\delta} ) $.\footnote{The sample complexity in~\cite{wang2018giant} is actually slightly worse; but it is almost trivial to improve their result to what we showed here.}
This requires require the local sample size $s = \frac{n}{m}$ be greater than $d$, even if the coherence $\mu^0$ is small.
As communication and synchronization costs grow with $m$, the communication-efficient method, GIANT, is most useful for the large $m$ setting;
in this case, the requirement $n > md$ is unlikely satisfied.

In contrast, our improved bounds do not require $n > md$.
As $\ed$ can be tremendously smaller than $d$, our requirement can be satisfied even if $m$ and $d$ are both large.
Our bounds match the empirical observation of~\cite{wang2018giant}: GIANT convergences rapidly even if $m d$ is larger than $n$.

\section{Sub-Sampled Proximal Newton (SSPN)}
\label{sec:sspn}

In the previous sections, we analyze second-order methods for the optimization problem~(\ref{problem}) which has a smooth objective function.
In this section, we study a harder problem:
\begin{small}
	\begin{equation*}
	\min_{\w \in \RB^d} \;
	\frac{1}{n} \sum_{j=1}^n l_j (\x_j^T \w ) + \frac{\gamma }{2} \| \w\|_2^2 + r (\w ) ,
	\end{equation*}
\end{small}%
where $r$ is a non-smooth function.
The standard Newton's method does not apply because the second derivative of the objective function does not exist.
Proximal Newton~\cite{lee2014proximal}, a second-order method, was developed to solve the problem, and later on, sub-sampling was incorporated to speed up computation~\cite{liu2017inexact}.
We further improve the bounds of Sub-Sampled Proximal Newton (SSPN).

\subsection{Algorithm Description}

Let $F (\w ) = \frac{1}{n} \sum_{j=1}^n l_j (\x_j^T \w ) + \frac{\gamma }{2} \| \w\|_2^2 $ be the smooth part of the objective function, and $\g_t$ and $\H_t $ be its first and second derivatives at $\w_t$.
The proximal Newton method~\cite{lee2014proximal} iterative solves the problem:
\begin{small}
	\begin{equation*}
	\p_t \: = \: \argmin_{\p } \tfrac{1}{2} \big( \p^T \H_t \p - 2\g_t^T \p  +  \g_t^T\H_t^{-1} \g_t \big) + r (\w_t - \p) ,
	\end{equation*}
\end{small}%
and then perform the update $\w_{t+1} = \w_t - \p_t$.
The righthand side of the problem is a local quadratic approximation to $F (\w)$ at $\w_t$.
If $r (\cdot ) = 0$, then proximal Newton is the same as the standard Newton's method.

The sub-sampled proximal Newton (SSPN) method uses sub-sampling to approximate $\H_t$;
let the approximate Hessian matrix be $\widetilde{\H}_t$, as previously defined in \eqref{eq:pH}.
SSPN computes the ascending direction by solving the local quadratic approximation:
\begin{small}
	\begin{equation} \label{eq:sspn_sub}
	\Tp_t \: = \: \argmin_{\p } \tfrac{1}{2} \big( \p^T\TH_t \p  - 2\g_t^T \p +  \g_t^T \TH_t^{-1} \g_t \big) + r (\w_t - \p) ,
	\end{equation}
\end{small}%
and then perform the update $\w_{t+1} = \w_t - \Tp_t$.

\subsection{Our improved error convergence bounds}
We show that SSPN has exactly the same iteration complexity as SSN, for either quadratic or non-quadratic function $l_j (\cdot )$.
Nevertheless, the overall time complexity of SSPN is higher than SSN, as the subproblem \eqref{eq:sspn_sub} is expensive to solve if $r (\cdot )$ is non-smooth.

\begin{theorem}\label{thm:sspn}
	Theorems~\ref{thm:ssn_quad}, \ref{thm:ssn_nonquad}, and \ref{thm:ridge} hold for SSPN.
\end{theorem}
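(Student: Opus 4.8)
The plan is to reduce SSPN to SSN by isolating the only new ingredient, the non-smooth term $r$, which enters solely through a scaled proximal map that is non-expansive in the approximate-Hessian norm. Once this is done, the error recursion becomes identical to the one already controlled in the proofs of Theorems~\ref{thm:ssn_quad}, \ref{thm:ssn_nonquad} and \ref{thm:ridge}. First I would recast the SSPN update as a scaled proximal step: substituting $\u = \w_t - \p$ into the subproblem \eqref{eq:sspn_sub} and completing the square shows that $\w_{t+1} = \w_t - \Tp_t$ equals
\[ \w_{t+1} = \prox_r^{\TH_t}\!\big( \w_t - \TH_t^{-1}\g_t \big), \qquad \prox_r^{\M}(\z) := \argmin_{\u}\, \tfrac12 \|\u - \z\|_{\M}^2 + r(\u) , \]
for any SPD matrix $\M$. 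The optimality condition $-\g^\star \in \partial r(\w^\star)$ of problem~\eqref{problem}, with $\g^\star = \nabla F(\w^\star)$, is precisely the statement that $\w^\star$ is a fixed point of the exact proximal map for \emph{every} SPD matrix, in particular $\w^\star = \prox_r^{\TH_t}(\w^\star - \TH_t^{-1}\g^\star)$.

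Next I would invoke firm non-expansiveness of $\prox_r^{\TH_t}$ in the $\TH_t$-norm (which follows from the Euclidean case after the substitution $\v = \TH_t^{1/2}\u$). Subtracting the two fixed-point expressions yields
\[ \big\|\De_{t+1}\big\|_{\TH_t} \;\le\; \big\|\De_t - \TH_t^{-1}(\g_t - \g^\star)\big\|_{\TH_t} . \]
The right-hand side is exactly the quantity the SSN proofs bound at the analogous step; when $r = 0$ the update is an \emph{equality} of this form with $\g^\star = 0$, so replacing an identity by an inequality costs nothing. For quadratic loss $\g_t - \g^\star = \H\De_t$, and the bound reduces verbatim to $\|(\I - \TH_t^{-1}\H)\De_t\|_{\TH_t}$; for non-quadratic loss, writing $\g_t - \g^\star = \bar\H_t \De_t$ with $\bar\H_t = \int_0^1 \nabla^2 F(\w^\star + \tau \De_t)\,d\tau$ and splitting $\bar\H_t = \H_t + (\bar\H_t - \H_t)$ reproduces the Lipschitz second-order term through Assumption~\ref{am:lipschitz}.

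From here the three theorems follow with no new probabilistic content. The sample-complexity guarantees rest entirely on the spectral approximation $(1-\varepsilon)\H_t \preceq \TH_t \preceq (1+\varepsilon)\H_t$, which is a statement about the sampled rows of $\A_t$ and is blind to $r$; hence the uniform-sampling bound $s = \Theta(\mu^{\gamma}\ed\,\varepsilon^{-2}\log(\ed/\delta))$ and the ridge-leverage-score bound $s = \Theta(\ed\,\varepsilon^{-2}\log(\ed/\delta))$ of Theorem~\ref{thm:ridge} carry over unchanged. Finally, the norm equivalence $(1-\varepsilon)\|\cdot\|_{\H_t} \le \|\cdot\|_{\TH_t} \le (1+\varepsilon)\|\cdot\|_{\H_t}$ lets me pass from the $\TH_t$-norm back to $\|\cdot\|_{\H_t}$ and then to $\|\cdot\|_2$ with the same $\sqrt{\kappa}$ factors, recovering \eqref{eq:ssn_global} and \eqref{eq:ssn_local} exactly.

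The main obstacle is the reduction lemma itself: establishing that $\prox_r^{\TH_t}$ is firmly non-expansive in the $\TH_t$-norm and that $\w^\star$ is its exact fixed point, since this is the single place where the non-smoothness is absorbed and everything downstream is a black-box re-use of the SSN estimates. A secondary technical point is verifying that the norm-equivalence constants introduced by working in the variable $\TH_t$-norm (rather than the fixed $\H$-norm used for the quadratic SSN analysis) are absorbed into the existing $\varepsilon$ and $\kappa$ dependence, so that no extra factors appear in the final rates.
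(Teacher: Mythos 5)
Your proposal is correct and follows essentially the same route as the paper: recast the SSPN update as a scaled proximal map, use the fixed-point property of $\w^\star$ and non-expansiveness of $\prox_r^{\TH_t}$ in the $\TH_t$-norm to get $\|\De_{t+1}\|_{\TH_t}\le\|\TH_t\De_t-(\g_t-\g^\star)\|_{\TH_t^{-1}}$, then split off the spectral-approximation error and the Lipschitz second-order term and convert norms, exactly as in the paper's Lemma~\ref{lem:sspn_error} and its two corollary theorems. (Only plain non-expansiveness is needed, not firm non-expansiveness, but this does not affect the argument.)
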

\begin{proof}
	We prove the theorem in Appendix~\ref{ap:sspn_quad} and \ref{ap:sspn_nonquad}.
\end{proof}

\subsection{Comparison with prior work}
\cite{liu2017inexact} showed that when $\|\De_{t}\|_2$ is small enough, $\|\De_{t+1}\|_2$ will converge to zero linear-quadratically, similar to our results. But their sample complexity is
\begin{small}
\[ s = \tilde{\Theta} \big( \tfrac{d}{\varepsilon^2}  \big).   \]
\end{small}%
This requires the sample size to be greater than $d$. 
The $\ell_1$ regularization is often used for high-dimensional data, the requirement that $d < s \ll n$ is too restrictive.

Our improved bounds show that $s = \tilde{\Theta} ( \tfrac{\ed \mu^\gamma}{\varepsilon^2})$ suffices for uniform sampling and that $s = \tilde{\Theta} ( \tfrac{\ed}{\varepsilon^2})$ suffices for ridge leverage score sampling.
Since $\ed$ can be tremendously smaller than $d$ when $n\gamma \gg 1$, our bounds are useful for high-dimensional data.

\section{Inexactly Solving the Subproblems} \label{sec:inexact}

Each iteration of SSN (Section~\ref{sec:ssn}) and GIANT (Section~\ref{sec:dist}) involves solving a subproblem in the form
\begin{small}
	\begin{equation*}
	\big( \tfrac{1}{s} \TA_t^T \TA_t + \gamma \I_d \big)\p \: = \: \g_t .
	\end{equation*}
\end{small}%
Exactly solving this problem would perform the multiplication $\TA_t^T \TA_t$ and decompose the $d\times d$ approximate Hessian matrix $\tfrac{1}{s} \TA_t^T \TA_t + \gamma \I_d$; the time complexity is $\OM (s d^2 + d^3)$.
In practice, it can be approximately solved by the conjugate gradient (CG) method, each iteration of which applies a vector to $\TA_t$ and $\TA_t^T$; the time complexity is $\OM (q \cdot \nnz (\A) )$, where $q$ is the number of CG iterations and $\nnz$ is the number of nonzeros.
The inexact solution is particularly appealing if the data are sparse.
In the following, we analyze the effect of the inexact solution of the subproblem.

Let $\kappa_t$ be the condition number of $\TH_t$.
For smooth problems,~\cite{wang2018giant} showed that by performing 
\begin{small}
	\begin{equation*}
	q \: \approx \: \tfrac{ \sqrt{\kappa_t} - 1 }{2} \log \tfrac{8}{\varepsilon_0^2} 
	\end{equation*}
\end{small}%
CG iterations,
the conditions \eqref{eq:cg_condition1} and \eqref{eq:cg_condition2} are satisfied,
and the inexact solution does not much affect the convergence of SSN and GIANT.

\begin{corollary}[SSN] \label{cor:ssn_inexact}
	Let $\Tp_t$ and $\Tp_t'$ be respectively the exact and an inexact solution to the quadratic problem $\TH_t^{-1} \p = \g_t$. 
	SSN  updates $\w$ by $\w_{t+1} = \w_t - \tilde{\p}_t'$.
	If the condition
	\begin{small}
		\begin{equation} \label{eq:cg_condition1}
		\big\| \TH_t^{1/2} \, ( \Tp_t - \Tp_t' ) \big\|_2
		\: \leq \: \tfrac{\varepsilon_0}{2} \big\| \TH_t^{1/2} \,  \Tp_t  \big\|_2
		\end{equation}
	\end{small}%
	is satisfied for some $\varepsilon_0 \in (0, 1)$, 
	then Theorems~\ref{thm:ssn_quad} and \ref{thm:ssn_nonquad}, with $\varepsilon$ in \eqref{eq:ssn_global} and \eqref{eq:ssn_local} replaced by $\varepsilon + \varepsilon_0$, continue to hold.
\end{corollary}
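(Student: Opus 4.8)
The plan is to treat the inexactness as an additive perturbation of the Newton step and absorb it into the per-iteration contraction factor measured in the $\TH_t$-norm (equivalently, up to spectral equivalence, the $\H_t$-norm). Denote by $\De_{t+1}' := \w_{t+1} - \w^\star$ the error of the inexact iterate produced by $\w_{t+1} = \w_t - \Tp_t'$. Since the exact step produces the error $\De_{t+1} = \w_t - \Tp_t - \w^\star$ already controlled by Theorems~\ref{thm:ssn_quad} and \ref{thm:ssn_nonquad}, I would first write
\[
\De_{t+1}' \: = \: \w_t - \Tp_t' - \w^\star \: = \: (\w_t - \Tp_t - \w^\star) + (\Tp_t - \Tp_t') \: = \: \De_{t+1} + (\Tp_t - \Tp_t') ,
\]
so that by the triangle inequality in the $\H_t$-norm it suffices to bound the perturbation $\|\Tp_t - \Tp_t'\|_{\H_t}$ and fold it into the contraction.

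For the perturbation I would use the one-sided spectral equivalence $(1-\varepsilon')\H_t \preceq \TH_t \preceq (1+\varepsilon')\H_t$ that holds with probability $1-\delta$ under the stated sample size and that underlies the exact theorems; in particular $\|\H_t^{1/2}\TH_t^{-1/2}\|_2 \le (1-\varepsilon')^{-1/2}$ and $\|\TH_t^{-1/2}\H_t^{1/2}\|_2 \le (1-\varepsilon')^{-1/2}$. Converting from the $\H_t$-norm to the $\TH_t$-norm and then invoking condition~\eqref{eq:cg_condition1},
\[
\|\Tp_t - \Tp_t'\|_{\H_t} \: = \: \|\H_t^{1/2}(\Tp_t - \Tp_t')\|_2 \: \le \: (1-\varepsilon')^{-1/2}\|\TH_t^{1/2}(\Tp_t - \Tp_t')\|_2 \: \le \: \tfrac{\varepsilon_0}{2}(1-\varepsilon')^{-1/2}\|\TH_t^{1/2}\Tp_t\|_2 .
\]
Because $\Tp_t = \TH_t^{-1}\g_t$, I have $\|\TH_t^{1/2}\Tp_t\|_2 = \|\TH_t^{-1/2}\g_t\|_2$; substituting $\g_t = \H\De_t$ in the quadratic case (or the averaged-Hessian identity $\g_t = \bar{\H}_t\De_t$ in the non-quadratic case) and applying the second spectral bound gives $\|\TH_t^{-1/2}\g_t\|_2 \le (1-\varepsilon')^{-1/2}\|\De_t\|_{\H_t}$. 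Collecting factors yields $\|\Tp_t - \Tp_t'\|_{\H_t} \le c\,\varepsilon_0\,\|\De_t\|_{\H_t}$ with $c = \tfrac{1}{2(1-\varepsilon')}$.

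Adding this to the exact per-step bound $\|\De_{t+1}\|_{\H_t} \le \varepsilon\|\De_t\|_{\H_t}$ (plus the unchanged second-order remainder $\tfrac{L}{\sigma_{\min}(\H_t)}\|\De_t\|_2^2$ in the non-quadratic case) turns the $\H_t$-norm contraction coefficient into $\varepsilon + c\,\varepsilon_0$; telescoping and the usual $\H_t$-to-$\ell_2$ conversion then reproduce \eqref{eq:ssn_global} and \eqref{eq:ssn_local} verbatim with the $\sqrt{\kappa}$ and $\sqrt{\kappa_t}$ factors intact and $\varepsilon$ replaced by $\varepsilon + c\,\varepsilon_0$. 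Since $\varepsilon \in (0,\tfrac14)$ forces $\varepsilon' \le \varepsilon < \tfrac14$ and hence $c \le \tfrac23 < 1$, the constant $c$ is absorbed and the replacement $\varepsilon \to \varepsilon + \varepsilon_0$ follows.

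The main obstacle is the constant bookkeeping: I must ensure that the $\tfrac12$ in condition~\eqref{eq:cg_condition1}, together with the two $(1-\varepsilon')^{-1/2}$ norm-equivalence factors, collapses to a coefficient no larger than one in front of $\varepsilon_0$, which is precisely where the hypothesis $\varepsilon < \tfrac14$ is used. A secondary subtlety, confined to the non-quadratic case, is checking that replacing $\H_t$ by the averaged Hessian $\bar{\H}_t$ in the estimate of $\|\TH_t^{-1/2}\g_t\|_2$ contributes only terms of order $\|\De_t\|_2^2$ that are absorbed into the existing quadratic remainder, so that the second-order term of \eqref{eq:ssn_local} is left untouched.
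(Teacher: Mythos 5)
Your argument is correct and rests on the same two estimates as the paper's proof: the spectral equivalence $(1-\varepsilon)\H_t \preceq \TH_t \preceq (1+\varepsilon)\H_t$ to convert the stopping criterion \eqref{eq:cg_condition1} into a bound on $\|\H_t^{1/2}(\Tp_t-\Tp_t')\|_2$, and the identity $\|\TH_t^{1/2}\Tp_t\|_2 = \|\TH_t^{-1/2}\g_t\|_2 \le (1-\varepsilon)^{-1/2}\|\H_t^{-1/2}\g_t\|_2$. The organizational difference is that the paper (Lemma~\ref{lem:ssn_inexact}) measures everything against the exact Newton direction $\p_t^* = \H_t^{-1}\g_t$ and shows $\phi_t(\Tp_t') \le (1-\alpha_0^2)\min_{\p}\phi_t(\p)$ with $\alpha_0 = \tfrac{\varepsilon+\varepsilon_0}{1-\varepsilon-\varepsilon_0}$, so that the already-proven Approximate Newton Step lemma (Lemma~\ref{lem:snn_appro_newton_step}) delivers the error recursion --- including the conversion $\g_t \approx \H_t\De_t$ and the Lipschitz remainder --- as a black box; you instead perturb the iterate error directly via $\De_{t+1}' = \De_{t+1} + (\Tp_t - \Tp_t')$ and must therefore redo that conversion by hand, which is exactly the ``secondary subtlety'' you flag: in the non-quadratic case $\g_t = \bar{\H}_t\De_t$ with $\|(\bar{\H}_t-\H_t)\De_t\|_2 \le \tfrac{L}{2}\|\De_t\|_2^2$, and this is indeed absorbed into the existing quadratic remainder. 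In the quadratic case the two routes are literally the same computation, since there $\p_t^* = \De_t$ and hence $\De_{t+1}' = \p_t^* - \Tp_t'$, which is precisely the quantity the paper bounds. The paper's route buys modularity (the same suboptimality lemma plugs back into the SSN framework and is adapted for SSPN), while yours is more self-contained but duplicates the non-quadratic bookkeeping. Your constant accounting ($c = \tfrac{1}{2(1-\varepsilon)} \le \tfrac{2}{3}$ for $\varepsilon < \tfrac{1}{4}$) is at the same level of looseness as the paper's, which itself discards the factor $\tfrac{1}{2}$ from \eqref{eq:cg_condition1} and a $\sqrt{2}$ in the contraction rate, so there is no discrepancy on that front.
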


\begin{proof}
	We prove the corollary in Appendix~\ref{ap:inexact:ssn}.
\end{proof}

\begin{corollary}[GIANT] \label{cor:giant_inexact}
	Let $\tilde{\p}_{t,i}$ and $\tilde{\p}_{t,i}'$ be respectively the exact and an inexact solution to the quadratic problem $\TH_{t,i}^{-1} \p = \g_t$. 
	GIANT  updates $\w$ by $\w_{t+1} = \w_t - \frac{1}{m} \sum_{i=1}^m \Tp_{t,i}'$.
	If the condition
	\begin{small}
		\begin{equation} \label{eq:cg_condition2}
		\big\| \TH_{t,i}^{1/2} \, ( \Tp_{t,i}  - \Tp_{t,i} ' ) \big\|_2
		\: \leq \: \tfrac{\varepsilon_0}{2}\big\| \TH_t^{1/2} \,  \Tp_{t,i}  \big\|_2
		\end{equation}
	\end{small}%
	is satisfied for some $\varepsilon_0 \in (0, 1)$ and all $i \in [m]$, 
	then Theorems~\ref{thm:giant_quad} and \ref{thm:giant_nonquad}, with $\varepsilon$ in \eqref{eq:giant_global} and \eqref{eq:giant_local} replaced by $\varepsilon + \varepsilon_0$, continue to hold.
\end{corollary}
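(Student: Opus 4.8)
The plan is to follow the single-machine argument of Corollary~\ref{cor:ssn_inexact}, treating the inexactly averaged direction $\Tp_t' = \frac{1}{m}\sum_{i=1}^m \Tp_{t,i}'$ as the exact averaged direction $\Tp_t = \frac{1}{m}\sum_{i=1}^m \Tp_{t,i}$ plus a perturbation that I control using \eqref{eq:cg_condition2}. Writing the error after the inexact update as $\De_{t+1} = \w_t - \Tp_t' - \w^\star = (\De_t - \Tp_t) + (\Tp_t - \Tp_t')$, the first summand is precisely the error produced by one \emph{exact} GIANT step, which Theorems~\ref{thm:giant_quad} and \ref{thm:giant_nonquad} already contract. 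So the whole task reduces to bounding the aggregated inexactness $\Tp_t - \Tp_t' = \frac{1}{m}\sum_i (\Tp_{t,i} - \Tp_{t,i}')$ in the same norm in which those theorems contract, namely the $\H_t$-norm $\|\H_t^{1/2}\cdot\|_2$.

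First I would condition on the high-probability event underlying Theorems~\ref{thm:giant_quad}/\ref{thm:giant_nonquad}: by the sample-size choice $s = \Theta(\mu^\gamma \ed \varepsilon^{-1}\log(m\ed/\delta))$ and a union bound over the $m$ machines (the source of the $\log m$ factor), every local Hessian is a constant-factor spectral approximation of the true one, $c_1 \H_t \preceq \TH_{t,i} \preceq c_2 \H_t$ with $c_1,c_2$ bounded away from $0$ and $\infty$ for $\varepsilon < \frac14$; the same holds for the global $\TH_t$ appearing on the right of \eqref{eq:cg_condition2}. Consequently the three norms $\|\TH_{t,i}^{1/2}\cdot\|_2$, $\|\TH_t^{1/2}\cdot\|_2$ and $\|\H_t^{1/2}\cdot\|_2$ are mutually equivalent up to absolute constants, which is the bridge that lets me pass from the per-machine hypotheses, each stated in its own local norm, to a single global norm.

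Next I would apply the triangle inequality $\|\H_t^{1/2}(\Tp_t - \Tp_t')\|_2 \le \frac1m \sum_i \|\H_t^{1/2}(\Tp_{t,i} - \Tp_{t,i}')\|_2$, replace $\H_t^{1/2}$ by $\TH_{t,i}^{1/2}$ on the right at the cost of a constant, invoke \eqref{eq:cg_condition2} to bound each term by $\tfrac{\varepsilon_0}{2}\|\TH_t^{1/2}\Tp_{t,i}\|_2$, and convert back. Since $\Tp_{t,i} = \TH_{t,i}^{-1}\g_t$, the spectral sandwich makes $\|\TH_t^{1/2}\Tp_{t,i}\|_2$ comparable to $\|\g_t\|_{\H_t^{-1}}$, and in the quadratic case $\g_t = \H_t \De_t$ turns this into $\Theta(\|\H_t^{1/2}\De_t\|_2)$ (for non-quadratic $l_j$ the same holds up to the Hessian-Lipschitz remainder). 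Threading these equivalences gives $\|\H_t^{1/2}(\Tp_t - \Tp_t')\|_2 \le \varepsilon_0 \|\H_t^{1/2}\De_t\|_2$, with the factor-two and the $(c_1,c_2)$ slack absorbed exactly as in Corollary~\ref{cor:ssn_inexact}. Adding this to the exact per-step contraction and unrolling — converting the $\H_t$-norm back to $\ell_2$ supplies the $\sqrt\kappa$ (resp.\ $\sqrt{\kappa_t}$) prefactor — replaces $\varepsilon$ by $\varepsilon + \varepsilon_0$ in \eqref{eq:giant_global} and in the linear term of \eqref{eq:giant_local}; the quadratic term $\tfrac{L}{\sigma_{\min}(\H_t)}\|\De_t\|_2^2$ is untouched, since it originates from the nonlinearity of the gradient rather than from the linear solve.

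The hard part will be the aggregation in the third paragraph: unlike SSN, here the $m$ hypotheses live in $m$ different local norms $\TH_{t,i}^{1/2}$, so I must (i) realize all $m$ spectral approximations simultaneously on one event — handled by the union bound already baked into the $\log(m\ed/\delta)$ sample size — and (ii) check that the slack constants incurred by repeatedly switching among the $\TH_{t,i}$-, $\TH_t$- and $\H_t$-norms do not accumulate with $m$. The averaging $\frac1m\sum_i$ is what keeps (ii) benign: each per-machine error is bounded \emph{relative} to $\|\TH_t^{1/2}\Tp_{t,i}\|_2$ rather than summed without normalization, so the resulting constant is independent of $m$, consistent with the claim that only the substitution $\varepsilon \mapsto \varepsilon + \varepsilon_0$ is required.
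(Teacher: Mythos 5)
Your proposal is correct and follows essentially the route the paper intends: the paper itself omits this proof (deferring to the GIANT reference), but your argument is exactly the GIANT analogue of the paper's own Appendix treatment of the SSN inexact case (Lemma~\ref{lem:ssn_inexact}) — a triangle-inequality decomposition into the exact step plus the aggregated CG perturbation, with the simultaneous spectral approximation (union bound over the $m$ machines) supplying the norm equivalences and the factor $\tfrac{\varepsilon_0}{2}$ absorbing the resulting constants. Your observation that the averaged perturbation enjoys no cancellation (unlike the $\frac{1}{m}\sum_i\Ome_i=\0$ identity used for the exact directions) and must be handled by the plain triangle inequality, yet still yields an $m$-independent constant because each term is normalized, is the right and only delicate point.
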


\begin{proof}
	The corollary can be proved in almost the same way as~\cite{wang2018giant}.
	So we do not repeat the proof.
\end{proof}

SSPN is designed for problems with non-smooth regularization, in which case finding the exact solution may be infeasible,
and the sub-problem can only be inexactly solved.
If the inexact satisfies the same condition \eqref{eq:cg_condition1},
Corollary~\ref{cor:sspn} will guarantee the convergence rate of SSPN.

\begin{corollary}[SSPN]\label{cor:sspn}
	Let $\tilde{\p}_t$ and $\tilde{\p}_t'$ be respectively the exact and an inexact solution to the non-smooth problem \eqref{eq:sspn_sub}. 
	SSPN updates $\w$ by $\w_{t+1} = \w_t - \tilde{\p}_t'$.
	If $\Tp_t'$ satisfies the condition~\eqref{eq:cg_condition1} for any $\varepsilon_0 \in (0, 1)$, then Theorems~\ref{thm:sspn} still holds for SSPN with $\varepsilon$ replaced by $\varepsilon + \varepsilon_0$.
\end{corollary}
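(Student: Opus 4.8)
The plan is to treat the inexact method as a controlled perturbation of the exact SSPN iteration, whose convergence is already guaranteed by Theorem~\ref{thm:sspn}. Write $\w_{t+1} = \w_t - \Tp_t'$ for the inexact update and $\w_t - \Tp_t$ for the exact one, and set $\De_{t+1} = \w_{t+1} - \w^\star$. The starting point is the decomposition
\[
\De_{t+1} \: = \: \big( \w_t - \Tp_t - \w^\star \big) + \big( \Tp_t - \Tp_t' \big) ,
\]
whose first summand is exactly the error produced by one \emph{exact} SSPN step and whose second summand is the inexactness perturbation. First I would invoke Theorem~\ref{thm:sspn} to bound the first summand: in the quadratic case it contracts in the $\H$-norm as $\| \w_t - \Tp_t - \w^\star \|_{\H} \le \varepsilon \| \De_t \|_{\H}$ (the per-step statement that is iterated to produce \eqref{eq:ssn_global}), and in the non-quadratic case as $\|\w_t - \Tp_t - \w^\star\|_2 \le \varepsilon \sqrt{\kappa_t}\,\|\De_t\|_2 + \tfrac{L}{\sigma_{\min}(\H_t)}\|\De_t\|_2^2$, matching \eqref{eq:ssn_local}.

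Next I would bound the perturbation $\Tp_t - \Tp_t'$ using only condition~\eqref{eq:cg_condition1}. The sub-sample size underlying Theorem~\ref{thm:sspn} gives the spectral approximation $(1-\varepsilon)\H_t \preceq \TH_t \preceq (1+\varepsilon)\H_t$ on the same high-probability event, so the $\TH_t$-, $\H_t$-, and $\ell_2$-norms are mutually equivalent up to factors $\sqrt{1\pm\varepsilon}$, $\sqrt{\sigma_{\max}(\H_t)}$, and $1/\sqrt{\sigma_{\min}(\H_t)}$. Condition~\eqref{eq:cg_condition1} reads $\|\Tp_t - \Tp_t'\|_{\TH_t} \le \tfrac{\varepsilon_0}{2}\|\Tp_t\|_{\TH_t}$, so it only remains to show that $\|\Tp_t\|$ is comparable to $\|\De_t\|$. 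This is the one place where the implicit, subgradient-defined nature of the proximal direction $\Tp_t$ might seem to cause trouble, but it is sidestepped by the identity $\Tp_t = \De_t - (\w_t - \Tp_t - \w^\star)$, i.e.\ the exact direction equals the current error minus the exact next error. Combined with the exact bound just quoted, this yields $\|\Tp_t\|_{\H} \le (1+\varepsilon)\|\De_t\|_{\H}$ in the quadratic case and an analogous $\ell_2$ estimate in the non-quadratic case, with no need to analyze the proximal optimality conditions directly.

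Finally I would combine the two pieces to upgrade the contraction factor. In the quadratic case the norm conversions turn $\tfrac{\varepsilon_0}{2}\|\Tp_t\|_{\TH_t}$ into $\tfrac{\varepsilon_0}{2}\,\tfrac{\sqrt{1+\varepsilon}}{\sqrt{1-\varepsilon}}(1+\varepsilon)\|\De_t\|_{\H}$; since the overhead $\tfrac{\sqrt{1+\varepsilon}}{\sqrt{1-\varepsilon}}(1+\varepsilon)$ stays below $2$ for $\varepsilon<\tfrac14$, the factor $\tfrac12$ in~\eqref{eq:cg_condition1} is exactly what keeps the extra contribution at most $\varepsilon_0\|\De_t\|_{\H}$, giving $\|\De_{t+1}\|_{\H} \le (\varepsilon+\varepsilon_0)\|\De_t\|_{\H}$; iterating and converting back to the $\ell_2$-norm reproduces \eqref{eq:ssn_global} with $\varepsilon$ replaced by $\varepsilon+\varepsilon_0$. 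In the non-quadratic case the same perturbation contributes an additional $\varepsilon_0\sqrt{\kappa_t}\,\|\De_t\|_2$ that merges with the linear term of \eqref{eq:ssn_local}, leaving the quadratic term $\tfrac{L}{\sigma_{\min}(\H_t)}\|\De_t\|_2^2$ untouched; the ridge-leverage variant (Theorem~\ref{thm:ridge}) is identical up to the sample-complexity input. The only genuine obstacle is bookkeeping the norm-equivalence constants tightly enough that the $\tfrac{\varepsilon_0}{2}$ slack produces an extra contraction of at most $\varepsilon_0$ rather than a larger multiple; everything else reduces cleanly to the exact theorem through the perturbation identity.
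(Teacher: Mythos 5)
Your proposal follows essentially the same route as the paper's own proof: the same triangle-inequality split of $\De_{t+1}$ into the exact-step error plus the perturbation $\Tp_t - \Tp_t'$, the same identity $\Tp_t = \De_t - \De_{t+1}^{\mathrm{exact}}$ to control $\|\Tp_t\|$ without touching the proximal optimality conditions, and the same norm conversions via the spectral approximation before invoking the exact error-recursion lemma. The only cosmetic difference is in the constant bookkeeping: the paper first arrives at an effective tolerance $2\varepsilon+\varepsilon_0$ and then rescales $\varepsilon \mapsto \varepsilon/2$ in the sampling lemma (changing $s$ only by a constant factor), whereas you absorb the overhead factors directly, which is equally acceptable at the paper's level of rigor.
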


\begin{proof}
	We prove the corollary in Appendix~\ref{ap:inexact:sspn}.
\end{proof}

\section{Conclusion}\label{sec:conclusion}

We studied the subsampled Newton (SSN) method and its variants, GIANT and SSPN, and established stronger convergence guarantees than the prior works.
In particular, we showed that a sample size of $s= \tilde{\Theta} (\ed)$ suffices, where $\gamma$ is the $\ell_2$ regularization parameter and $\ed$ is the effective dimension.
When $n\gamma$ is larger than most of the eigenvalues of the Hessian matrices, $\ed$ is much smaller than the dimension of data, $d$.
Therefore, our work guarantees the convergence of SSN, GIANT, and SSPN on high-dimensional data where $d$ is comparable to or even greater than $n$.
In contrast, all the prior works required a conservative sample size $s = \Omega (d)$ to attain the same convergence rate as ours.
Because subsampling means that $s$ is much smaller than $n$, the prior works did not lend any guarantee to SSN on high-dimensional data.


\begin{small}
	\bibliographystyle{named}
	\bibliography{ED}
\end{small}

\appendix

%
%

\newpage

\appendix

\section{Random Sampling for Matrix Approximation}\label{ap:rand_sample}

Here, we give a short introduction to random sampling and their theoretical properties.
Given a matrix $\A \in \RB^{n \times d}$, row selection constructs a smaller size matrix $\C \in \RB^{s \times d}$ ($s < n$) as an approximation of $\A$. The rows of $\C$ is constructed using a randomly sampled and carefully scaled subset of the rows of $\A$. Let $p_1, \cdots, p_n \in(0, 1)$ be the sampling probability associated with the rows of $\A$. The rows of $\C$ is selected independently according to the sampling probability $\{p_j\}_{j=1}^n$ such that for all $j \in [n]$, we have
\begin{small}
	\begin{equation*}
	\PB (\c_j = \a_k / \sqrt{sp_k}) = p_k,
	\end{equation*}
\end{small}%
where $\c_j$ and $\a_k$ are the $j$-th row of $\C$ and $k$-th row of $\A$. In a matrix multiplication form, $\C$ can be formed as 
\begin{small}
	\begin{equation*}
	\C = \S^T\A,
	\end{equation*}
\end{small}%
where $\S \in \RB^{\s \times d}$ is called the sketching matrix. As a result of row selection, there is only a non-zero entry in each column of $\S$, whose position and value correspond to the sampled row of $\A$.

\paragraph{Uniform sampling.}
Uniform sampling simply sets all the sampling probabilities equal, i.e., $p_1=\cdots=p_n=\frac{1}{n}$. Its corresponding sketching matrix $\S$ is often called uniform sampling matrix. The non-zero entry in each column of $\S$ is the same, i.e., $\sqrt{\frac{n}{s}}$. 
If $s$ is sufficiently large,
\begin{small}
	\begin{equation} \label{eq:ssn}
	\widetilde{\H}_t 
	\: = \: \tfrac{1}{n} \A_t^T \S \S^T \A_t + \gamma \I_d
	\end{equation}
\end{small}%
is a good approximation to $\H_t$.

\begin{lemma}[Uniform Sampling]\label{lem:snn_uniform_subsampling}
	Let $\H_t $ and $\TH_t $ be defined as that in \eqref{eq:truehessian} and \eqref{eq:ssn}. Denote $\ed = \ed(\A_t), \mu^{\gamma}=\mu^{\gamma}(\A_t)$ for simplicity. Given arbitrary error tolerance $\varepsilon \in (0, 1)$ and failure probability $\delta \in (0, 1)$, when
	\begin{small}
		\begin{equation*}
		s = \Theta\left(  \frac{\mu^{\gamma}\ed}{\varepsilon^2} \log \frac{\ed}{\delta} \right) 
		\end{equation*}
	\end{small}%
	the spectral approximation holds with probability at least $1-\delta$:
	\begin{small}
		\begin{equation*}
		(1-\varepsilon)\H_t  \preceq \TH_t \preceq  (1+\varepsilon)\H_t 
		\end{equation*}
	\end{small}%
\end{lemma}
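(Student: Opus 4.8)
The plan is to reduce the two-sided spectral bound to a single operator-norm estimate and then apply a matrix concentration inequality to a sum of independent rank-one terms. Observe first that $(1-\varepsilon)\H_t \preceq \TH_t \preceq (1+\varepsilon)\H_t$ is equivalent to the whitened bound $\big\|\H_t^{-1/2}(\TH_t-\H_t)\H_t^{-1/2}\big\|_2 \leq \varepsilon$. Since only the data part of the Hessian is sketched, $\TH_t - \H_t = \frac1n \At^T(\S\S^T - \I_n)\At$. Writing $\M = \At^T\At + n\gamma\I_d$ so that $\H_t = \frac1n\M$, and defining the ridge-whitened matrix $\B = \At\M^{-1/2}\in\RB^{n\times d}$, a direct computation gives
\[
\H_t^{-1/2}(\TH_t-\H_t)\H_t^{-1/2} = \B^T(\S\S^T-\I_n)\B .
\]
Thus it suffices to prove $\big\|\B^T\S\S^T\B - \B^T\B\big\|_2 \leq \varepsilon$ with probability at least $1-\delta$.

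Next I would record the three structural facts about $\B$ that drive the argument. Its $j$-th row satisfies $\|\b_j\|_2^2 = \a_j^T\M^{-1}\a_j = l_j^{\gamma}$, i.e.\ the squared row norms are exactly the ridge leverage scores; consequently $\tr(\B^T\B) = \sum_j l_j^{\gamma} = \ed$ and $\B^T\B \preceq \I_d$ (because $\At^T\At \preceq \M$). Moreover the definition of ridge coherence gives $\max_j\|\b_j\|_2^2 = \max_j l_j^{\gamma} = \frac{\mu^{\gamma}\ed}{n}$. Under uniform sampling the sketch satisfies $\S\S^T = \frac ns\sum_{j=1}^s \e_{i_j}\e_{i_j}^T$ for indices $i_1,\dots,i_s$ drawn i.i.d.\ uniformly from $[n]$, so that $\B^T\S\S^T\B = \sum_{j=1}^s\Y_j$ with $\Y_j = \frac ns\,\b_{i_j}\b_{i_j}^T$ independent and $\EB\big[\sum_j\Y_j\big] = \B^T\B$.

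I would then apply the intrinsic-dimension form of the matrix Bernstein inequality to the centered sum $\sum_j\X_j$, where $\X_j = \Y_j - \EB[\Y_j]$. The coherence bound yields the per-term bound $\|\X_j\|_2 \leq \frac{\mu^{\gamma}\ed}{s} =: L$, and a short second-moment calculation using $\|\b_k\|_2^2 \leq \frac{\mu^{\gamma}\ed}{n}$ gives the variance proxy $\sum_j\EB[\X_j^2] \preceq \frac{\mu^{\gamma}\ed}{s}\B^T\B =: \V$, so that $\|\V\|_2 \leq \frac{\mu^{\gamma}\ed}{s}$ while $\tr(\V) = \frac{\mu^{\gamma}\ed}{s}\ed$. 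Feeding $L$, $\|\V\|_2$, and the intrinsic dimension $\tr(\V)/\|\V\|_2 = \ed/\|\B^T\B\|_2$ into the tail bound and setting the target deviation to $\varepsilon$ shows that $s = \Theta\big(\frac{\mu^{\gamma}\ed}{\varepsilon^2}\log\frac{\ed}{\delta}\big)$ makes the failure probability at most $\delta$, which is the claim.

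The main obstacle will be obtaining the \emph{effective} dimension $\ed$, rather than the ambient dimension $d$, inside the logarithmic factor: a naive application of the dimension-dependent matrix Chernoff/Bernstein bound would only give $\log\frac d\delta$. This is exactly what the intrinsic-dimension refinement buys, so the crux is to verify that the intrinsic dimension of $\V$, namely $\ed/\|\B^T\B\|_2$, is controlled by $\ed$ up to a constant; this holds in the regime of interest where $\sigma_{\max}^2(\At)\gtrsim n\gamma$ (so $\|\B^T\B\|_2 \gtrsim 1$), which is also the only regime in which $\ed\ll d$ and the whole result is meaningful, and in any case the intrinsic dimension enters only inside a logarithm so mild slack is harmless. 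I must also confirm that the threshold hypothesis $t = \varepsilon \geq \sqrt{\|\V\|_2} + L/3$ of the intrinsic-dimension inequality is satisfied for the stated $s$. A secondary point worth isolating is that it is the \emph{uniform} sampling that forces the coherence $\mu^{\gamma}$ into both $L$ and $\|\V\|_2$; replacing uniform by ridge-leverage-score sampling equalizes the per-term contributions and removes the $\mu^{\gamma}$ factor, which is precisely how Theorem~\ref{thm:ridge} is later obtained.
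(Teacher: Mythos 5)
Your proposal is correct and follows essentially the same route as the paper, which simply defers this lemma to the ridge-leverage-score sampling results of Cohen et al.\ without giving any details: the reduction to bounding $\|\B^T(\S\S^T-\I_n)\B\|_2$ for the ridge-whitened matrix $\B=\A_t(\A_t^T\A_t+n\gamma\I_d)^{-1/2}$, whose squared row norms are the ridge leverage scores, followed by the intrinsic-dimension matrix Bernstein inequality, is exactly the argument behind that citation. The one loose end you flag yourself---that the intrinsic dimension is $\ed/\|\B^T\B\|_2$ rather than $\ed$, so obtaining $\log(\ed/\delta)$ instead of $\log(d/\delta)$ strictly requires $\sigma_{\max}^2(\A_t)=\Omega(n\gamma)$---is inherited from the cited result, affects only the logarithmic factor, and is handled adequately by your remarks.
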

\begin{proof}
	The proof trivially follows from~\cite{cohen2015ridge}.
\end{proof}

\paragraph{Ridge leverage score sampling.}
It takes $p_j$ proportional to the $j$-th ridge leverage score, i.e., 
\begin{small}
	\begin{equation}\label{eq:leverage_prob}
	p_j =  l_j^{\gamma}/\sum_{i=1}^n l_i^{\gamma},  \quad \forall \; j \; \in [n]
	\end{equation}
\end{small}%
where $l_i^{\gamma}$ is the ridge leverage score of the $i$-th row of $\A$. Let $\U$ be its sketching matrix. Then the non-zero entry in $j$-th column of $\U$ is $\sqrt{\frac{1}{s\cdot p_k}}$ if the $j$-th row of $\U^T\A$ is drawn from the $k$-th row of $\A$, where $p_k$ is defined as \eqref{eq:leverage_prob}. If the ridge leverage score sampling is used to approximate the $d \times d$ Hessian matrix, the approximate Hessian matrix turns to
\begin{small}
	\begin{equation}\label{eq:leverage_appro_H}
	\TH_t 
	\: = \: \tfrac{1}{n} \A_t^T \U \U^T \A_t + \gamma \I_d.
	\end{equation}
\end{small}%
The sample complexity in Theorems~\ref{thm:ssn_quad} and \ref{thm:ssn_nonquad} will be improved to $s = \Theta \big(  \tfrac{\ed}{\varepsilon^2} \log \tfrac{\ed}{\delta} \big)$.

\begin{lemma}[Ridge Leverage Rampling]
	\label{lem:ridge_leverage_sampling}
	Let $\H_t $ and $\TH_t $ be defined as that in \eqref{eq:truehessian} and \eqref{eq:leverage_appro_H}. Denote $\ed = \ed(\A_t), \mu^{\gamma}=\mu^{\gamma}(\A_t)$ for simplicity. Given arbitrary error tolerance $\varepsilon \in (0, 1)$ and failure probability $\delta \in (0, 1)$, when
	\begin{equation*}
	s = \Theta\left(  \frac{\ed}{\varepsilon^2} \log \frac{\ed}{\delta} \right) 
	\end{equation*}
	the spectral approximation holds with probability at least $1-\delta$:
	\begin{equation*}
	(1-\varepsilon)\H_t  \preceq \TH_t \preceq  (1+\varepsilon)\H_t 
	\end{equation*}
\end{lemma}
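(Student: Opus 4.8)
The plan is to reduce the two-sided spectral inequality to a single spectral-norm concentration bound for a sum of independent rank-one random matrices, and then to invoke a matrix Bernstein inequality whose dimensional prefactor is governed by the effective dimension rather than the ambient dimension $d$.

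First I would whiten the problem. Since $\gamma>0$, the matrix $\H_t$ is SPD, so $\H_t^{-1/2}$ exists and the claim $(1-\varepsilon)\H_t\preceq\TH_t\preceq(1+\varepsilon)\H_t$ is equivalent to $\|\H_t^{-1/2}\TH_t\H_t^{-1/2}-\I_d\|_2\le\varepsilon$. Define the whitened data matrix
\[ \B \: = \: \tfrac{1}{\sqrt n}\,\A_t\,\H_t^{-1/2} \: \in \: \RB^{n\times d}. \]
Using $\tfrac1n\A_t^T\A_t=\H_t-\gamma\I_d$ one checks that $\B^T\B=\I_d-\gamma\H_t^{-1}$, and from $\TH_t=\tfrac1n\A_t^T\U\U^T\A_t+\gamma\I_d$ that $\H_t^{-1/2}\TH_t\H_t^{-1/2}=\B^T\U\U^T\B+\gamma\H_t^{-1}$. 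Subtracting, the two $\gamma\H_t^{-1}$ terms cancel, so the target becomes $\|\B^T\U\U^T\B-\B^T\B\|_2\le\varepsilon$.

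Next I would write the sketched term as a sum of i.i.d. matrices. Writing $\bb_k=\B^T\e_k\in\RB^d$ for the $k$-th row of $\B$, a direct computation gives $\|\bb_k\|_2^2=\tfrac1n\a_k^T\H_t^{-1}\a_k=\a_k^T(\A_t^T\A_t+n\gamma\I_d)^{-1}\a_k=l_k^{\gamma}$, so the ridge leverage score is exactly the squared row norm of $\B$; consequently $\tr(\B^T\B)=\sum_k\|\bb_k\|_2^2=\ed$ and $\0\preceq\B^T\B\preceq\I_d$. Since $\U\U^T=\sum_{j=1}^s\tfrac1{s p_{k_j}}\e_{k_j}\e_{k_j}^T$ for the sampled indices $k_1,\dots,k_s$, we get $\B^T\U\U^T\B=\sum_{j=1}^s\X_j$ with $\X_j=\tfrac1{s p_{k_j}}\bb_{k_j}\bb_{k_j}^T$. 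Each $\X_j$ is PSD, $\EB[\X_j]=\tfrac1s\B^T\B$ so $\EB[\sum_j\X_j]=\B^T\B$, and crucially, because $p_k=l_k^{\gamma}/\ed=\|\bb_k\|_2^2/\ed$, the norm is deterministically bounded: $\|\X_j\|_2=\tfrac{\|\bb_{k_j}\|_2^2}{s p_{k_j}}=\tfrac{\ed}{s}$. This exact cancellation of the row norm against the sampling probability is the whole point of leverage-score sampling. I would then apply matrix Bernstein to the centered sum $\sum_j\Y_j$, $\Y_j=\X_j-\EB\X_j$, which has $\|\Y_j\|_2\le R:=\tfrac{2\ed}{s}$. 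A one-line computation gives $\EB[\X_j^2]=\tfrac{\ed}{s^2}\B^T\B$, hence $\sum_j\EB[\Y_j^2]\preceq\tfrac{\ed}{s}\B^T\B=:\W$ with $\|\W\|_2\le\tfrac{\ed}{s}$. Feeding $R$ and $\|\W\|_2$ into the intrinsic-dimension matrix Bernstein inequality bounds the failure probability by a dimensional factor times $\exp(-c\varepsilon^2 s/\ed)$ for $\varepsilon\in(0,1)$, which is driven below $\delta$ once $s=\Theta(\tfrac{\ed}{\varepsilon^2}\log\tfrac{\ed}{\delta})$.

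The \emph{main obstacle} is the dimensional factor in the concentration inequality: a plain matrix Bernstein (or matrix Chernoff) bound carries the ambient dimension $d$ in front of the exponential, which would only yield $\log\tfrac d\delta$ and defeat the purpose. To obtain $\log\tfrac{\ed}\delta$ I must use the intrinsic-dimension refinement, whose prefactor is $\mathrm{intdim}(\W)=\tr(\W)/\|\W\|_2=\ed/\lambda_{\max}(\B^T\B)$. Showing this is $O(\ed)$---equivalently, that the relevant dimension is the effective dimension and not $d$---is the delicate step; in the regime where subsampling is meaningful (the top eigenvalues of $\tfrac1n\A_t^T\A_t$ are comparable to or exceed $\gamma$, so $\lambda_{\max}(\B^T\B)=\Theta(1)$) it holds directly, and otherwise one splits the spectrum of $\B^T\B$ into a head of $O(\ed)$ large eigenvalues and a tail dominated by the $\gamma\I_d$ regularizer. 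This spectral book-keeping, together with the appropriate ridge-leverage-score guarantee of~\cite{cohen2015ridge,alaoui2015fast}, is where the real work lies; the remaining computations are routine.
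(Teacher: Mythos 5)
The paper does not actually prove this lemma: its ``proof'' is the single sentence that the claim ``trivially follows from [cohen2015ridge],'' so there is no in-paper argument to compare against. What you have written is, in effect, a reconstruction of the standard proof underlying that citation, and your computations are correct: the whitening step, the identity $\|\bb_k\|_2^2=l_k^{\gamma}$, the cancellation $\|\X_j\|_2=\ed/s$, and the variance bound $\sum_j\EB[\Y_j^2]\preceq\tfrac{\ed}{s}\B^T\B$ are all exactly right, and they reduce the lemma to an intrinsic-dimension matrix Bernstein application. The one place where your argument is not yet a proof is the step you yourself flag: the prefactor you obtain is $\mathrm{tr}(\B^T\B)/\lambda_{\max}(\B^T\B)=\ed/\lambda_{\max}(\B^T\B)$, which equals $\Theta(\ed)$ only when $\lambda_{\max}(\B^T\B)=\sigma_1^2/(\sigma_1^2+n\gamma)$ is bounded below by a constant; in the opposite regime ($\sigma_1^2\ll n\gamma$) it can be as large as $\rank(\A_t)$, and the ``split the spectrum'' fix is asserted rather than carried out. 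Note, though, that this gap only affects the argument of the logarithm ($\log\tfrac{d}{\delta}$ versus $\log\tfrac{\ed}{\delta}$), not the leading $\ed/\varepsilon^2$ dependence that is the point of the lemma; closing it is precisely the technical content of the cited reference. So your proposal is a sound and essentially complete blueprint that supplies the argument the paper omits, modulo that last logarithmic refinement.
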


\begin{proof}
	The proof trivially follows from~\cite{cohen2015ridge}.
\end{proof}

\section{Convergence of Sub-Sampled Newton}\label{ap:snn}

In this section, we first give a framework of analyzing the recursion of $\De_{t} = \w_t-\w^*$, which also inspires the proofs for distributed Newton-type Method and SSPN. Within this simple framework, we then complete the proofs for the global and local convergence for SSN. 

\subsection{A analyzing framework}\label{ap:snn_framework}


\paragraph{Approximate Newton Direction.}
We can view the process of solving the newton direction $\p_t$ from the linear system
\begin{small}
	\begin{equation} \label{eq:snn_appro_linear_system}
	\big( \tfrac{1}{s} \A_t \A_t^T + \gamma \I_d \big) \, \p \: = \: \g_t
	\end{equation}
\end{small}%
as a convex optimization. Recalling that $\A_t$ is defined in \eqref{eq:A}, we define the quadratic auxiliary function
\begin{small}
	\begin{equation}\label{eq:snn_auxiliary}
	\phi_t(\p) \triangleq \p^T
	\underbrace{  ( \frac{1}{n}\A_t^{T}\A_t + \gamma \I_d  )}_{\triangleq \H_t}
	\p - 2 \p^{T} \g_t.
	\end{equation}
\end{small}
Obviously, the true Newton direction $\p_t$ is the critical point of $\phi_t(\p)$:
\begin{small}
	\begin{equation}\label{eq:snn_true_direc}
	\p_t^* = \arg\min_{\p} \phi_t(\p) = \H_t^{-1}\g_t
	\end{equation}
\end{small}

Since we use subsampled Hessian $\TH_t$, we solve the approximate Newton direction $\Tp_t$ from~\eqref{eq:ssn_linear_system} instead of~\eqref{eq:snn_appro_linear_system}, thus the counterpart of $\phi_t(\p)$ is defined 
\begin{small}
	\begin{equation}\label{eq:snn_auxiliary_appro}
	\tilde{\phi}_t(\p) \triangleq \p^T
	\underbrace{  ( \frac{1}{n}\A_t^{T}\S_t\S_t^{T}\A_t + \gamma \I_d  )}_{\triangleq \TH_t}
	\p - 2 \p^{T} \g_t.
	\end{equation}
\end{small}
It is easy to verify the approximate Newton  direction $\Tp_t$ is the minimizers of~\eqref{eq:snn_auxiliary_appro}, i.e.,
\begin{small}
	\begin{equation}\label{eq:snn_appro_direc}
	\Tp_t = \argmin_{\p} \tilde{\phi}_t(\p) = \TH_t^{-1}\g_t.
	\end{equation}
\end{small}

Lemma \ref{lem:ssn_approximate_newton_direc} shows that $\Tp_t$ is close to $\p_t$ in terms of the value of $\phi(\cdot)$, if the subsampled Hessian $\TH_t$, which is used to establish the linear system $\Tp_t$ satisfies, is a good spectral approximation of $\H_t$.

\begin{lemma}[Approximate Newton Direction]\label{lem:ssn_approximate_newton_direc}
	Assume $(1-\varepsilon)\H_t  \preceq \TH_t \preceq  (1+\varepsilon)\H_t$ holds already. Let $\phi_t(\p)$, $\p_t^*$ and $\Tp_t$ be defined respectively in~\eqref{eq:snn_auxiliary},~\eqref{eq:snn_true_direc} and~\eqref{eq:snn_appro_direc}. It holds that
	\begin{small}
		\begin{equation*}
		\min_{\p} \phi_t(\p) \le \phi_t(\tilde{\p}_t) \le (1-\alpha^2)\cdot \min_{\p} \phi_t(\p),
		\end{equation*}
	\end{small}
	where $\alpha = \frac{\varepsilon}{1-\varepsilon}$.
\end{lemma}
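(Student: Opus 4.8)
The plan is to reduce the two-sided estimate to a single scalar inequality via a whitening change of variables, after which it collapses to an elementary spectral-norm bound. The lower bound $\min_{\p}\phi_t(\p)\le\phi_t(\Tp_t)$ is immediate, since by \eqref{eq:snn_true_direc} the point $\p_t^*$ is the global minimizer of $\phi_t$. So I would concentrate on the upper bound. First I would record the optimal value in closed form: substituting $\p_t^*=\H_t^{-1}\g_t$ into \eqref{eq:snn_auxiliary} gives $\min_{\p}\phi_t(\p)=\phi_t(\p_t^*)=-\g_t^T\H_t^{-1}\g_t$, which is negative whenever $\g_t\ne\0$, and I would note that completing the square yields $\phi_t(\p)=\big\|\H_t^{1/2}\p-\H_t^{-1/2}\g_t\big\|_2^2-\g_t^T\H_t^{-1}\g_t$.

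Next I would pass to the coordinates whitened by $\H_t^{1/2}$, which is well defined and invertible because $\H_t\succeq\gamma\I_d\succ\0$. Setting $\M=\H_t^{-1/2}\TH_t\H_t^{-1/2}$ and $\bb=\H_t^{-1/2}\g_t$, the assumed Loewner sandwich $(1-\varepsilon)\H_t\preceq\TH_t\preceq(1+\varepsilon)\H_t$ becomes $(1-\varepsilon)\I_d\preceq\M\preceq(1+\varepsilon)\I_d$, i.e. every eigenvalue of the symmetric matrix $\M$ lies in $[1-\varepsilon,1+\varepsilon]$. In these coordinates the approximate direction transforms cleanly: $\H_t^{1/2}\Tp_t=\H_t^{1/2}\TH_t^{-1}\g_t=\M^{-1}\bb$, while the exact direction maps to $\H_t^{1/2}\p_t^*=\bb$.

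Then I would evaluate the objective at $\Tp_t$ using the completed-square form, obtaining $\phi_t(\Tp_t)=\big\|(\M^{-1}-\I_d)\bb\big\|_2^2-\|\bb\|_2^2$. Comparing against $\min_{\p}\phi_t(\p)=-\|\bb\|_2^2$, the desired upper bound $\phi_t(\Tp_t)\le(1-\alpha^2)\min_{\p}\phi_t(\p)$ is algebraically equivalent to the single inequality $\big\|(\M^{-1}-\I_d)\bb\big\|_2^2\le\alpha^2\|\bb\|_2^2$, where the sign flip from multiplying a negative quantity by $(1-\alpha^2)$ is exactly what produces the $\alpha^2\|\bb\|_2^2$ term.

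Finally I would close this by a spectral-norm estimate. Since the scalar map $\lambda\mapsto\lambda^{-1}-1$ is decreasing, the eigenvalue containment for $\M$ forces the eigenvalues of $\M^{-1}-\I_d$ into $\big[\tfrac{-\varepsilon}{1+\varepsilon},\tfrac{\varepsilon}{1-\varepsilon}\big]$, so $\big\|\M^{-1}-\I_d\big\|_2=\tfrac{\varepsilon}{1-\varepsilon}=\alpha$; applying this operator bound to $\bb$ gives the required inequality and finishes the proof. The only real subtlety—the part I would treat as the crux—is arranging the whitening so that the Loewner sandwich turns exactly into eigenvalues of $\M$ in $[1-\varepsilon,1+\varepsilon]$ and checking that $\Tp_t$ maps to $\M^{-1}\bb$; once that bookkeeping is set up, everything else is routine scalar estimation.
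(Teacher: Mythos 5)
Your proof is correct and follows essentially the same route as the paper's: both whiten by $\H_t^{1/2}$, reduce the claim to bounding $\big\|\H_t^{1/2}(\Tp_t-\p_t^*)\big\|_2$ by $\alpha\big\|\H_t^{1/2}\p_t^*\big\|_2$, and obtain $\alpha=\tfrac{\varepsilon}{1-\varepsilon}$ from the Loewner sandwich (the paper factors your operator $\M^{-1}-\I_d$ as $\Ome(\I_d+\Gam)$ and uses submultiplicativity, whereas you bound its spectrum directly, but this is only a cosmetic difference). The only nitpick is that $\big\|\M^{-1}-\I_d\big\|_2$ is in general only $\le\tfrac{\varepsilon}{1-\varepsilon}$ rather than equal to it, which is all the argument needs.
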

\begin{proof}
	Since we now analyze the approximate Newton direction locally, we leave out all subscript $t$ for simplicity.
	By the assumption that $(1-\varepsilon)\H  \preccurlyeq \TH \preccurlyeq  (1+\varepsilon)\H$, we conclude that there must exist a symmetric matrix $\Gam $ such that
	\begin{small}
		\begin{equation*}
		\H^{\frac{1}{2}}\TH^{-1}\H^{\frac{1}{2}} \triangleq \I_d + \Gam \quad  \text{and} \quad  -\frac{\varepsilon}{1+\varepsilon}\I_d \preceq \Gam \preceq \frac{\varepsilon}{1-\varepsilon} \I_d. 
		\end{equation*}
	\end{small}
	
	By the definition of $\p^*$ and $\tilde{\p}$, we have
	\begin{small}
		\begin{align*}
		\H^{\frac{1}{2}}(\tilde{\p} - \p^*) 
		&=  \H^{\frac{1}{2}}(\TH^{-1} -\H^{-1})\g \\
		&= \H^{\frac{1}{2}}\H^{-1}(\H-\TH)\TH^{-1}\g\\
		&= \underbrace{  [\H^{-\frac{1}{2}}(\H - \TH)\H^{-\frac{1}{2}} ] }_{\triangleq \Ome} \underbrace{  [\H^{\frac{1}{2}}\TH^{-1}\H^{\frac{1}{2}} ] }_{\triangleq \I_d + \Gam}[\H^{-\frac{1}{2}}\g]\\
		&= \Ome(\I_d + \Gam) \H^{\frac{1}{2}}\p^*,
		\end{align*}
	\end{small}
	where the second equation is the result of $\A^{-1} -\B^{-1} = \B^{-1}(\B-\A)\A^{-1}$ for nonsingular matrixs $\A$ and $\B$. The last equation holds since $\H^{\frac{1}{2}}\p^* =\H^{-\frac{1}{2}}\g$.
	
	It follows that
	\begin{small}
		\begin{align*}
		\big\|\H^{\frac{1}{2}}(\tilde{\p} - \p^*)\big\|_2 
		&\le \big\|\Ome(\I_d + \Gam)\big\| \big\| \H^{\frac{1}{2}}\p^*\big\|_2   \\
		&\le \big\|\Ome\big\| (1 + \big\|\Gam\big\| )\big\|  \H^{\frac{1}{2}}\p^*\big\|_2\\
		&\le \frac{1}{1-\varepsilon}\big\|\Ome\big\|\big\|  \H^{\frac{1}{2}}\p^*\big\|_2 \\
		&\le \frac{\varepsilon}{1-\varepsilon}\big\|  \H^{\frac{1}{2}}\p^*\big\|_2, \numberthis \label{eq:snn_lem_upperbound1}
		\end{align*}
	\end{small}
	where the third inequality follows from $ \big\|\Gam\big\| \le \frac{\varepsilon}{1-\varepsilon} $ and the last inequality holds due to $\big\|\Ome\big\| \le \varepsilon$.
	
	Thus it follows from $\phi(\p^*) = - \big\|\H^{\frac{1}{2}}\p^*\big\|_2^2$ and the definition of $\phi(\Tp)$ that
	\begin{small}
		\begin{align*}
		\phi(\Tp) - \phi(\p^*) 
		&=  \big\|\H^{\frac{1}{2}}\Tp\big\|_2^2 - 2\g^{T}\p + \big\|\H^{\frac{1}{2}}\p^*\big\|_2^2 \\
		&= \big\|\H^{\frac{1}{2}}(\Tp-\p^*)\big\|_2^2. \\  \numberthis \label{eq:snn_lem_upperbound2}
		\end{align*}
	\end{small}
	Combining~\eqref{eq:snn_lem_upperbound2} and~\eqref{eq:snn_lem_upperbound1}, we have that
	\begin{small}
		\[ \phi(\widetilde{\p}) - \phi(\p^*) 
		\le \alpha^2 \big\|\H^{\frac{1}{2}}\p^*\big\|_2 
		= -\alpha^2 \phi(\p^*),  \]
	\end{small}
	where $\alpha = \frac{\varepsilon}{1-\varepsilon}$. Then the lemme follows.
\end{proof}

\paragraph{Approximate Newton Step.}
If $\Tp_t$ is very close to $\p_t^*$ (in terms of the value of the auxiliary function $\phi_t(\cdot)$), then the direction $\Tp_t$, along which the parameter $\w_t$ will descend, can be considered provably as a good along  direction. Provided that $\Tp_t$ is a good descending direction, Lemma \ref{lem:snn_appro_newton_step} establishes the recursion of $\De_t = \w_t - \w^*$ after one step of direction descend.

\begin{lemma}[Approximate Newton Step]
	\label{lem:snn_appro_newton_step}
	Let Assumption~\eqref{am:lipschitz} (i.e., the Hessian matrix is $L$-Lipschitz) hold. Let $\alpha \in (0, 1)$ be any fixed error tolerance. If $\tilde{\p}_t$ satisfies
	\begin{small}
		\[   \phi_t(\tilde{\p}_t) \le (1-\alpha^2)\cdot \min_{\p} \phi_t(\p) \]
	\end{small}
	Then $\De_t = \w_t - \w^*$ satisfies
	\begin{small}
		\[  \De_{t+1}^{T} \H_t \De_{t+1} \le L \big\|\De_t\big\|_2^2 \big\|\De_{t+1}\big\|_2 + \frac{\alpha^2}{1-\alpha^2} \De_t^{T}\H_t \De_t. \]
	\end{small}
\end{lemma}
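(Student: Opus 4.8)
The plan is to convert the hypothesis on $\tilde\p_t$ into a single spectral inequality and then track the one-step error through a mean-value Hessian. First I would reuse the completing-the-square identity established in the proof of Lemma~\ref{lem:ssn_approximate_newton_direc}, namely $\phi_t(\p) - \phi_t(\p_t^*) = \|\H_t^{1/2}(\p - \p_t^*)\|_2^2$ together with $\phi_t(\p_t^*) = \min_\p \phi_t(\p) = -\|\H_t^{1/2}\p_t^*\|_2^2$. Substituting these into the assumed bound $\phi_t(\tilde\p_t) \le (1-\alpha^2)\min_\p\phi_t(\p)$ shows it is exactly equivalent to
\[ \big\|\H_t^{1/2}(\tilde\p_t - \p_t^*)\big\|_2^2 \le \alpha^2 \big\|\H_t^{1/2}\p_t^*\big\|_2^2 . \]
This is the only place the hypothesis enters.

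Next I would bring in the optimum $\w^\star$. Since $\nabla F(\w^\star) = \0$ and $\w_t = \w^\star + \De_t$, the fundamental theorem of calculus gives $\g_t = \bar{\H}_t \De_t$ with the mean-value Hessian $\bar{\H}_t := \int_0^1 \nabla^2 F(\w^\star + \tau\De_t)\,d\tau$; because $\H_t = \nabla^2 F(\w_t)$ is the integrand at $\tau=1$, Assumption~\ref{am:lipschitz} yields $\|\bar{\H}_t - \H_t\|_2 \le \tfrac{L}{2}\|\De_t\|_2$. I then define the residual $\r_t := \De_t - \p_t^* = \H_t^{-1}(\bar{\H}_t - \H_t)\De_t$ and use the update $\De_{t+1} = \De_t - \tilde\p_t$ to write $\tilde\p_t - \p_t^* = \r_t - \De_{t+1}$. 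Expanding the squared norm in the displayed inequality above gives
\[ \De_{t+1}^T\H_t\De_{t+1} \le \alpha^2\big\|\H_t^{1/2}\p_t^*\big\|_2^2 - \r_t^T\H_t\r_t + 2\,\r_t^T\H_t\De_{t+1} . \]

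It remains to dispose of the two error terms, and this is where the stated constants appear. The cross term is $2\,\r_t^T\H_t\De_{t+1} = 2\,\De_t^T(\bar{\H}_t - \H_t)\De_{t+1}$, which Cauchy--Schwarz and the Lipschitz bound control by $L\|\De_t\|_2^2\|\De_{t+1}\|_2$, producing the first term of the claim. The delicate step, which I expect to be the main obstacle, is extracting precisely $\tfrac{\alpha^2}{1-\alpha^2}\De_t^T\H_t\De_t$ from $\alpha^2\|\H_t^{1/2}\p_t^*\|_2^2$ while simultaneously absorbing the leftover $-\r_t^T\H_t\r_t$. I would handle this with a weighted Young inequality applied to $\p_t^* = \De_t - \r_t$: for any $\theta>0$,
\[ \big\|\H_t^{1/2}\p_t^*\big\|_2^2 \le (1+\theta)\big\|\H_t^{1/2}\De_t\big\|_2^2 + (1+\theta^{-1})\big\|\H_t^{1/2}\r_t\big\|_2^2 . \]
Choosing $\theta = \tfrac{\alpha^2}{1-\alpha^2}$ makes $\alpha^2(1+\theta) = \tfrac{\alpha^2}{1-\alpha^2}$ and $\alpha^2(1+\theta^{-1}) = 1$, so the $\|\H_t^{1/2}\r_t\|_2^2 = \r_t^T\H_t\r_t$ contribution exactly cancels the negative term, and collecting the pieces gives the asserted recursion. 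The argument is largely careful bookkeeping; the one genuinely non-obvious move is tuning the Young weight $\theta$ to the hypothesis constant $\alpha^2$ so as to recover the stated $\tfrac{\alpha^2}{1-\alpha^2}$ factor rather than a looser one.
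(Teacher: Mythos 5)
Your proof is correct and complete, and it reconstructs essentially the argument the paper merely cites (Lemma~9 of Wang et al.\ 2018): reduce the hypothesis to $\|\H_t^{1/2}(\tilde\p_t-\p_t^*)\|_2\le\alpha\|\H_t^{1/2}\p_t^*\|_2$, introduce the mean-value Hessian $\bar\H_t$ with $\|\bar\H_t-\H_t\|_2\le\tfrac{L}{2}\|\De_t\|_2$, and tune the Young weight to $\theta=\tfrac{\alpha^2}{1-\alpha^2}$ so the $\r_t^T\H_t\r_t$ terms cancel exactly. The only blemish is a harmless sign slip: $\r_t=\De_t-\p_t^*=\H_t^{-1}(\H_t-\bar\H_t)\De_t$, not $\H_t^{-1}(\bar\H_t-\H_t)\De_t$, which is immaterial since $\r_t$ enters only through norms and a Cauchy--Schwarz-bounded cross term.
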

\begin{proof}
	See the proof of Lemma 9 in~\cite{wang2018giant}.
\end{proof}

\paragraph{Error Recursion.} By combining all the lemmas above, we can analyze the recursion of $\De_{t} = \w_t-\w^*$ for SSN.

Let $\ed$ and $ \mu^{\gamma}$ respectively be the $\gamma$-ridge leverage score and $\gamma$-coherence of $\A_t$. From Lemma \ref{lem:snn_uniform_subsampling}, when $s = \Theta\left(  \frac{\mu^{\gamma}\ed}{\varepsilon^2} \log \frac{\ed}{\delta} \right)$,  $\TH_t$ is a $\varepsilon$ spectral approximation of $\H_t$. By Lemma \ref{lem:ssn_approximate_newton_direc}, the approximate Newton direction $\Tp_t$, solved from the linear system $\TH_t \p = \g_t$, is not far from $\p_t$ in terms of the value of $\phi(\cdot)$ with $\alpha = \frac{\varepsilon}{1-\varepsilon}$. It then follows from Lemma \ref{lem:snn_appro_newton_step} that 
\begin{small}
	\begin{equation}\label{eq:snn_error_recursion}
	\De_{t+1}^{T} \H_t \De_{t+1} \le L \big\|\De_t\big\|_2^2 \big\|\De_{t+1}\big\|_2 + \frac{\alpha^2}{1-\alpha^2} \De_t^{T}\H_t \De_t,
	\end{equation}
\end{small}
which establishes the recursion of $\De_{t}$ for SSN.

\subsection{Proof of SNN for quadratic loss}\label{ap:snn_global}
\begin{proof}[Proof of Theorem \ref{thm:ssn_quad}]
	Since the loss is quadratic, w.l.o.g, let $\H_t \equiv \H$ and $\A_t \equiv \A$. Let $\ed$ and $ \mu^{\gamma}$ respectively be the $\gamma$-ridge leverage score and $\gamma$-coherence of $\A$, and $\kappa$ be the condition number of $\H$. Note that $L \equiv 0$ due to the quadratic loss. Let $\beta = \frac{\alpha}{\sqrt{1-\alpha^2}}$. Since $\varepsilon \le \frac{1}{4}$, then $\beta  \le \sqrt{2}\varepsilon$.
	
	From the last part of the analysis in \ref{ap:snn_framework}, $\De_t = \w_t - \w^*$ satisfies the error recursion inequality \eqref{eq:snn_error_recursion} with $L = 0$, i.e., 
	\begin{small}
		\begin{equation*}
		\De_{t+1}^{T} \H \De_{t+1} \le  \beta^2 \De_t^{T}\H\De_t
		\end{equation*}
	\end{small}
	By recursion, it follows that
	\begin{small}
		\begin{equation*}
		\De_{t+1}^{T} \H \De_{t+1}   \le \beta^{2(t+1)}\De_0^{T}\H \De_0. 
		\end{equation*}
	\end{small}
	Then the theorem follows.
\end{proof}

\subsection{Proof of SNN for non-quadratic loss}\label{ap:snn_local}
\begin{proof}[Proof of Theorem \ref{thm:ssn_nonquad}]
	Let $\ed = \ed(\A_t), \mu^{\gamma}=\mu^{\gamma}(\A_t)$, $\alpha=\frac{\varepsilon}{1-\varepsilon}$ and $\beta = \frac{\alpha}{\sqrt{1-\alpha^2}}$. Since $\varepsilon \le \frac{1}{4}$, then $\beta  \le \sqrt{2}\varepsilon$.
	
	From the last part of the analysis in \ref{ap:snn_framework}, $\De_t = \w_t - \w^*$ satisfies the error recursion inequality \eqref{eq:snn_error_recursion}, i.e., 
	\begin{small}
		\[ 	\De_{t+1}^{T} \H_t \De_{t+1} \le L \big\|\De_t\big\|_2^2 \big\|\De_{t+1}\big\|_2 + \beta^2\De_t^{T}\H_t \De_t. \]
	\end{small}
	
	Let $\kappa_t = \tfrac{ \sigma_{\max} (\H_t) }{ \sigma_{\min} (\H_t) }$ is the condition number. By plugging  $\H_t \preceq \sigma_{\max}(\H_t) \I_d$ and $\sigma_{\min}(\H_{t})\I_d \preceq \H_t$ into \eqref{eq:snn_error_recursion}, it follows that
	\begin{small}
		\begin{equation}
		\label{eq:snn_local_quadratic_inequality}
		\big\|\De_{t+1}\big\|_2^2 -  \frac{L \big\|\De_t\big\|_2^2}{\sigma_{\min}(\H_t)} \big\|\De_{t+1}\big\|_2  -\beta^2 \kappa_t\big\|\De_{t}\big\|_2^2 \le 0.
		\end{equation}
	\end{small}

	Vewing \eqref{eq:snn_local_quadratic_inequality} as a one-variable quadratic inequality about $\big\|\De_{t+1}\big\|_2$ and solving it, we have
	\begin{small}
		\begin{align*}
		&\big\|\De_{t+1}\big\|_2  \\
		&\le \frac{L \big\|\De_t\big\|_2^2}{2 \sigma_{\min}(\H_t)} + \sqrt{  \left[ \frac{L \big\|\De_t\big\|_2^2}{2\sigma_{\min}(\H_t)} \right]^2 +	  \beta^2 \kappa_t\big\|\De_{t}\big\|_2^2}  \\
		&\le \frac{L \big\|\De_t\big\|_2^2}{2 \sigma_{\min}(\H_t)} + \sqrt{  \left[ \frac{L \big\|\De_t\big\|_2^2}{2\sigma_{\min}(\H_t)} \right]^2} +	\sqrt{  \beta^2 \kappa_t\big\|\De_{t}\big\|_2^2}  \\
		&\le \frac{L \big\|\De_t\big\|_2^2}{\sigma_{\min}(\H_t)} +\varepsilon\sqrt{2\kappa_t}\big\|\De_{t}\big\|_2,
		\end{align*}
	\end{small}%
	where the second inequality follows from $\sqrt{a+b} \le \sqrt{a} + \sqrt{b}$ for $a, b \ge 0$ and the last inequality holds by reorganizations and the fact $\beta \le \sqrt{2}\varepsilon$ . Then the theorem follows.
\end{proof}

\subsection{Proof of Theorem~\ref{thm:ridge}} \label{ap:ssn:ridge}

Theorem~\ref{thm:ridge} can be proved in the same way as Theorems~\ref{thm:ssn_quad} and \ref{thm:ssn_nonquad}; the only difference is using Lemma~\ref{lem:ridge_leverage_sampling} instead of Lemma~\ref{lem:snn_uniform_subsampling}

\section{Convergence of GIANT}\label{ap:dntm}

We still use the framework described in Appendix \ref{ap:snn_framework} to prove the results for GIANT. But two modification should be made in the proof. The first one lies in the part of analyzing Uniform Sampling, since data are distributed and only accessible locally and subsampled Hessians are constructed locally. We can prove each worker machine can simultaneously obtain a $\varepsilon$ spectral approximation of the Hessian matrix. The second one lies in the part of analyzing Approximate Newton Direction, since GIANT uses the global Newton direction, which is the average of all local ANT directions, to update parameters. We can prove the global Newton direction is still a good descending direction.  Once above two modifications are solid established, we prove the main theorems for GIANT.

\subsection{Two modifications}\label{ap:dntm_framework}

\paragraph{Simultaneous Uniform Sampling.}
We can assume these $s$ samples in each worker machine are randomly draw from $\{(\x_i, l_i)\}_{i=1}^n$. This assumption is reasonable because if the samples are i.i.d. drawn from some distribution, then a data-independent partition can be viewed as uniformly sampling equivalently.

Recall that $\A_{t, i} \in \RB^{s \times d}$ contains the rows of $\A_{t}$ selected by $i$-th work machine in iteration $t$. Let $\S_{i} \in \RB^{ n \times s}$ be the associated uniform sampling matrix with each column only one non-zero number $\sqrt{\frac{n}{s}}$. Then $\A_{t, i} =  \sqrt{\frac{s}{n}}\S_{i}^{T} \A_{t} $. The $i$-th local subsampled Hessian matrix is formed as
\begin{small}
	\begin{equation}\label{eq:dntm_local_H}
	\TH_{t,i} = \frac{1}{s} \A_{t,i}^{T}\A_{t, i} + \gamma \I_d = \frac{1}{n} \A_{t}^{T}\S_{i} \S_{i}^{T} \A_{t}  + \gamma \I_d,
	\end{equation}
\end{small}

\begin{lemma}[Simultaneous Uniform Sampling]
	\label{lem:dntm_s_uniform_sampling}
	Let $\varepsilon, \delta \in (0, 1)$ be fixed parameters. Let $\H_{t}, \TH_{t, i}$ be defined in \eqref{eq:truehessian} and \eqref{eq:dntm_local_H}. Denote $\ed = \ed(\A_t), \mu^{\gamma}=\mu^{\gamma}(\A_t)$ for simplicity. Then when
	\begin{small}
		\[ 	s = \Theta\left(  \frac{\mu^{\gamma}\ed}{\varepsilon^2} \log \frac{\ed m}{\delta} \right)  \]
	\end{small}
	with probability at least $1-\delta$, the spectral approximation holds simultaneously, i.e., 
	\begin{small}
		\begin{equation}\label{eq:dntm_uniform_sampling_local}
		\forall i \in [m], \quad (1-\varepsilon)\H_{t}  \preceq \TH_{t,i} \preceq  (1+\varepsilon)\H_{t}.
		\end{equation}
	\end{small}
\end{lemma}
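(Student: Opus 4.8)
The plan is to reduce the simultaneous statement to the single-machine guarantee of Lemma~\ref{lem:snn_uniform_subsampling} and then close the gap with a union bound. First I would invoke the reduction described in the preceding paragraph: under the i.i.d. assumption, the data-independent partition that assigns the $s$ rows of $\A_t$ to the $i$-th worker machine is distributionally equivalent to drawing $s$ rows of $\A_t$ by uniform sampling. Consequently, for each fixed $i \in [m]$, the local subsampled Hessian $\TH_{t,i}$ in~\eqref{eq:dntm_local_H} is exactly the object covered by Lemma~\ref{lem:snn_uniform_subsampling}, so that machine's spectral approximation $(1-\varepsilon)\H_t \preceq \TH_{t,i} \preceq (1+\varepsilon)\H_t$ holds with high probability whenever $s$ is large enough.

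Second, I would apply Lemma~\ref{lem:snn_uniform_subsampling} to each machine with the per-machine failure probability set to $\delta/m$ rather than $\delta$. Substituting $\delta \mapsto \delta/m$ into the single-machine sample complexity $\Theta\!\left(\frac{\mu^{\gamma}\ed}{\varepsilon^2}\log\frac{\ed}{\delta}\right)$ yields exactly $s = \Theta\!\left(\frac{\mu^{\gamma}\ed}{\varepsilon^2}\log\frac{\ed m}{\delta}\right)$, the sample size claimed in the lemma. Thus at this sample size, the event $E_i$ that machine $i$ fails to produce a spectral approximation satisfies $\PB(E_i) \le \delta/m$.

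Third, I would take a union bound over the $m$ machines. The probability that at least one machine fails is $\PB\!\left(\bigcup_{i=1}^m E_i\right) \le \sum_{i=1}^m \PB(E_i) \le m \cdot \frac{\delta}{m} = \delta$, so with probability at least $1-\delta$ the approximation~\eqref{eq:dntm_uniform_sampling_local} holds for all $i \in [m]$ simultaneously.

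The only point requiring care is that the $m$ local subsets are disjoint blocks of a single partition and hence are not independent across machines; one might worry that this dependence breaks the argument. It does not: the union bound $\PB(\bigcup_i E_i) \le \sum_i \PB(E_i)$ is valid for arbitrarily dependent events, so no independence across machines is needed, and the whole argument hinges only on the marginal guarantee for each individual machine. I therefore expect the reduction-plus-union-bound route to go through with no genuine obstacle; the substantive content is entirely inherited from Lemma~\ref{lem:snn_uniform_subsampling}, and the sole cost of going from one machine to $m$ is the mild $\log m$ inflation of the sample size absorbed by the $\log\frac{\ed m}{\delta}$ factor.
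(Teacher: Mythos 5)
Your proposal is correct and matches the paper's own proof essentially verbatim: the paper likewise reduces each machine's local Hessian to the single-machine guarantee of Lemma~\ref{lem:snn_uniform_subsampling} (via the same "random partition $\equiv$ uniform sampling" reduction stated just before the lemma), applies it with per-machine failure probability $\delta/m$ to absorb the $\log m$ into the sample size, and concludes by Bonferroni's inequality. Your explicit remark that the union bound requires no independence across the disjoint blocks is a point the paper leaves implicit, but it is the same argument.
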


\begin{proof}
	Since we analyze each local $\A_t$, we leave out the subscribe $t$ for simplicity.
	
	By Lemma \ref{lem:snn_uniform_subsampling}, we know with probability $1-\frac{\delta}{m}$, when
	$s = \Theta\left(  \frac{\mu^{\gamma}\ed}{\varepsilon^2} \log \frac{\ed m}{\delta} \right) $, it follows that ,
	\begin{small}
		\[ (1-\varepsilon)\H  \preceq \TH_{i} \preceq  (1+\varepsilon)\H, \quad \forall i \in [m]. \]
	\end{small}

	By Bonferroni's method, we know with probability $1-\delta$, the spectral approximation holds simultaneously.
\end{proof}

\paragraph{Global Approximate Newton Direction}
Recall that the gradient at iteration $t$ is $\g_t = \nabla F(\w_t)$. The local ANT computed by $i$-th worker machine is $\Tp_{t, i}  = \TH_{t, i}^{-1}\g_t$. The global Newton direction is formed as 
\begin{small}
	\begin{equation}\label{eq:dntm_global_ant}
	\Tp_t = \frac{1}{m}\sum_{i=1}^m \Tp_{t,i} =\frac{1}{m}\sum_{i=1}^m  \TH_{t, i}^{-1}\g_t = \TH_{t}^{-1}\g_t,
	\end{equation}
\end{small}
where $ \TH_{t}$ is defined as the harmonic mean of $\TH_{t,i}$, i.e.,
\begin{small}
	\begin{equation*}
	\TH_t \triangleq \left( \frac{1}{m} \sum_{i=1}^m \TH_{t,i}^{-1} \right)^{-1}.
	\end{equation*}
\end{small}

\begin{lemma}[Model average]
	\label{lem:dntm_global_ant}
	Assume condition~\eqref{eq:dntm_uniform_sampling_local} holds for given $\varepsilon, \delta \in (0, 1)$. Let $\phi_t(\p)$ and $\Tp_t$ be defined respectively in~\eqref{eq:snn_auxiliary} and~\eqref{eq:dntm_global_ant}. It holds that
	\[   \min_{\p} \phi_t(\p) \le \phi_t(\Tp_t) \le (1-\alpha^2)\cdot \min_{\p} \phi_t(\p)  \]
	where $\alpha =\frac{\varepsilon^2}{1-\varepsilon}$.
\end{lemma}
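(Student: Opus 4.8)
The plan is to mirror the proof of Lemma~\ref{lem:ssn_approximate_newton_direc}, but to exploit the averaging structure of GIANT to gain an extra factor of $\varepsilon$, which is exactly what turns the $\frac{\varepsilon}{1-\varepsilon}$ of the single-machine case into the stated $\alpha=\frac{\varepsilon^2}{1-\varepsilon}$. As in the existing proofs I drop the subscript $t$. The lower bound $\min_\p \phi(\p) \le \phi(\Tp)$ is immediate since $\min_\p \phi(\p)$ is by definition the smallest attainable value. For the upper bound I would first decompose the error of the averaged direction into per-machine errors. Using $\Tp = \frac1m\sum_{i=1}^m \TH_i^{-1}\g$, $\p^* = \H^{-1}\g$, and the identity $\A^{-1}-\B^{-1} = \B^{-1}(\B-\A)\A^{-1}$ exactly as in the SSN lemma, each summand factors as
\begin{small}
\[
\H^{1/2}(\TH_i^{-1}-\H^{-1})\g
= \underbrace{[\H^{-1/2}(\H-\TH_i)\H^{-1/2}]}_{\triangleq \Ome_i}\,\underbrace{[\H^{1/2}\TH_i^{-1}\H^{1/2}]}_{\triangleq \I_d+\Gam_i}\,\H^{1/2}\p^* ,
\]
\end{small}
where the simultaneous spectral approximation~\eqref{eq:dntm_uniform_sampling_local} gives $\|\Ome_i\|\le\varepsilon$ and $\|\Gam_i\|\le \frac{\varepsilon}{1-\varepsilon}$ for every $i$. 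Averaging yields
\begin{small}
\[
\H^{1/2}(\Tp-\p^*)
= \Big(\tfrac1m\sum_{i=1}^m \Ome_i + \tfrac1m\sum_{i=1}^m \Ome_i\Gam_i\Big)\H^{1/2}\p^* .
\]
\end{small}

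The crucial step, which I expect to be the heart of the argument, is to show that the \emph{first-order} term $\frac1m\sum_i \Ome_i$ vanishes \emph{exactly}; this is precisely where GIANT improves on plain SSN. Because the $n$ samples form a genuine disjoint partition into $m$ blocks of size $s=n/m$, each index is selected by exactly one machine, so the sampling matrices satisfy $\sum_{i=1}^m \S_i\S_i^T = \tfrac{n}{s}\I_n = m\,\I_n$, hence $\frac1m\sum_i \S_i\S_i^T = \I_n$. Plugging this into the definition~\eqref{eq:dntm_local_H} of $\TH_i$ gives $\frac1m\sum_i \TH_i = \frac1n\A^T\big(\frac1m\sum_i\S_i\S_i^T\big)\A + \gamma\I_d = \H$ exactly, and therefore $\frac1m\sum_i \Ome_i = \H^{-1/2}\big(\H - \frac1m\sum_i\TH_i\big)\H^{-1/2} = \0$. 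The leading term cancels, leaving only the product term $\frac1m\sum_i\Ome_i\Gam_i$, each factor of which is controlled by the spectral bounds above.

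With the first-order term eliminated I would bound the remainder by the triangle inequality:
\begin{small}
\[
\big\|\H^{1/2}(\Tp-\p^*)\big\|_2
\le \Big(\tfrac1m\sum_{i=1}^m \|\Ome_i\|\,\|\Gam_i\|\Big)\big\|\H^{1/2}\p^*\big\|_2
\le \tfrac{\varepsilon^2}{1-\varepsilon}\big\|\H^{1/2}\p^*\big\|_2 ,
\]
\end{small}
which is exactly $\alpha\,\|\H^{1/2}\p^*\|_2$ with $\alpha=\frac{\varepsilon^2}{1-\varepsilon}$; the quadratic dependence on $\varepsilon$ (versus the linear dependence in the SSN case) is the source of the improved sample complexity. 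Finally, using $\phi(\p^*)=-\|\H^{1/2}\p^*\|_2^2$ and $\phi(\Tp)-\phi(\p^*)=\|\H^{1/2}(\Tp-\p^*)\|_2^2$ as in~\eqref{eq:snn_lem_upperbound2}, I conclude $\phi(\Tp)-\phi(\p^*)\le \alpha^2\|\H^{1/2}\p^*\|_2^2 = -\alpha^2\phi(\p^*)$, i.e.\ $\phi(\Tp)\le(1-\alpha^2)\min_\p\phi(\p)$, which closes the proof. The only delicate point to state carefully is the disjointness of the partition used in the cancellation, since the spectral bound itself is proved by modeling each block as uniform sampling; both facts hold under the stated ``partitioned uniformly at random'' assumption.
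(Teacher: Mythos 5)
Your proposal is correct and follows essentially the same route as the paper's own proof: the same factorization $\H^{1/2}(\Tp_i-\p^*)=\Ome_i(\I_d+\Gam_i)\H^{1/2}\p^*$, the same exact cancellation of $\frac{1}{m}\sum_i\Ome_i$ via the disjointness of the partition (the paper phrases this by concatenating the $\S_i$ into a single permutation matrix, you phrase it as $\frac{1}{m}\sum_i\S_i\S_i^T=\I_n$, which is equivalent), and the same bound $\varepsilon\cdot\frac{\varepsilon}{1-\varepsilon}$ on the remaining product term. The delicate point you flag at the end --- that the cancellation needs a genuine partition while the spectral bound is proved by modeling each block as uniform sampling --- is exactly the assumption the paper also invokes, so no gap remains.
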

\begin{proof}
	We leave out all subscript $t$ for simplicity. It follows from condition~\eqref{eq:dntm_uniform_sampling_local} that there must exist a symmetric matrix $\Gam_i $ such that
	\begin{small}
		\[  \H^{\frac{1}{2}}\TH_i^{-1}\H^{\frac{1}{2}} \triangleq \I_d + \Gam_i \quad  \text{and} \quad  -\frac{\varepsilon}{1+\varepsilon}\I_d \preceq \Gam_i \preceq \frac{\varepsilon}{1-\varepsilon} \I_d. \]	
	\end{small}

	By the definition of $\Tp_i$ and $\p^*$, we have 
	\begin{small}
		\begin{align*}
		\H^{\frac{1}{2}}(\tilde{\p_i} - \p^*) 
		&=  \H^{\frac{1}{2}}(\TH_i^{-1} -\H^{-1})\g \\
		&= \H^{\frac{1}{2}}\H^{-1}(\H-\TH_i)\TH_i^{-1}\g\\
		&= \underbrace{  [\H^{-\frac{1}{2}}(\H - \TH_i)\H^{-\frac{1}{2}} ] }_{\triangleq \Ome_i} \underbrace{  [\H^{\frac{1}{2}}\TH_i^{-1}\H^{\frac{1}{2}} ] }_{\triangleq \I_d + \Gam_i}[\H^{-\frac{1}{2}}\g]\\
		& = \Ome_i(\I_d + \Gam_i) \H^{\frac{1}{2}}\p^*,  
		\end{align*}
	\end{small}
	where the second equation is the result of $\A^{-1} -\B^{-1} = \B^{-1}(\B-\A)\A^{-1}$ for nonsingular matrixs $\A$ and $\B$ and the last equation holds since $\H^{\frac{1}{2}}\p^* =\H^{-\frac{1}{2}}\g$.
	
	It follows that
	\begin{small}
		\begin{align*}
	\big\| \H^{\frac{1}{2}}
	& (\Tp-\p^*) \big\|_2 \le \big\|  \frac{1}{m} \sum_{i=1}^m \Ome_i(\I_d + \Gam_i)  \big\|  \big\|\H^{\frac{1}{2}}\p^*\big\|_2 \\
	&\le  \left(   \big\|  \frac{1}{m} \sum_{i=1}^m \Ome_i \big\| + \frac{1}{m}\sum_{i=1}^m  \big\|\Ome_i\big\|\big\|\Gam_i\big\| \right)\big\|\H^{\frac{1}{2}}\p^*\big\|_2  \numberthis \label{eq:dntm_upbound_1}
	\end{align*}
	\end{small}

	It follows from the assumption~\eqref{eq:dntm_uniform_sampling_local} that
	\begin{equation}\label{eq:dntm_upbound_2}
	\big\|\Ome_i\big\| \le \varepsilon \quad \text{and} \quad \big\|\Gam_i\big\| \le \frac{\varepsilon}{1-\varepsilon} 
	\end{equation}
	
	Let $ \S = \frac{1}{\sqrt{m}}[ \S_1,\cdots,\S_m]$ be the concatenation of $\S_1,\cdots,\S_m$. Then $\S \in \RB^{n \times ms}$ is a uniform sampling matrix which samples $n=ms$ rows. Actually, $\S$ is a permutation matrix, with every row and column containing precisely a single 1 with 0s everywhere else. Therefore, 
	\begin{small}
	\begin{equation}\label{eq:dntm_upbound_3}
	\frac{1}{m}\sum_{i=1}^m \Ome_i =\H^{-\frac{1}{2}}(\A^T\A - \A^T\S\S^T\A)\H^{-\frac{1}{2}} = \bf{0}
	\end{equation}
	\end{small}

	It follows from~\eqref{eq:dntm_upbound_1},~\eqref{eq:dntm_upbound_2} and~\eqref{eq:dntm_upbound_3} that
	\begin{small}
		\begin{equation}\label{eq:dntm_norm_df}
	\big\|\H^{\frac{1}{2}}(\Tp - \p^*)\big\|_2 \le \frac{\varepsilon^2}{1-\varepsilon} \big\|\H^{\frac{1}{2}}
	\p^*\big\|_2
	\end{equation}
	\end{small}

	By the definition of $\phi(\p)$ and~\eqref{eq:dntm_norm_df}, it follows that
	\begin{small}
	\begin{equation*}
		\phi(\Tp) - \phi(\p^*) = \big\|\H^{\frac{1}{2}}(\Tp-\p^*)\big\|_2^2 \le \alpha^2  \big\|\H^{\frac{1}{2}}\p^*\big\|_2^2,
	\end{equation*}
	\end{small}
	where  $\alpha = \frac{\varepsilon^2}{1-\varepsilon}$. Then the lemme follows from $\phi(\p^*)=-\big\|\H^{\frac{1}{2}}\p^*\big\|_2^2$.
\end{proof}

\paragraph{Error Recursion.} 
Plugging above two modifications into the analysis framework described in Appendix ~\ref{ap:snn_framework}, we can analyze the recursion of $\De_{t} = \w_t-\w^*$ for GIANT.

Let $\ed$ and $ \mu^{\gamma}$ respectively be the $\gamma$-ridge leverage score and $\gamma$-coherence of $\A_t$. From Lemma \ref{lem:dntm_s_uniform_sampling}, when $s = \Theta\left(  \frac{\mu^{\gamma}\ed}{\varepsilon^2} \log \frac{\ed m}{\delta} \right)$,  for each $i \in [m]$, $\TH_{t,i}$ is a $\varepsilon$ spectral approximation of $\H_t$. By Lemma \ref{lem:dntm_global_ant}, the global ANT $\Tp_t$, an average of all local APTs $\Tp_{t,i}$, is not far from $\p_t$ in terms of the value of $\phi(\cdot)$ with $\alpha = \frac{\varepsilon^2}{1-\varepsilon}$. It then follows from Lemma \ref{lem:snn_appro_newton_step} that~\eqref{eq:snn_error_recursion} still holds but with $\alpha = \frac{\varepsilon^2}{1-\varepsilon}$, i.e.,
\begin{small}
	\begin{equation*}
	\De_{t+1}^{T} \H_t \De_{t+1} \le L \big\|\De_t\big\|_2^2 \big\|\De_{t+1}\big\|_2 + \frac{\alpha^2}{1-\alpha^2} \De_t^{T}\H_t \De_t,
	\end{equation*}
\end{small}
which establishes the recursion of $\De_{t}$ for GIANT.

\subsection{Proof of GIANT for quadratic loss}\label{ap:giant_quad}
\begin{proof}[Proof of Theorem \ref{thm:giant_quad}]
	Since the loss is quadratic, w.l.o.g, let $\H_t \equiv \H$ and $\A_t \equiv \A$. Let $\ed$ and $ \mu^{\gamma}$ respectively be the $\gamma$-ridge leverage score and $\gamma$-coherence of $\A$, and $\kappa$ be the condition number of $\H$. Note that $L \equiv 0$ due to the quadratic loss. Let $\beta = \frac{\alpha}{\sqrt{1-\alpha^2}}$. Since $\varepsilon \le \frac{1}{2}$, then $\beta  \le 3\varepsilon^2$.

From the last part of the analysis in ~\ref{ap:dntm_framework}, $\De_t = \w_t - \w^*$ satisfies the error recursion inequality~\eqref{eq:snn_error_recursion} with $L = 0$, i.e., 
\begin{small}
	\begin{equation*}
	\De_{t+1}^{T} \H \De_{t+1} \le  \beta^2 \De_t^{T}\H\De_t
	\end{equation*}
\end{small}
By recursion, it follows that
\begin{small}
	\begin{equation*}
	\De_{t+1}^{T} \H \De_{t+1}   \le \beta^{2(t+1)}\De_0^{T}\H \De_0. 
	\end{equation*}
\end{small}
Then the theorem follows.
\end{proof}

\subsection{Proof of GIANT for non-quadratic loss}\label{ap:giant_nonquad}
\begin{proof}[Proof of Theorem \ref{thm:giant_nonquad}]
	Let $\ed = \ed(\A_t), \mu^{\gamma}=\mu^{\gamma}(\A_t)$, $\alpha=\frac{\varepsilon}{1-\varepsilon}$ and $\beta = \frac{\alpha}{\sqrt{1-\alpha^2}}$. Since $\varepsilon \le \frac{1}{2}$, then $\beta  \le 3\varepsilon^2$.
	
	From the last part of the analysis in~\ref{ap:dntm_framework}, $\De_t = \w_t - \w^*$ satisfies the error recursion inequality~\eqref{eq:snn_error_recursion}, i.e., 
	\begin{small}
		\[ 	\De_{t+1}^{T} \H_t \De_{t+1} \le L \big\|\De_t\big\|_2^2 \big\|\De_{t+1}\big\|_2 + \beta^2\De_t^{T}\H_t \De_t. \]
	\end{small}
	
	Let $\kappa_t = \tfrac{ \sigma_{\max} (\H_t) }{ \sigma_{\min} (\H_t) }$ is the condition number. By plugging  $\H_t \preceq \sigma_{\max}(\H_t) \I_d$ and $\sigma_{\min}(\H_{t})\I_d \preceq \H_t$ into~\eqref{eq:snn_error_recursion}, we can obtain a one-variable quadratic inequality about $\big\|\De_{t+1}\big\|_2$, which is almost the same form as~\eqref{eq:snn_local_quadratic_inequality} except the value of $\beta$.  Solving it, we have
	\begin{small}
		\begin{equation*}
		\big\|\De_{t+1}\big\|_2 \le \frac{L \big\|\De_t\big\|_2^2}{\sigma_{\min}(\H_t)} +\beta\sqrt{\kappa_t}\big\|\De_{t}\big\|_2,
		\end{equation*}
	\end{small}

 Then the theorem follows from $\beta  \le 3\varepsilon^2$.
\end{proof}

\section{Convergence of SSPN}
Since the proximal mapping of the non-smooth part $r(\cdot)$ is used, rather than direct gradient descend, the analysis of Approximate Newton Step should be modified. We first introduce two properties of proximal mapping, and then provides the error recursion of $\De_{t} =\w_t-\w^*$ for SSPN. Based on that error recursion, we can prove the global convergence for quadratic loss and the local convergence for non-quadratic loss for SSPN.

\subsection{Proximal mapping}

The definition of the proximal operater is merely for theoretical analysis. So we move it to the appendix. The proximal mapping is defined as
\begin{small}
	\begin{equation*}
	\prox_r^{\Q} (\w) 
	\: = \: \argmin_{\z}  \tfrac{1}{2}\|\z - \w\|_{\PP}^2+r(\z)
	\end{equation*}
\end{small}%
which involve the current point $\w$, the convex (perhaps non-smooth) function $r(\cdot)$, and the SPSD precondition matrix $\Q$. 
The update rule of SSPN is:
\begin{small}
	\begin{align} \label{eq:sspn_subproblem}
	& \w_{t+1} 
	\: = \: \prox_r^{\TH_t} \big(\w_t - \TH_{t}^{-1}\g_t \big) \nonumber \\
	&= \: \argmin_{\z}  \tfrac{1}{2} \big\|\z - (\w_t-\TH_t^{-1}\g_t) \big\|_{\TH_t}^2+ r (\z ).
	\end{align}
\end{small}%
SSN can also be written in this form, with $r (\cdot ) = 0$.

The proximal mapping enjoys the nonexpansiveness property and fixed point property~\cite{lee2014proximal}.

\begin{lemma}[Nonexpansiveness]\label{lem:sspn_nonexpansiveness}
	Let $r: \RB^d \rightarrow \RB$ be a convex function and $\H \in \RB^{d \times d}$ be a SPSD matrix. 
	The scaled proximal mapping is nonexpansive, i.e., for all $\w_1$ and and $\w_2$,
	\begin{small}
		\[  \big\|  \prox_r^\H(\w_1) -\prox_r^\H(\w_2)\big\|_{\H} \le \big\| \w_1 - \w_2\big\|_{\H}.   \]
	\end{small}
\end{lemma}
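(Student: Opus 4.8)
The plan is to characterize the two prox outputs by their first-order optimality conditions and then exploit monotonicity of the subdifferential of the convex function $r$. Write $\z_1 = \prox_r^{\H}(\w_1)$ and $\z_2 = \prox_r^{\H}(\w_2)$. Since each $\z_i$ minimizes the convex objective $\tfrac{1}{2}\|\z - \w_i\|_{\H}^2 + r(\z)$, and the quadratic term is smooth with gradient $\H(\z - \w_i)$, the subdifferential sum rule yields $0 \in \H(\z_i - \w_i) + \partial r(\z_i)$, that is
\[
\H(\w_i - \z_i) \: \in \: \partial r(\z_i), \qquad i = 1, 2 .
\]

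Next I would invoke the monotonicity of $\partial r$: for a convex $r$, any subgradients $\g_1 \in \partial r(\z_1)$ and $\g_2 \in \partial r(\z_2)$ satisfy $(\z_1 - \z_2)^T(\g_1 - \g_2) \ge 0$. Applying this with $\g_i = \H(\w_i - \z_i)$ and rearranging gives the key inequality
\[
(\z_1 - \z_2)^T \H (\w_1 - \w_2) \: \ge \: (\z_1 - \z_2)^T \H (\z_1 - \z_2) \: = \: \big\|\z_1 - \z_2\big\|_{\H}^2 .
\]

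Finally, I would bound the left-hand side by Cauchy--Schwarz in the $\H$-(semi)inner product, $(\z_1 - \z_2)^T \H (\w_1 - \w_2) \le \|\z_1 - \z_2\|_{\H}\,\|\w_1 - \w_2\|_{\H}$, which holds because $\H$ is SPSD. Combining the two estimates gives $\|\z_1-\z_2\|_{\H}^2 \le \|\z_1-\z_2\|_{\H}\,\|\w_1 - \w_2\|_{\H}$; dividing through by $\|\z_1-\z_2\|_{\H}$ (the case $\|\z_1-\z_2\|_{\H} = 0$ being trivial) delivers the nonexpansiveness claim. The only delicate points are justifying the subdifferential sum rule (immediate, since the quadratic term is smooth) and verifying Cauchy--Schwarz for the possibly degenerate $\H$-seminorm when $\H$ is merely SPSD rather than SPD; both are routine, so I expect no substantial obstacle. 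In the SSPN application the preconditioner $\TH_t$ is in fact SPD, which removes even these minor caveats and guarantees the prox minimizers are unique.
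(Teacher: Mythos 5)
Your proof is correct. The paper does not actually prove this lemma itself---it states it as a known property and cites the proximal-Newton paper of Lee et al.~(2014)---and your argument (first-order optimality, monotonicity of $\partial r$, then Cauchy--Schwarz in the $\H$-seminorm) is precisely the standard proof given in that reference, including the correct handling of the degenerate case $\|\z_1-\z_2\|_{\H}=0$ and the observation that $\TH_t$ is SPD in the SSPN application.
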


\begin{lemma}[Fixed Point Property of Minimizers]\label{lem:sspn_fixpoint}
	Let $L: \RB^d \rightarrow \RB$ be convex and twice differential and $r: \RB^d \rightarrow \RB$ be convex. Let $\g^*$ be the gradient of $L$ at $\w^*$, i.e., $\g^*=\nabla L(\w_t)$. Then for any SPSD matrix $\H$, $\w^*$ minimizes $F(\w)=L(\w)+r(\w)$ if and only if 
	\begin{small}
		\[ \w^* = \prox_r^{\H}(\w^*-\H^{-1}\g^*).  \]
	\end{small}
\end{lemma}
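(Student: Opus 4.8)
The plan is to reduce each direction of the claimed equivalence, via first-order optimality conditions expressed through subdifferentials, to the single inclusion $-\g^* \in \partial r(\w^*)$, after which the lemma follows by transitivity. I will treat $\H$ as symmetric positive definite, which the statement tacitly requires since $\H^{-1}$ appears; this also guarantees that the scaled quadratic $\tfrac{1}{2}\|\cdot\|_{\H}^2$ is strictly convex, so that the proximal subproblem has a unique minimizer and $\prox_r^{\H}(\cdot)$ is a genuine single-valued function.

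First I would characterize the left-hand side. Because $L$ is convex and differentiable while $r$ is convex, the subdifferential sum rule yields $\partial F(\w^*) = \g^* + \partial r(\w^*)$, where $\g^* = \nabla L(\w^*)$. Hence $\w^*$ minimizes $F$ if and only if $\0 \in \g^* + \partial r(\w^*)$, that is,
\[ -\g^* \in \partial r(\w^*). \]
Next I would characterize the right-hand side. Setting $\y = \w^* - \H^{-1}\g^*$, the point $\prox_r^{\H}(\y)$ is the unique minimizer of the strictly convex function $h(\z) = \tfrac{1}{2}\|\z - \y\|_{\H}^2 + r(\z)$, whose optimality condition is $\0 \in \H(\z - \y) + \partial r(\z)$, equivalently $\H(\y - \z) \in \partial r(\z)$.

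The crux is then a one-line algebraic identity: substituting the candidate $\z = \w^*$ gives $\H(\y - \w^*) = \H(-\H^{-1}\g^*) = -\g^*$, so the fixed-point condition $\w^* = \prox_r^{\H}(\y)$ holds exactly when $-\g^* \in \partial r(\w^*)$. Since both the minimizer condition and the fixed-point condition have been shown equivalent to this same inclusion, they are equivalent to one another, which proves the lemma. I do not anticipate a substantial obstacle; the only points needing justification are the subdifferential sum rule (valid because $L$ is differentiable, so no constraint qualification is required) and the optimality condition for the strictly convex proximal subproblem, both of which are standard in convex analysis, and the algebraic cancellation $\H(\y - \w^*) = -\g^*$ that makes the two inclusions coincide.
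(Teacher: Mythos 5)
Your proof is correct: reducing both the minimizer condition and the fixed-point condition to the single inclusion $-\g^* \in \partial r(\w^*)$ via the subdifferential sum rule and the optimality condition for the proximal subproblem is exactly the standard argument, and your observation that $\H$ must in fact be positive definite (not merely SPSD) for $\H^{-1}$ and the single-valuedness of $\prox_r^{\H}$ to make sense is a legitimate correction to the statement. The paper itself gives no proof of this lemma, deferring to the cited proximal-Newton reference, whose proof proceeds along the same lines as yours.
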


\subsection{Analysis of Approximate Newton Step}

\begin{lemma}[Error Recursion]\label{lem:sspn_error}
Assume $(1-\varepsilon)\H_t \preceq \TH_t \preceq (1+\varepsilon) \H_t$ and Assumption ~\ref{am:lipschitz} (i.e., the Hessian Lipschitz continuity) hold. Let $\De_{t} = \w_t -\w^*$, we have
\begin{small}
\[ \big\|\De_{t+1}\big\|_{\H_t} \le \frac{1}{1-\varepsilon} \left(  \varepsilon \big\|\De_{t}\big\|_{\H_{t}} + \frac{L}{2\sqrt{\sigma_{\min}(\H_{t})} } \big\|\De_{t}\big\|_2^2 \right).  \]
\end{small}
\end{lemma}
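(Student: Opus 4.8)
The plan is to combine the two proximal-mapping properties with a mean-value (integral) expansion of the gradient, carrying everything in the $\TH_t$-norm until the very last step. First I would invoke the fixed-point property (Lemma~\ref{lem:sspn_fixpoint}), which holds for any SPSD preconditioner, to express the optimum as $\w^* = \prox_r^{\TH_t}(\w^* - \TH_t^{-1}\g^*)$ with $\g^* = \nabla F(\w^*)$. Since the SSPN iterate satisfies $\w_{t+1} = \prox_r^{\TH_t}(\w_t - \TH_t^{-1}\g_t)$ from \eqref{eq:sspn_subproblem}, applying nonexpansiveness (Lemma~\ref{lem:sspn_nonexpansiveness}) in the $\TH_t$-norm gives
\[
\big\|\De_{t+1}\big\|_{\TH_t} \;\le\; \big\|\De_t - \TH_t^{-1}(\g_t - \g^*)\big\|_{\TH_t}.
\]
Next I would write $\g_t - \g^* = \bar{\H}_t \De_t$ using the averaged Hessian $\bar{\H}_t = \int_0^1 \nabla^2 F(\w^* + \tau\De_t)\,d\tau$ and split $\bar{\H}_t = \H_t + (\bar{\H}_t - \H_t)$, so that the triangle inequality bounds the right-hand side by a \emph{spectral} term $\|(\I - \TH_t^{-1}\H_t)\De_t\|_{\TH_t}$ plus a \emph{Lipschitz} term $\|\TH_t^{-1}(\bar{\H}_t - \H_t)\De_t\|_{\TH_t}$.

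For the spectral term, setting $\z = \H_t^{1/2}\De_t$ I would rewrite it as $\|\TH_t^{-1/2}(\TH_t - \H_t)\H_t^{-1/2}\z\|_2$ and then factor $\TH_t^{-1/2}(\TH_t - \H_t)\H_t^{-1/2} = (\TH_t^{-1/2}\H_t^{1/2})\,(\H_t^{-1/2}(\TH_t - \H_t)\H_t^{-1/2})$; the spectral assumption gives $\|\H_t^{-1/2}(\TH_t - \H_t)\H_t^{-1/2}\|_2 \le \varepsilon$ while $\TH_t \succeq (1-\varepsilon)\H_t$ gives $\|\TH_t^{-1/2}\H_t^{1/2}\|_2 \le (1-\varepsilon)^{-1/2}$, so the spectral term is at most $\frac{\varepsilon}{\sqrt{1-\varepsilon}}\|\De_t\|_{\H_t}$. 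For the Lipschitz term, Assumption~\ref{am:lipschitz} yields $\|\bar{\H}_t - \H_t\|_2 \le \frac{L}{2}\|\De_t\|_2$ (the factor $\tfrac12$ coming from integrating $1-\tau$ over $[0,1]$), and again $\TH_t \succeq (1-\varepsilon)\H_t$ gives $\|\TH_t^{-1/2}\|_2 \le ((1-\varepsilon)\sigma_{\min}(\H_t))^{-1/2}$, so this term is bounded by $\frac{L\|\De_t\|_2^2}{2\sqrt{(1-\varepsilon)\sigma_{\min}(\H_t)}}$.

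Finally I would pass from the $\TH_t$-norm back to the $\H_t$-norm on the left via $\|\De_{t+1}\|_{\H_t} \le (1-\varepsilon)^{-1/2}\|\De_{t+1}\|_{\TH_t}$; the resulting extra factor $(1-\varepsilon)^{-1/2}$ multiplies both bounds and produces exactly $\frac{\varepsilon}{1-\varepsilon}\|\De_t\|_{\H_t}$ and $\frac{1}{1-\varepsilon}\cdot\frac{L}{2\sqrt{\sigma_{\min}(\H_t)}}\|\De_t\|_2^2$, whose sum is the claimed recursion. I expect the delicate part to be the constant bookkeeping in the spectral term: the shortcut of bounding $\|(\I - \TH_t^{-1}\H_t)\De_t\|_{\TH_t} \le \frac{\varepsilon}{1-\varepsilon}\|\De_t\|_{\TH_t}$ and then converting $\|\De_t\|_{\TH_t} \le \sqrt{1+\varepsilon}\,\|\De_t\|_{\H_t}$ yields a strictly worse constant $\frac{\varepsilon\sqrt{1+\varepsilon}}{(1-\varepsilon)^{3/2}}$, so obtaining the sharp bound requires estimating the spectral term directly against $\|\De_t\|_{\H_t}$ through the asymmetric factorization above, rather than detouring through the $\TH_t$-norm of $\De_t$.
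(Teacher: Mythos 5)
Your proposal is correct and follows essentially the same route as the paper's proof: fixed-point property plus nonexpansiveness in the $\TH_t$-norm, the integral (mean-value) representation of $\g_t-\g^*$ splitting the error into the spectral term $\|(\TH_t-\H_t)\De_t\|_{\TH_t^{-1}}$ and the Lipschitz term $\|\Z_t\|_{\TH_t^{-1}}$ with $\|\Z_t\|_2\le\frac{L}{2}\|\De_t\|_2^2$, and the final $(1-\varepsilon)^{-1/2}$ conversion back to the $\H_t$-norm. Your asymmetric factorization of the spectral term reproduces exactly the paper's bound $\frac{\varepsilon}{\sqrt{1-\varepsilon}}\|\De_t\|_{\H_t}$, so the constants match.
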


\begin{proof}
Recall that the updating rule is
\begin{small}
\[\w_{t+1} = \prox_r^{\TH_t}\left(\w_t - \TH_{t}^{-1}\g_t\right).\]
\end{small}

Let $\w^*$ be the minimizer and $\g^*=\nabla F(\w^*)$ be the its gradient of the smooth part of the objective (i.e., $F (\w ) = \frac{1}{n} \sum_{j=1}^n l_j (\x_j^T \w ) + \frac{\gamma }{2} \big\| \w\big\|_2^2 $). It follows that
\begin{small}
\begin{align*}
\big\|&\w_{t+1}-\w^*\big\|_{\TH_t}^2 \\
&=\big\| \prox_r^{\TH_t}\left(\w_t - \TH_{t}^{-1}\g_t\right) -
\prox_r^{\TH_t}\left(\w^* - \TH_{t}^{-1}\g^*\right)\big\|_{\TH_t}^2\\
&\le \big\|\left(\w_t - \TH_{t}^{-1}\g_t\right) -\left(\w^* - \TH_{t}^{-1}\g^*\right)\big\|_{\TH_t}^2\\
&=\big\| \TH_t\left(\w_t - \w^*\right)  -\left(\g_t - \g^*\right) \big\|_{\TH_t^{-1}}^2,  \numberthis \label{eq:sspn_upbound_Dt+1}
\end{align*}
\end{small}
where the first equality results from Lemma \ref{lem:sspn_fixpoint} and the first inequality results from Lemma \ref{lem:sspn_nonexpansiveness}.

Let $\De_{t}=\w_t-\w^*$ and $\Z_t=\H_t \De_{t} - (\g_t-\g^*)$ for short. Then
\begin{small}
\begin{align*}
\big\|&\De_{t+1}\big\|_{\TH_t} \\
&\le \big\|\TH_t \De_{t} - (\g_t-\g^*)\big\|_{\TH_t^{-1}}\\
&\le  \big\|\Z_t\big\|_{\TH_t^{-1}}  +  \big\|(\TH_t - \H_t) \De_{t}\big\|_{\TH_t^{-1}}\\
&\le \frac{1}{\sqrt{1-\varepsilon}} \big\|\Z_t\big\|_{\H_t^{-1}}  + \frac{\varepsilon}{\sqrt{1-\varepsilon}} \big\|\H_t^{\frac{1}{2}}  \De_{t}\big\|_2\\
& \le \frac{1}{\sqrt{\sigma_{\min}(\H_t)}\sqrt{1-\varepsilon}} \big\|\Z_t\big\|_2  + \frac{\varepsilon}{\sqrt{1-\varepsilon}} \big\| \De_{t}\big\|_{\H_t}.  \numberthis \label{eq:sspn_upbound_Dt+1_prox-H}
\end{align*}
\end{small}
The first inequality is a rephrasing of~\eqref{eq:sspn_upbound_Dt+1}. The second inequality follows from the triangle inequality and the definition of $\Z_t$. The third inequality holds due to $\TH_t^{-1} \preceq \frac{1}{1-\varepsilon}\H_t^{-1}$ and $\TH_t -\H_t \preceq \varepsilon \H_t$.

Next we upper bound $\big\|\Z_t\big\|_2$. It follows that
\begin{align*}
\big\|&\Z_t\big\|_2 = \big\|\H_t \De_{t} - (\g_t-\g^*)\big\|_2\\
&=\big\|\H_t \De_{t} - \int_{0}^1 \H(\w^*+\tau(\w_t-\w^*)) d\tau \De_{t}\big\|_2\\
&\le \big\|  \De_{t}\big\|_2 \int_{0}^1 \big\| \H(\w_t) - \H(w^*+\tau(\w_t-\w^*))\big\|_2 d\tau\\
&\le  \big\|  \De_{t}\big\|_2 \int_{0}^1 (1-\tau)L \big\|\w_t-\w^*\big\|_2 d\tau \\
&= \frac{L}{2}\big\|\De_{t}\big\|_2^2.  \numberthis \label{eq:sspn_up_Zt}
\end{align*}
The second line holds due to
\begin{small}
	\[\g_t-\g^*  = \int_{0}^1 \H(\w^*+\tau(\w_t-\w^*)) d\tau \De_{t}.\]
\end{small}
The third line follows from Cauchy inequality and the definition $\H_t = \H(\w_t)$. The last inequality holds due to the Hessian Lipschitz continuity (Assumption ~\ref{am:lipschitz}).

Note that $\big\|\De_{t+1}\big\|_{\H_t} \le \frac{1}{\sqrt{1-\varepsilon}}  \big\| \De_{t+1}\big\|_{\TH_t}.$ Thus the lemma follows from this equality,~\eqref{eq:sspn_upbound_Dt+1_prox-H} and~\eqref{eq:sspn_up_Zt}, i.e.,
\begin{small}
\[\big\|  \De_{t+1}\big\|_{\H_t} \le \frac{1}{1-\varepsilon} \left[ \frac{L}{2\sqrt{\sigma_{\min}(\H_t)}} \big\|\De_t\big\|_2^2 + \varepsilon  \big\| \De_{t}\big\|_{\H_t} \right]. \]
\end{small}
\end{proof}

\subsection{Proof of SSPN for quadratic loss}\label{ap:sspn_quad}
\begin{theorem}[Formal statement of Theorem \ref{thm:sspn} for quadratic loss]
	\label{thm:sspn_global}
	Let $\ed$ and $ \mu^{\gamma}$ respectively be the $\gamma$-ridge leverage score and $\gamma$-coherence of $\A$, and $\kappa$ be the condition number of $\H$.
	Let $\varepsilon \in (0, \frac{1}{4})$ and $\delta \in (0, 1)$ be any user-specified constants.
	Assume each loss function $l_i(\cdot)$ is quadratic. For a sufficiently large sub-sample size:
	\begin{small}
		\begin{equation*}
		s \: = \: \Theta \Big(  \tfrac{\mu^{\gamma}\ed}{\varepsilon^2} \log \tfrac{\ed}{\delta} \Big)  ,
		\end{equation*}
	\end{small}%
	with probability at least $ 1-\delta$,
	\begin{small}
		\begin{equation*}
		\big\| \De_t \big\|_2
		\: \leq \: \epsilon^t \sqrt{\kappa}  \, \big\| \De_0 \big\|_2 .
		\end{equation*}
	\end{small}
\end{theorem}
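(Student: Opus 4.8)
The plan is to reduce the statement to the two building blocks already established: the spectral-approximation guarantee for uniform sampling (Lemma~\ref{lem:snn_uniform_subsampling}) and the one-step error recursion for SSPN (Lemma~\ref{lem:sspn_error}). The only structural inputs from the quadratic assumption are that the Hessian is constant across iterations, so I may write $\H_t \equiv \H$ and $\A_t \equiv \A$, and that a quadratic objective has an exactly constant Hessian and is therefore $L$-Lipschitz with $L = 0$. These two facts collapse the error recursion into a pure linear contraction, after which the remainder is bookkeeping.

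Concretely, I would first invoke Lemma~\ref{lem:snn_uniform_subsampling} with the stated $s = \Theta(\mu^{\gamma}\ed/\varepsilon^2 \cdot \log(\ed/\delta))$ to conclude that, with probability at least $1-\delta$, the event $(1-\varepsilon)\H \preceq \TH \preceq (1+\varepsilon)\H$ holds. Conditioning on this event, I would apply Lemma~\ref{lem:sspn_error} and substitute $L = 0$: the second term vanishes and the recursion becomes $\|\De_{t+1}\|_{\H} \le \tfrac{\varepsilon}{1-\varepsilon}\|\De_t\|_{\H}$. Unrolling over $t$ steps gives $\|\De_t\|_{\H} \le \big(\tfrac{\varepsilon}{1-\varepsilon}\big)^t \|\De_0\|_{\H}$. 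Finally I would transfer from the $\H$-norm to the Euclidean norm using $\sigma_{\min}(\H)\|\x\|_2^2 \le \|\x\|_{\H}^2 \le \sigma_{\max}(\H)\|\x\|_2^2$; this inflates the bound by $\sqrt{\sigma_{\max}(\H)/\sigma_{\min}(\H)} = \sqrt{\kappa}$ and yields $\|\De_t\|_2 \le \sqrt{\kappa}\,\big(\tfrac{\varepsilon}{1-\varepsilon}\big)^t \|\De_0\|_2$. Since $\varepsilon \le \tfrac14$, the contraction factor $\tfrac{\varepsilon}{1-\varepsilon}$ is a small constant multiple of $\varepsilon$ (bounded by $\tfrac{4}{3}\varepsilon < 1$), so after the same constant-absorbing convention used in the proof of Theorem~\ref{thm:ssn_quad} this matches the $\varepsilon^t$ appearing in the statement; equivalently one may run the spectral approximation at parameter $\varepsilon' = \Theta(\varepsilon)$, which changes $s$ only by a constant.

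I do not expect a genuine obstacle here, because the non-smoothness of $r$ has already been fully absorbed into Lemma~\ref{lem:sspn_error}: that lemma is precisely where the proximal structure enters, through the nonexpansiveness (Lemma~\ref{lem:sspn_nonexpansiveness}) and fixed-point (Lemma~\ref{lem:sspn_fixpoint}) properties of the scaled proximal operator, together with the integral mean-value bound on $\g_t - \g^*$. The consequence is that, unlike the smooth SSN case, I cannot reuse the auxiliary-function machinery of Appendix~\ref{ap:snn_framework}; the recursion must be supplied by the proximal lemma. Granting that lemma, the only remaining steps are the linear unrolling and the $\H$-norm-to-$\ell_2$-norm conversion, both routine, and the single point needing minor care is the constant matching between $\tfrac{\varepsilon}{1-\varepsilon}$ and the claimed contraction rate, handled exactly as in the quadratic SSN proof.
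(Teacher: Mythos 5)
Your proposal follows essentially the same route as the paper's proof: invoke the uniform-sampling spectral guarantee, apply Lemma~\ref{lem:sspn_error} with $L=0$ to get the contraction $\|\De_{t+1}\|_{\H} \le \tfrac{\varepsilon}{1-\varepsilon}\|\De_t\|_{\H}$, unroll, and convert to the $\ell_2$ norm at the cost of $\sqrt{\kappa}$. Your explicit handling of the $\tfrac{\varepsilon}{1-\varepsilon}$ versus $\varepsilon$ constant is in fact slightly more careful than the paper, which leaves that rescaling implicit.
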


\begin{proof}[Proof of Theorem \ref{thm:sspn_global}]
Under the same condition as Theorem \ref{thm:ssn_quad}, by Lemma \ref{lem:sspn_error}, it follows that
\begin{small}
	\[\big\|  \De_{t+1}\big\|_{\H_t}\le \frac{1}{1-\varepsilon} \left(\frac{L}{2\sqrt{\sigma_{\min}(\H_t)} } \big\|\De_t\big\|_2^2 + \varepsilon \big\| \De_{t}\big\|_{\H_t} \right), \]
\end{small}
	
Since the loss is quadratic then $L \equiv 0$. Let $\H_t \equiv \H$ and $\A_t \equiv \A$. Let $\ed$ and $ \mu^{\gamma}$ respectively be the $\gamma$-ridge leverage score and $\gamma$-coherence of $\A$, and $\kappa$ be the condition number of $\H$. Then we have
\begin{small}
	\[\big\|  \De_{t+1}\big\|_{\H}\le \frac{\varepsilon}{1-\varepsilon}  \big\| \De_{t}\big\|_{\H}, \]
\end{small}

By recursion, it follows that
\begin{small}
\[ \big\|  \De_{t+1}\big\|_2 \le \beta^t \sqrt{\kappa} \big\| \De_{t}\big\|_2,  \]
\end{small}
where $\beta = \frac{\varepsilon}{1-\varepsilon}$ and $\kappa = \frac{\sigma_{\max}(\H)}{\sigma_{\min}(\H)}$ is the conditional number.
\end{proof}

\subsection{Proof of SSPN for non-quadratic loss}\label{ap:sspn_nonquad}
\begin{theorem} [Formal statement of Theorem \ref{thm:sspn} for non-quadratic loss]
	\label{thm:sspn_local}
	Let $\ed, \mu^{\gamma}$ respectively be the $\gamma$-ridge leverage score and $\gamma$-coherence of $\A_t$.
	Let $\varepsilon \in (0, \frac{1}{4})$ and $\delta \in (0, 1)$ be any user-specified constants.
	Let Assumption~\ref{am:lipschitz} be satisfied.
	For a sufficiently large sub-sample size:
	\begin{small}
		\begin{equation*}
		s \: = \: \Theta \Big(  \tfrac{\mu^{\gamma}\ed}{\varepsilon^2} \log \tfrac{\ed}{\delta} \Big)  ,
		\end{equation*}
	\end{small}%
	with probability at least $ 1-\delta$,
	\begin{small}
		\begin{equation*} 
		\big\|\De_{t+1} \big\|_2
		\: \leq \: \varepsilon \, \sqrt{ \kappa_t } \, \big\|\De_{t} \big\|_2
		+  \tfrac{L}{\sigma_{\min}(\H_t)} \, \big\|\De_{t} \big\|_2^2 ,
		\end{equation*}
	\end{small}%
	where $\kappa_t = \tfrac{ \sigma_{\max} (\H_t) }{ \sigma_{\min} (\H_t) }$ is the condition number.
\end{theorem}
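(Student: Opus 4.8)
The plan is to obtain Theorem~\ref{thm:sspn_local} as a direct consequence of the SSPN error recursion (Lemma~\ref{lem:sspn_error}) together with the uniform sampling guarantee (Lemma~\ref{lem:snn_uniform_subsampling}), in the same spirit as the non-quadratic SSN proof of Theorem~\ref{thm:ssn_nonquad}. First I would invoke Lemma~\ref{lem:snn_uniform_subsampling}: with $\ed=\ed(\A_t)$ and $\mu^{\gamma}=\mu^{\gamma}(\A_t)$, the stated sample size $s=\Theta(\frac{\mu^{\gamma}\ed}{\varepsilon^2}\log\frac{\ed}{\delta})$ guarantees that the spectral approximation $(1-\varepsilon)\H_t \preceq \TH_t \preceq (1+\varepsilon)\H_t$ holds with probability at least $1-\delta$. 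I would then condition on this event for the rest of the argument, so that the spectral-approximation hypothesis of Lemma~\ref{lem:sspn_error} is satisfied.

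Second, under this event and Assumption~\ref{am:lipschitz}, Lemma~\ref{lem:sspn_error} immediately supplies the recursion in the $\H_t$-norm,
\[
\|\De_{t+1}\|_{\H_t} \;\le\; \frac{1}{1-\varepsilon}\Big(\varepsilon\,\|\De_t\|_{\H_t} + \frac{L}{2\sqrt{\sigma_{\min}(\H_t)}}\,\|\De_t\|_2^2\Big).
\]
The remaining task is a pure norm-conversion step. Using $\sigma_{\min}(\H_t)\I_d \preceq \H_t \preceq \sigma_{\max}(\H_t)\I_d$, I would lower-bound the left side by $\sqrt{\sigma_{\min}(\H_t)}\,\|\De_{t+1}\|_2$ and upper-bound $\|\De_t\|_{\H_t}$ on the right by $\sqrt{\sigma_{\max}(\H_t)}\,\|\De_t\|_2$. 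Dividing through by $\sqrt{\sigma_{\min}(\H_t)}$ and recognizing $\kappa_t=\sigma_{\max}(\H_t)/\sigma_{\min}(\H_t)$ yields
\[
\|\De_{t+1}\|_2 \;\le\; \frac{\varepsilon}{1-\varepsilon}\sqrt{\kappa_t}\,\|\De_t\|_2 + \frac{L}{2(1-\varepsilon)\sigma_{\min}(\H_t)}\,\|\De_t\|_2^2 .
\]

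Finally I would absorb the $\frac{1}{1-\varepsilon}$ prefactors into the stated coefficients, exactly as in the quadratic case (Theorem~\ref{thm:sspn_global}) and the SSN treatment. Since $\varepsilon\le\frac14$ gives $\frac{1}{2(1-\varepsilon)}<1$, the second-order term is already dominated by $\frac{L}{\sigma_{\min}(\H_t)}\|\De_t\|_2^2$; and running Lemma~\ref{lem:snn_uniform_subsampling} at spectral accuracy $\varepsilon/2$ (which changes $s$ only by a constant factor hidden in $\Theta$) makes $\frac{\varepsilon/2}{1-\varepsilon/2}\le\varepsilon$, producing the linear coefficient $\varepsilon\sqrt{\kappa_t}$ of the statement. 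I expect no genuine obstacle: all the analytical weight sits in Lemma~\ref{lem:sspn_error}, whose proof leans on the nonexpansiveness (Lemma~\ref{lem:sspn_nonexpansiveness}) and fixed-point (Lemma~\ref{lem:sspn_fixpoint}) properties of the scaled proximal map together with the integral remainder bound from Lipschitz continuity. The only thing requiring care is the constant bookkeeping in passing from the $\H_t$-norm to the $\ell_2$-norm and in matching the $\frac{1}{1-\varepsilon}$ factors to the clean coefficients claimed in the theorem.
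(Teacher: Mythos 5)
Your proposal is correct and follows essentially the same route as the paper: invoke the uniform-sampling spectral guarantee, apply the SSPN error recursion of Lemma~\ref{lem:sspn_error}, and convert from the $\H_t$-norm to the $\ell_2$-norm via $\sigma_{\min}(\H_t)\I_d \preceq \H_t \preceq \sigma_{\max}(\H_t)\I_d$. If anything, your bookkeeping of the $\tfrac{1}{1-\varepsilon}$ prefactors (rescaling the spectral accuracy to recover the clean coefficient $\varepsilon\sqrt{\kappa_t}$) is more careful than the paper's own proof, which stops at a $2\varepsilon\sqrt{\kappa_t}$ coefficient and asserts the theorem follows.
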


\begin{proof}[Proof of Theorem \ref{thm:sspn_local}]
	By plugging $\sigma_{\min}(\H_{t})\I_d \preceq \H_t \preceq \sigma_{\max}(\H_t) \I_d$ into the result of Lemma \ref{lem:sspn_error}, it follows that
	\begin{small}
		\[\big\|  \De_{t+1}\big\|_2\le \frac{1}{1-\varepsilon} \left(\frac{L}{2\sigma_{\min}(\H_t) } \big\|\De_t\big\|_2^2 + \varepsilon \sqrt{\kappa_t} \big\| \De_{t}\big\|_{\H_t} \right), \]
	\end{small}
where $\kappa_t = \tfrac{ \sigma_{\max} (\H_t) }{ \sigma_{\min} (\H_t) }$ is the condition number. 

Since $\varepsilon \le \frac{1}{2}$, $\frac{1}{1-\varepsilon} $ is bounded by 2. Thus we have
\begin{small}
	\[\big\|  \De_{t+1}\big\|_2\le \frac{L}{\sigma_{\min}(\H_t) } \big\|\De_t\big\|_2 + 2\varepsilon \sqrt{\kappa_t} \big\| \De_{t}\big\|_{\H_t}, \]
\end{small}
which proves this theorem.
\end{proof}

\section{Inexact Solution to Sub-Problems}
The computation complexity can be alleviated when the Conjugate Gradient (CG) method is used to compute the inexact Newton step. This methodology has been discussed and practiced before ~\cite{wang2018giant}. In this section, we prove that SSN and GIANT can benefit from inexact Newton step. What's more, we provide an theoretical bound for inexact solution for SSPN.

The framework described in Appendix \ref{ap:snn_framework} can help us to prove results for SSN and GIANT.
Since CG produces an approximate solution for the linear system which the approximate Newton direction satisfies, the analysis of Approximate Newton Direction in Appendix \ref{ap:snn_framework} should be modified. We can prove that when the inexact solution satisfies the particular stopping condition, it is close to the exact Newton direction $\p_t$ in terms of the value of $\phi_t(\cdot)$.

\subsection{Inexactly solving for SSN} \label{ap:inexact:ssn}
In the $t$-th iteration, the exact solution is $\Tp_{t} = \TH_{t}^{-1}\g_t$, where $\TH_{t}$ is the subsampled Hessian defined in~\eqref{eq:truehessian}. Let $\Tp_{t}'$ be the inexact solution produced by CG. It satisfies the stopping condition~\eqref{eq:cg_condition1}, i.e., 
	\begin{small}
	\begin{equation*}
	\big\| \TH_t^{1/2} \, ( \Tp_t' - \Tp_t ) \big\|_2
	\: \leq \:  \tfrac{\varepsilon_0}{2} \big\| \TH_t^{1/2} \,  \Tp_t  \big\|_2.
	\end{equation*}
\end{small}%
Thus SSN takes inexact Newton direction $\Tp_t'$ to update the parameter $\w_t$ instead of $\Tp_{t}$.

\begin{lemma}[Inexact solution for SSN]\label{lem:ssn_inexact}
	For given $\varepsilon \in (0, 1)$, assume $(1-\varepsilon)\H_t  \preceq \TH_t \preceq  (1+\varepsilon)\H_t$ holds. Let $\phi_t(\p)$ be defined in~\eqref{eq:snn_auxiliary}. Let $\Tp_t'$ be the inexact solution satisfying~\eqref{eq:cg_condition1}. Then it holds that
	\[   \min_{\p} \phi_t(\p) \le \phi_t(\Tp_t) \le (1-\alpha_0^2)\cdot \min_{\p} \phi_t(\p)  \]
	where $ \alpha_0 = \frac{\varepsilon_0+ \varepsilon}{1-\varepsilon-\varepsilon_0}$.
\end{lemma}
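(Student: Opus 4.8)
The plan is to reduce the claim to a single bound on the $\H_t$-weighted distance between the inexact direction $\Tp_t'$ and the true Newton direction $\p_t^*=\H_t^{-1}\g_t$. Exactly as in the proof of Lemma~\ref{lem:ssn_approximate_newton_direc}, the auxiliary function is a quadratic centered at $\p_t^*$, so since $\g_t=\H_t\p_t^*$ we have the identity
\begin{small}
\[
\phi_t(\Tp_t')-\phi_t(\p_t^*)=\big\|\H_t^{1/2}(\Tp_t'-\p_t^*)\big\|_2^2,
\qquad \phi_t(\p_t^*)=-\big\|\H_t^{1/2}\p_t^*\big\|_2^2 .
\]
\end{small}
Hence it suffices to prove $\|\H_t^{1/2}(\Tp_t'-\p_t^*)\|_2\le\alpha_0\,\|\H_t^{1/2}\p_t^*\|_2$; squaring and substituting then gives $\phi_t(\Tp_t')\le(1-\alpha_0^2)\phi_t(\p_t^*)=(1-\alpha_0^2)\min_{\p}\phi_t(\p)$, while the lower bound $\min_{\p}\phi_t(\p)\le\phi_t(\Tp_t')$ is immediate from the definition of the minimum.

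To bound the distance I would route it through the exact subsampled direction $\Tp_t=\TH_t^{-1}\g_t$ via the triangle inequality,
\begin{small}
\[
\big\|\H_t^{1/2}(\Tp_t'-\p_t^*)\big\|_2
\le \big\|\H_t^{1/2}(\Tp_t'-\Tp_t)\big\|_2
 + \big\|\H_t^{1/2}(\Tp_t-\p_t^*)\big\|_2 .
\]
\end{small}
The second term is precisely the quantity already controlled inside the proof of Lemma~\ref{lem:ssn_approximate_newton_direc}: under the spectral equivalence hypothesis it is at most $\tfrac{\varepsilon}{1-\varepsilon}\|\H_t^{1/2}\p_t^*\|_2$, so I can quote that intermediate inequality verbatim rather than redo it.

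The actual work is the first (inexactness) term, where the stopping condition~\eqref{eq:cg_condition1} lives in the $\TH_t$-norm and must be transferred to the $\H_t$-norm. From $(1-\varepsilon)\H_t\preceq\TH_t$ one gets $\|\H_t^{1/2}\v\|_2\le(1-\varepsilon)^{-1/2}\|\TH_t^{1/2}\v\|_2$ for every $\v$, so the term is at most $(1-\varepsilon)^{-1/2}\tfrac{\varepsilon_0}{2}\|\TH_t^{1/2}\Tp_t\|_2$. It then remains to re-express $\|\TH_t^{1/2}\Tp_t\|_2$ through $\|\H_t^{1/2}\p_t^*\|_2$: writing $\|\TH_t^{1/2}\Tp_t\|_2^2=\g_t^T\TH_t^{-1}\g_t$ and applying $\TH_t^{-1}\preceq(1-\varepsilon)^{-1}\H_t^{-1}$ bounds it by $(1-\varepsilon)^{-1/2}\|\H_t^{1/2}\p_t^*\|_2$. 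The main obstacle is exactly this bookkeeping of the multiplicative $(1\pm\varepsilon)$ factors across the two norms; collecting them, the inexactness term is at most $\tfrac{\varepsilon_0}{2(1-\varepsilon)}\|\H_t^{1/2}\p_t^*\|_2$, and adding the $\tfrac{\varepsilon}{1-\varepsilon}$ term yields a total coefficient $\tfrac{\varepsilon+\varepsilon_0/2}{1-\varepsilon}$. This is dominated by the claimed $\alpha_0=\tfrac{\varepsilon+\varepsilon_0}{1-\varepsilon-\varepsilon_0}$ (smaller numerator, larger denominator), so the required distance bound holds and the lemma follows.
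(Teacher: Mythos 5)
Your proposal is correct and follows essentially the same route as the paper's own proof: the same reduction to bounding $\|\H_t^{1/2}(\Tp_t'-\p_t^*)\|_2$, the same triangle inequality through the exact subsampled direction $\Tp_t$, the quotation of the $\tfrac{\varepsilon}{1-\varepsilon}$ bound from Lemma~\ref{lem:ssn_approximate_newton_direc}, and the same norm-transfer via $(1-\varepsilon)\H_t\preceq\TH_t$ together with $\g_t^T\TH_t^{-1}\g_t\le(1-\varepsilon)^{-1}\|\H_t^{1/2}\p_t^*\|_2^2$. The only difference is cosmetic: you keep the factor $\tfrac{\varepsilon_0}{2}$ from the stopping condition where the paper loosens it to $\varepsilon_0$, so your final coefficient $\tfrac{\varepsilon+\varepsilon_0/2}{1-\varepsilon}$ is slightly tighter but still dominated by the claimed $\alpha_0$.
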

\begin{proof}
	We leave out the subscript $t$ for simplicity. 
	
	It follows from the stopping condition~\eqref{eq:cg_condition1} that
	\begin{small}
		\begin{align*}
		\big\|&\H^{\frac{1}{2}}\left(\Tp' - \Tp\right)\big\|_2^2 = \left(\Tp' - \Tp\right)^{T}\H\left(\Tp' - \Tp\right)\\
		&\le\frac{1}{1-\varepsilon}\left(\Tp' - \Tp\right)^{T}\TH\left(\Tp' - \Tp\right) 
		\le \frac{\varepsilon_0^2}{1-\varepsilon} \Tp^{T}\TH \Tp.
		\end{align*}	
	\end{small}
	
	Since $\H^{\frac{1}{2}}\p^* =\H^{-\frac{1}{2}}\g$, it follows that
	\begin{small}
		\begin{align*}
		\Tp^{T}\TH \Tp & = (\TH^{-1}\g)^{T}\TH (\TH^{-1}\g)\\
		&= \g^{T}\TH^{-1}\g \le\frac{1}{1-\varepsilon} \g^{T}\H^{-1}\g \\
		&= \frac{1}{1-\varepsilon} \big\|\H^{\frac{1}{2}}\p^*\big\|_2^2.
		\end{align*}
	\end{small}
	
	Then it follows that
	\begin{small}
		\begin{align*}
		\big\| &\H^{\frac{1}{2}}\left(\Tp' - \p^* \right)\big\|_2 \le \big\| \H^{\frac{1}{2}}\left(\Tp' - \Tp \right)\big\|_2 + \big\| \H^{\frac{1}{2}}\left(\Tp - \p^* \right)\big\|_2\\
		& \le \frac{\varepsilon_0}{1-\varepsilon} \big\|\H^{\frac{1}{2}}\p^*\big\|_2 + \big\| \H^{\frac{1}{2}}\left(\Tp - \p^* \right)\big\|_2 \\
		& \le \left(  \frac{\varepsilon_0}{1-\varepsilon} + \frac{\varepsilon}{1-\varepsilon}   \right)\big\|\H^{\frac{1}{2}}\p^*\big\|_2,  \numberthis \label{eq:snn_ineaxt_norm_df}
		\end{align*}
	\end{small}
	where the last inequality is due to~\eqref{eq:snn_lem_upperbound1} (which results from Lemma \ref{lem:ssn_approximate_newton_direc}).
	
	By the definition of $\phi(\p)$ and~\eqref{eq:snn_ineaxt_norm_df}, it follows that
	\begin{small}
		\begin{align*}
		\phi(\Tp') - \phi(\p^*) &= \big\|\H^{\frac{1}{2}}(\Tp'-\p^*)\big\|_2^2 \\
		&\le \left(\frac{\varepsilon_0+\varepsilon}{1-\varepsilon} \right)^2 \big\|\H^{\frac{1}{2}}\p^*\big\|_2^2 \\
		& \le \alpha_0^2 \big\|\H^{\frac{1}{2}}\p^*\big\|_2^2 ,
		\end{align*}
	\end{small}
	where $ \alpha_0 = \frac{\varepsilon_0+ \varepsilon}{1-\varepsilon-\varepsilon_0}$.
\end{proof}

\subsection{Inexactly solving for SSPN}\label{ap:inexact:sspn}
When the inexact solution is used, the update rule of SSPN becomes:
\begin{small}
	\begin{align} \label{eq:inexact_sspn_subproblem}
	& \w_{t+1} 
	\: = \: \prox_r^{\TH_t} \big(\w_t - \Tp_t' \big) \nonumber \\
	&= \: \argmin_{\z}  \tfrac{1}{2} \big\|\z - (\w_t-\Tp_t') \big\|_{\TH_t}^2+ r (\z ),
	\end{align}
\end{small}%
where $\Tp_t'$ is the inexact solution. CG produces $\Tp_t'$ via inexactly solving the linear system $\TH_t\p=\g_t$ with stopping condition~\eqref{eq:cg_condition1}, which is equivalent to
\begin{small}
	\begin{equation*}
	\big\|   \Tp_t' - \Tp_t \big\|_{\TH_t}
	\: \leq \:  \tfrac{\varepsilon_0}{2}  \big\| \Tp_t \big\|_{\TH_t}.
	\end{equation*}
\end{small}%

\begin{lemma}[Inexact solution for GIANT]\label{lem:sspn_inexact}
	Assume $(1-\varepsilon)\H_t \preceq \TH_t \preceq (1+\varepsilon) \H_t$ and Assumption ~\ref{am:lipschitz} (i.e., the Hessian Lipschitz continuity) hold. Assume $\varepsilon \le \frac{1}{2}$. Let $\De_{t} = \w_t -\w^*$. Let $\Tp_t'$ be an approximation to the SSPN direction $\Tp_t$ and $\w_{t+1}' = \w_{t} - \Tp_t'$, we have
	\begin{small}
		\[ \big\|\w_{t+1}'-\w^*\big\|_{\H_t} \le \frac{1}{1-\varepsilon_1} \left(  \varepsilon_1 \big\|\De_{t}\big\|_{\H_{t}} + \frac{L}{2\sqrt{\sigma_{\min}(\H_{t})} } \big\|\De_{t}\big\|_2^2 \right).  \]
	\end{small}
where $\varepsilon_1 =2 \varepsilon + \varepsilon_0$.
\end{lemma}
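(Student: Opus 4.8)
The plan is to bound the plain inexact step directly in the $\H_t$-geometry, reading the update literally as $\w_{t+1}' = \w_t - \Tp_t'$ so that $\De_{t+1}' = \w_{t+1}'-\w^* = \De_t - \Tp_t'$. Because the step carries no proximal operator, the relevant optimality condition is $\g^* = \nabla F(\w^*) = \0$ (the $r=0$ / fixed-point specialization), and this is the only place where the absence of the nonsmooth term enters. I would then introduce two reference directions, the exact subsampled direction $\Tp_t = \TH_t^{-1}\g_t$ and the true Newton direction $\p_t^* = \H_t^{-1}\g_t$, and split
\[
\De_{t+1}' \: = \: \underbrace{(\De_t - \p_t^*)}_{\text{linearization}} \, + \, \underbrace{(\p_t^* - \Tp_t)}_{\text{subsampling}} \, + \, \underbrace{(\Tp_t - \Tp_t')}_{\text{inexactness}},
\]
and apply the triangle inequality for $\|\cdot\|_{\H_t}$. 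I deliberately avoid the auxiliary-function route (Lemma~\ref{lem:ssn_approximate_newton_direc} combined with Lemma~\ref{lem:snn_appro_newton_step}), since that produces a squared/quadratic recursion rather than the linear $\H_t$-norm form demanded here, which instead mirrors Lemma~\ref{lem:sspn_error}.

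First I would control the linearization error. Writing $\g_t = \g_t - \g^* = \int_0^1 \H(\w^*+\tau\De_t)\,d\tau\,\De_t$ and $\De_t - \p_t^* = \H_t^{-1}\big(\H_t - \int_0^1\H\,d\tau\big)\De_t$, the $L$-Lipschitz estimate of Assumption~\ref{am:lipschitz} (exactly the bound on $\Z_t$ in the proof of Lemma~\ref{lem:sspn_error}) gives $\|\De_t - \p_t^*\|_{\H_t} \le \frac{L}{2\sqrt{\sigma_{\min}(\H_t)}}\|\De_t\|_2^2$, with no spectral factor. For the subsampling error I would reuse inequality~\eqref{eq:snn_lem_upperbound1}, which already states $\|\p_t^*-\Tp_t\|_{\H_t} = \|\H_t^{1/2}(\Tp_t-\p_t^*)\|_2 \le \frac{\varepsilon}{1-\varepsilon}\|\p_t^*\|_{\H_t}$. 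For the inexactness error I would convert the stopping condition~\eqref{eq:cg_condition1}, i.e.\ $\|\Tp_t-\Tp_t'\|_{\TH_t} \le \frac{\varepsilon_0}{2}\|\Tp_t\|_{\TH_t}$, from the $\TH_t$- to the $\H_t$-geometry using $(1-\varepsilon)\H_t \preceq \TH_t$: this yields $\|\Tp_t-\Tp_t'\|_{\H_t} \le \frac{1}{\sqrt{1-\varepsilon}}\|\Tp_t-\Tp_t'\|_{\TH_t}$ together with $\|\Tp_t\|_{\TH_t}^2 = \g_t^T\TH_t^{-1}\g_t \le \frac{1}{1-\varepsilon}\|\p_t^*\|_{\H_t}^2$, hence $\|\Tp_t-\Tp_t'\|_{\H_t} \le \frac{\varepsilon_0}{2(1-\varepsilon)}\|\p_t^*\|_{\H_t}$.

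Next I would collapse the two $\|\p_t^*\|_{\H_t}$ contributions with the same Lipschitz device once more: since $\|\p_t^*\|_{\H_t} = \|\H_t^{-1/2}\g_t\|_2$, inserting $\H_t^{-1/2}\big(\int_0^1\H\,d\tau\big)\De_t$ and splitting off $\H_t\De_t$ gives $\|\p_t^*\|_{\H_t} \le \|\De_t\|_{\H_t} + \frac{L}{2\sqrt{\sigma_{\min}(\H_t)}}\|\De_t\|_2^2$. Summing the three bounds produces coefficient $\frac{\varepsilon+\varepsilon_0/2}{1-\varepsilon}$ on $\|\De_t\|_{\H_t}$ and $\frac{1+\varepsilon_0/2}{1-\varepsilon}$ on $\frac{L}{2\sqrt{\sigma_{\min}(\H_t)}}\|\De_t\|_2^2$. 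The final step is to verify that, with $\varepsilon_1 = 2\varepsilon+\varepsilon_0$ and $1-\varepsilon_1 \le 1-\varepsilon$, these are dominated by $\frac{\varepsilon_1}{1-\varepsilon_1}$ and $\frac{1}{1-\varepsilon_1}$ respectively; because $2\varepsilon+\varepsilon_0 = 2(\varepsilon+\varepsilon_0/2)$, the doubling built into $\varepsilon_1$ and the smaller denominator absorb the extra factors, yielding the stated clean bound.

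The main obstacle I anticipate is the norm-conversion bookkeeping: every term must be transported between the $\TH_t$- and $\H_t$-induced norms (and between $\TH_t^{-1}$ and $\H_t^{-1}$) using only the spectral sandwich, and the three separate multiplicative slacks must be shown to consolidate into the single factor $\frac{1}{1-\varepsilon_1}$ without spawning a spurious cubic term in $\|\De_t\|_2$. The delicate point is the clean control of $\|\p_t^*\|_{\H_t}$ by $\|\De_t\|_{\H_t}$ up to the quadratic remainder; it is exactly the gap between $\varepsilon$ and $2\varepsilon+\varepsilon_0$ that makes the elementary inequality $(1+\varepsilon_0/2)(1-2\varepsilon-\varepsilon_0) \le 1-\varepsilon$ hold and lets all constants telescope into the desired form.
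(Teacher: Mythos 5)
There is a genuine gap, and you flagged it yourself: your proof sets $\g^\star = \nabla F(\w^\star) = \0$ and identifies $\Tp_t = \TH_t^{-1}\g_t$, i.e.\ you prove the $r=0$ specialization. But this lemma lives in the SSPN appendix and is the engine behind Corollary~\ref{cor:sspn}, where $\w^\star$ minimizes $F + r$ with $r$ non-smooth; there $\nabla F(\w^\star) \neq \0$ in general, and the exact SSPN direction $\Tp_t$ is defined through the non-smooth subproblem~\eqref{eq:sspn_sub}, equivalently $\w_t - \Tp_t = \prox_r^{\TH_t}\big(\w_t - \TH_t^{-1}\g_t\big)$, not $\Tp_t = \TH_t^{-1}\g_t$. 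Under your literal prox-free reading with $r \neq 0$ the statement is simply false: take $\w_t = \w^\star$, so the right-hand side vanishes, yet $\TH_t^{-1}\g_t = \TH_t^{-1}\nabla F(\w^\star) \neq \0$ and the stopping condition~\eqref{eq:cg_condition1} does not force $\Tp_t' = \0$. Concretely, two of your three pieces break once $r \neq 0$: the linearization bound needs $\g_t = \g_t - \g^\star$ (you cannot drop $\g^\star$), and both the identity $\|\Tp_t\|_{\TH_t}^2 = \g_t^T \TH_t^{-1}\g_t$ and the reuse of~\eqref{eq:snn_lem_upperbound1} presuppose a linear-system representation of $\Tp_t$ that the prox-Newton direction does not have.

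The paper's proof avoids exactly these traps. It never represents $\Tp_t$ via $\g_t$: it writes $\big\|\w_{t+1}'-\w^\star\big\|_{\TH_t} \le \big\|\De_{t+1}\big\|_{\TH_t} + \big\|\Tp_t - \Tp_t'\big\|_{\TH_t}$ (the inexactness transported by nonexpansiveness of the scaled prox, Lemma~\ref{lem:sspn_nonexpansiveness}), controls $\big\|\Tp_t\big\|_{\TH_t} = \big\|\w_t - \w_{t+1}\big\|_{\TH_t} \le \big\|\De_t\big\|_{\TH_t} + \big\|\De_{t+1}\big\|_{\TH_t}$ purely through the iterates, converts norms with the spectral sandwich, and then invokes the exact SSPN recursion (Lemma~\ref{lem:sspn_error}) as a black box — it is inside that lemma, via the fixed-point property $\w^\star = \prox_r^{\H}(\w^\star - \H^{-1}\g^\star)$ (Lemma~\ref{lem:sspn_fixpoint}), that the non-vanishing $\g^\star$ is handled. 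To be fair to your write-up: restricted to $r=0$ your norm bookkeeping is correct — the coefficients $\tfrac{\varepsilon + \varepsilon_0/2}{1-\varepsilon}$ and $\tfrac{1+\varepsilon_0/2}{1-\varepsilon}$ do consolidate into $\tfrac{\varepsilon_1}{1-\varepsilon_1}$ and $\tfrac{1}{1-\varepsilon_1}$ with $\varepsilon_1 = 2\varepsilon + \varepsilon_0$, and your elementary inequality checks out — but that argument is essentially Lemma~\ref{lem:ssn_inexact} in $\H_t$-norm form, i.e.\ a proof of the smooth SSN Corollary~\ref{cor:ssn_inexact}, not of this lemma. To repair it, replace the decomposition through $\p_t^\star = \H_t^{-1}\g_t$ by the iterate-level triangle inequality plus Lemmas~\ref{lem:sspn_nonexpansiveness}, \ref{lem:sspn_fixpoint}, and \ref{lem:sspn_error}.
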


\begin{proof}
By the updating rules $\w_{t+1} = \w_{t} - \Tp_t$ and $\w_{t+1}' = \w_{t} - \Tp_t'$, we obtain
\begin{small}
\begin{align*}
& \big\| \w_{t+1}' - \w^\star \big\|_{\TH_t} \\
& \leq \:  \big\| \w_{t+1} - \w^\star \big\|_{\TH_t} + \big\| \w_{t+1}' - \w_{t+1} \big\|_{\TH_t}  \\
& = \: \big\| \w_{t+1} - \w^\star \big\|_{\TH_t} + \big\| \Tp_{t}' - \tilde{\p}_{t} \big\|_{\TH_t} .
\end{align*}	
\end{small}

It follows from~\eqref{eq:cg_condition1} that 
\begin{small}
	\begin{equation*}
	\big\| \Tp_{t}' - \Tp_{t} \big\|_{\TH_t} 
	\: \leq \: \tfrac{\epsilon_0}{2}  \big\| \tilde{\p}_{t} \big\|_{\tilde{\H}_t} ,
	\end{equation*}
\end{small}
and thus
\begin{small}
	\begin{align*}
	& \big\| \w_{t+1}' - \w^\star \big\|_{\TH_t} \\
	& \leq \: \big\| \w_{t+1} - \w^\star \big\|_{\TH_t} + \tfrac{\varepsilon_0}{2}  \big\| \tilde{\p}_{t} \big\|_{\TH_t}  \\
	& = \: \big\| \w_{t+1} - \w^\star \big\|_{\TH_t} + \tfrac{\varepsilon_0}{2}  \big\| \w_t - \w_{t+1}  \big\|_{\TH_t}  \\
	& = \: \big\| \w_{t+1} - \w^\star \big\|_{\TH_t} + \tfrac{\varepsilon_0}{2}  \big\|( \w_t - \w^\star ) - (\w_{t+1} - \w^\star ) \big\|_{\TH_t}  \\
	& \leq \: (1 + \tfrac{\varepsilon_0}{2}) \big\| \w_{t+1} - \w^\star \big\|_{\TH_t} + \tfrac{\varepsilon_0}{2}  \big\| \w_t - \w^\star \big\|_{\TH_t} \\
	& \leq \: (1 + \tfrac{\varepsilon_0}{2}) \big\| \Delta_{t+1} \big\|_{\TH_t} + \tfrac{\varepsilon_0}{2}  \big\| \Delta_t \big\|_{\TH_t}  .
	\end{align*}
\end{small}
where $\Delta_{t+1} \triangleq \w_t - \w^\star $.

Since $ (1-\varepsilon)\H_t \preceq \TH_t \preceq (1+\varepsilon) \H_t$, it follows that
\begin{small}
	\[ \big\| \w_{t+1}' - \w^\star \big\|_{\H_t}  \le \sqrt{\frac{1+\varepsilon}{1-\varepsilon}} \left[  (1 + \tfrac{\varepsilon_0}{2}) \big\| \Delta_{t+1} \big\|_{\H_t} + \tfrac{\varepsilon_0}{2}  \big\| \Delta_t \big\|_{\H_t}   \right].  \]
\end{small}

It follows from the bound on $\Delta_{t+1}$ (Lemma \ref{lem:sspn_error}) that 
\begin{small}
	\[ \|\De_{t+1}\|_{\H_t} \le \frac{1}{1-\varepsilon} \left(  \varepsilon \|\De_{t}\|_{\H_{t}} + \frac{L}{2\sqrt{\sigma_{\min}(\H_{t})} } \|\De_{t}\|_2^2 \right).  \]
\end{small}

Thus we can obtain
\begin{small}
	\[ \big\| \w_{t+1}' - \w^\star \|_{\H_t} \le   A_{\varepsilon, \varepsilon_0}  \big\| \Delta_{t} \big\|_{\H_t} +B_{\varepsilon, \varepsilon_0}  \frac{    L}{2\sqrt{\sigma_{\min}(\H_{t})} }  \big\| \Delta_{t} \big\|_{\H_t}^2 ,  \]
\end{small}
where $A_{\varepsilon, \varepsilon_0}$ and $B_{\varepsilon, \varepsilon_0}$ are some function of $\varepsilon$ and $ \varepsilon_0$ satisfying
\begin{small}
	\begin{align*}
	A_{\varepsilon, \varepsilon_0} 
	&= \sqrt{\frac{1+\varepsilon}{1-\varepsilon}} \left[  (1 + \tfrac{\varepsilon_0}{2}) \frac{\varepsilon}{1-\varepsilon}  + \tfrac{\varepsilon_0}{2}  \right] \\
	B_{\varepsilon, \varepsilon_0} 
	&= \sqrt{\frac{1+\varepsilon}{1-\varepsilon}}  \frac{1 + \tfrac{\varepsilon_0}{2}}{1-\varepsilon} 
	\end{align*}
\end{small}

Since $\varepsilon \le \frac{1}{2}$, it follows that
\begin{small}
	\[  A_{\varepsilon, \varepsilon_0}  \le  \frac{2\varepsilon+\varepsilon_0}{1-\left(2\varepsilon+\varepsilon_0\right)}  \quad  \text{and}  \quad B_{\varepsilon, \varepsilon_0} \le \frac{1}{1-\left(2\varepsilon + \varepsilon_0\right)}. \]
\end{small}
Then the lemma follows.
\end{proof}

\begin{proof}[Proof of Corollary \ref{cor:sspn}]
Similar to Lemma \ref{lem:sspn_error}, Lemma \ref{lem:ssn_inexact} show that when the inexact SSPN direction $\Tp_t'$ is used, $\w_{t+1}'=\w_t-\Tp_t'$ still quadratic-linearly converges. The only difference in their conclusions is that $\varepsilon$ in Lemma \ref{lem:sspn_error} now is changed into $\varepsilon_1 = 2\varepsilon + \varepsilon_0$ in 
Lemma \ref{lem:ssn_inexact}. In the proof of Theorem \ref{thm:sspn_global} and \ref{thm:sspn_local}, we can replace Lemma \ref{lem:sspn_error} with Lemma \ref{lem:ssn_inexact}, then results still hold for inexactly solving for SSPN, except that the value of $\varepsilon$ is changed into $\varepsilon_1 = 2\varepsilon + \varepsilon_0$. 

Since we can always determine what $\varepsilon$ to choose in advance, we can use a $\frac{\varepsilon}{2}$ spectral approximation of $\H_t$ (which will slightly change the value of $s$ but will not change its order), thus $\varepsilon_1$ will become $\varepsilon + \varepsilon_0$ as Corollary \ref{cor:sspn} states.
Therefore we prove Corollary \ref{cor:sspn}.
\end{proof}

\end{document}